\def\eqref#1{equation~\ref{#1}}
\def\1{\bm{1}}
\DeclareMathAlphabet{\mathsfit}{\encodingdefault}{\sfdefault}{m}{sl}
\SetMathAlphabet{\mathsfit}{bold}{\encodingdefault}{\sfdefault}{bx}{n}
\newcommand{\E}{\mathbb{E}}
\newcommand{\R}{\mathbb{R}}
\DeclareMathOperator{\sign}{sign}
\newcommand{\Ex}{\mathop{{\bf E}\/}}
\renewcommand{\Pr}{\operatorname{{\bf Pr}}}
\newcommand{\Ber}{\operatorname{Ber}}
\newcommand{\poly}{\mathrm{poly}}
\newtheorem{theorem}{Theorem}
 \newtheorem{lemma}{Lemma}[section]
\newtheorem{claim}[lemma]{Claim}
\newtheorem{proposition}[theorem]{Proposition}
\newtheorem*{theorem*}{Theorem}
\newtheorem*{proposition*}{Proposition}
\begin{document}

\title{On Classification Thresholds for Graph Attention with Edge Features}

\author{
        \name Kimon~Fountoulakis
        \AND
        \name Dake~He
        \AND
	    \name Silvio~Lattanzi
	    \AND
	    \name Bryan~Perozzi
	    \AND
	    \name Anton~Tsitsulin
	    \AND
	    \name Shenghao~Yang
}

\author{
	Kimon~Fountoulakis%
        \thanks{School of Computer Science, University of Waterloo, and Google Waterloo, Waterloo, ON, Canada. E-mail: kfountou@uwaterloo.ca.
        }
        \and
        Dake~He%
        \thanks{Google Waterloo, Waterloo, ON, Canada. E-mail: dkhe@google.com.
        } 
        \and
	    Silvio~Lattanzi%
        \thanks{Google Research, Zurich, Switzerland. E-mail: silviol@google.com.
        }
        \and
	    Bryan~Perozzi%
        \thanks{Google Research, New York, USA. E-mail: bperozzi@acm.org.
        }
        \and
	    Anton~Tsitsulin%
        \thanks{Google Research, New York, USA. E-mail: tsitsulin@google.com.
        }
        \and
	    Shenghao~Yang%
        \thanks{School of Computer Science, University of Waterloo, Waterloo, ON, Canada. E-mail: s286yang@uwaterloo.ca.
        }
}

\maketitle

\begin{abstract}

The recent years we have seen the rise of graph neural networks for prediction tasks on graphs. One of the dominant architectures is graph attention due to its ability to make predictions using weighted edge features and not only node features. In this paper we analyze, theoretically and empirically, graph attention networks and their ability of correctly labelling nodes in a classic classification task. More specifically, we study the performance of graph attention on the classic contextual stochastic block model (CSBM). In CSBM the nodes and edge
features are obtained from a mixture of Gaussians and the edges from a stochastic block model. We consider a general graph attention mechanism that takes random edge features as input to determine the attention coefficients. We study two cases, in the first one, when the edge features are noisy, we prove that the majority of the attention coefficients are up to a constant uniform. This allows us to prove that graph attention with edge features is not better than simple graph convolution for achieving perfect node classification. Second, we prove that when the edge features are clean graph attention can distinguish intra- from inter-edges and this makes graph attention better than classic graph convolution. 


\end{abstract}
\section{Introduction}\label{sec:intro}

Learning from multi-modal datasets is currently one of the most prominent topics in artificial intelligence. The reason behind this trend is that many applications, such as recommendation systems, fraud detection and vision, require some combination of different types of data. 
In this paper we are interested in multi-modal data which combine a graph, i.e., a set of nodes and edges, with attributes for each node and edge. The attributes of the nodes/edges capture information about the nodes/edges themselves, while the edges among the nodes capture relations among the nodes. Capturing relations is particularly helpful for applications where we are trying to make predictions for nodes given neighborhood data. 

One of the most prominent ways of handling multi-modal data for downstream tasks such as node classification are graph neural networks~\cite{GMS05,scarselli:gnn,BZSL14,DMAGHAA15,HBL15,AT16,DBV16,HYL17,kipf:gcn}. Graph neural network models can mix hand-crafted or automatically learned attributes about the nodes while taking into account relational information among the nodes. Their output vector representation contains both global and local information for the nodes. This contrasts with neural networks that only learn from the attributes of entities. 

\subsection{Motivation and goals}
Graph neural networks have found a plethora of uses in chemistry~\cite{gilmer:quantum, scarselli:gnn}, biology, and in various industrial applications. Some representative examples include fighting spam and abusive behaviors, providing personalization for the users~\cite{YHCEHL18}, and predicting states of physical objects~\cite{battaglia:graphnets}. Given wide applicability and exploding popularity of GNNs, theoretically understanding in which regimes they work best is of paramount importance.

One of the most popular graph neural network architectures is the Graph Attention Network (GAT). Graph attention~\cite{Velickovic2018GraphAN} is usually defined as
averaging the features of a node with the features of its neighbors by appropriately weighting the edges of a graph before spatially convolving the node features. It is generally expected by practitioners that GAT is able to downweight unimportant edges and set a large weight for important edges, depending on the downstream task. In this paper we analyze the graph attention mechanism. 

We focus on node classification, which is one of the most popular tasks for graph learning. We perform our analysis using the contextual stochastic block model (CSBM)~\cite{BVR17,DSM18}. The CSBM is a coupling of the stochastic block model (SBM) with a Gaussian mixture model. We focus on two classes where the answer to the above question is sufficiently precise to understand the performance of graph attention and build useful intuition about it.

We study perfect classification as it is one of the three questions that has been asked for the community detection for SBM without node features~\cite{Abbe2018}. We leave results on other types of classification guarantees for future work. Our goal is study the performance of graph attention on a well-studied synthetic data model. We see our paper as a small step in the direction of building theoretically justified intuition about graph attention and better attention mechanisms.

\subsection{Contributions} 
We study the performance of graph attention with edge and node features for the CSBM. The edge features follow a Gaussian mixture model with two means, one for intra-edges and one for inter-edges. We call the edge features clean when the distance between the means is larger than the standard deviation. We call the edge features noisy when the distance between the means is smaller than the standard deviation. We split our results into two parts. In the first part we consider the case where the edge features that are passed to the attention mechanism are clean. In the second part we consider the case where the edge features are noisy. We describe our contributions below.

\begin{enumerate}
    \item \textit{Separation of intra and inter attention coefficients for clean edge features}. There exists an attention architecture which can distinguish intra- from inter-edges. This attention architecture allows us to prove that the means of the convolved data do not move closer, while achieving large variance reduction. It also allows us to prove that the threshold of perfect node classification for graph attention is better than that of graph convolution.
    \item \textit{Perfect node classification for clean edge features}. Let $\sigma$ be the standard deviation of the node features, $n$ the number of nodes and $p,q$ the intra- and inter-edge probabilities. If the distance between the means of the node features is $\omega\left(\sigma \sqrt{\frac{\log{n}}{n \max(p,q)}}\right)$. Then with high probability graph attention classifies the data perfectly.
    \item \textit{Failure of perfect node classification for clean edge features}. If the distance between the means of the node features is small, that is smaller than $K \sigma  \sqrt{\frac{\log{n}}{n\max(p,q)}\left(1-\max(p,q)\right)}$ for some constant $K$, then graph attention can't classify the nodes perfectly with probability at least $1-1/n^{1-\max(p,q)}$.
    \item \textit{Uniform intra and inter attention coefficients for noisy edge features.} We prove that for $n-o(n)$ nodes at least $90\%$ of their attention coefficients are up to a constant uniform. This means that a lot of attention coefficients are up to a constant the same as those of graph convolution. This property allows us to show that in this regime graph attention is not better than graph convolution.
    \item \textit{Perfect node classification for noisy edge features.} If the distance is $ \omega\left(\sigma\frac{p+q}{|p-q|}\sqrt{\frac{\log n}{n\max(p,q)}}\right)$, then with high probability graph attention classifies all nodes correctly. 
    \item \textit{Failure of perfect node classification for noisy edge features.} If the distance is less than $K\sigma\frac{p+q}{|p-q|}\sqrt{\frac{\log n}{n\max(p,q)}(1-\max(p,q))}$ for some constant $K$, then graph attention can't classify the nodes perfectly with probability at least $1-1/n^{1-\max(p,q)}$.
\end{enumerate}

Finally we complement our theoretical results with an empirical analysis confirming our main findings.

\section{Relevant Work}\label{sec:prev_work}

There have been numerous papers proposing new graph neural network architectures that is impossible to acknowledge all works in one paper. We leave this work for relevant books and survey papers on graph neural networks, examples include~\cite{HamilBook,WPCLZY21}. From a theoretical perspective, a few authors have analyzed graph neural networks using traditional machine learning frameworks or from a signal processing perspective~\cite{CLB19,Chien:2020:joint,Zhu:2020:generalizing,XHLJ19,GJJ20,A2020,ALoukas2020}. For a recent survey in this direction see the recent survey paper~\cite{J22} that focuses on three main categories, representation, generalization and extrapolation. In our paper we analyze graph attention from a statistical perspective that allows us to formally understand claims about graph attention. 

In the past, researchers have put significant effort in understanding community detection for the SBM~\cite{Abbe2018}. Usually the results for community detection are divided in three parameters regimes for the SBM. The first type of guarantee that was investigated was that of exact recovery or perfect classification. 
We are also interested in perfect node classification, but our work is on graph attention for the CSBM.
The analysis of exact recovery in SBM and perfect classification in CSBM for graph attention are significantly different. In fact, our focus is not on designing the best algorithm for the exact classification task but it is on understanding the advantages and limitation of Graph Attention over other standard architectures. As a consequence, the model we analyze is a non-linear function of the input data since we have to deal with the coupling of highly nonlinear attention coefficients, the node features and the graph structure. 

A closely related work is~\cite{BFJ2021}, which studies the performance of graph convolution~\cite{kipf:gcn} on CSBM as a semi-supervised learning problem. In our paper we work with graph attention and we compare it to graph convolution. Another relevant work is~\cite{fountoulakis2022graph}. In this paper the authors also study the performance of graph attention for CSBM. However, in~\cite{fountoulakis2022graph} edge features are not used and there is no result provided about when graph attention fails to achieve perfect node classification, only a conjecture is provided. In this paper we provide a complete treatment regarding the question of perfect classification when edge features are given. Another paper that studies performance of graph attention on CSBM is~\cite{anonymous2023results}. In this paper the attention architecture is constructed using ground-truth labels and it is fixed. The authors also consider an attention architecture which is constructed using an eigenvector of the adjacency matrix when the community structure in the graph can be exactly recovered. Thus in~\cite{anonymous2023results} only a rather optimistic scenario is studied, that is, when we are given a good attention architecture. In our paper, we consider the case where additional edge features are given that follow a Gaussian mixture model and we analyze the performance of graph attention when these features are clean or noisy. We provide complete analysis about the attention coefficients, instead of assuming them, and we show how they affect perfect node classification when the edge features are clean or noisy.

Within the context of random graphs another relevant work is~\cite{KBV21,MLLK22}. In the former paper, the authors study universality of graph neural networks on random graphs. In the latter paper the authors go a step further and prove that the generalization error of graph neural networks between the training set and the true distribution is small, and the error decreases with respect to the number of training samples and the average number of nodes in the graphs. In our paper we are interested in understanding the parameters regimes of CSBM such that graph attention classifies or fails to classify the data perfectly. This allows us to compare the performance of graph attention to other basic approaches such as a graph convolution. 

Other papers that have studied the performance of graph attention are~\cite{BAY21,KTA19,hou2019measuring}. In~\cite{BAY21} the authors show that graph attention fails due to a global ranking of nodes that is generated by the attention mechanism in~\cite{Velickovic2018GraphAN}. They propose a deeper attention mechanism as a solution. Our analysis a deeper attention mechanism is not required since we consider independently distributed edge features and the issue mentioned in~\cite{BAY21} is avoided.

The work in~\cite{KTA19} is an empirical study of the ability of graph attention to generalize on larger, complex, and noisy graphs. Finally, in~\cite{hou2019measuring} the authors propose a different metric to generate the attention coefficients and show empirically that it has an advantage over the original GAT architecture. In our paper we consider the original and most popular attention mechanism~\cite{Velickovic2018GraphAN} and its deeper variation as well.
\section{Preliminaries}\label{sec:notation}

In this section we describe the data model that we use and the graph attention architecture.

\subsection{The contextual stochastic block model with random edge features}

In this section we describe the CSBM~\cite{DSM18}, which is a simple coupling of a stochastic block model with a Gaussian mixture model. Let $(\epsilon_{k})_{k\in[n]}$ be i.i.d \ Bernoulli random variables. These variables define the class membership of the nodes. In particular, consider a stochastic block model consisting of two classes
$C_{0}=\{i\in[n]:\epsilon_{i}=0\}$ and $C_{1}=C_{0}^c$ with inter-class edge probability $q$ and intra-class edge probability $p$ with no self-loops\footnote{In practice, self-loops are often added to the graph and the following adjacency matrix $\tilde{A}$ is used instead: $\tilde{A}=(\tilde{a}_{ij})$ is the matrix $A + I$ and $D$ to be the diagonal degree matrix for $\tilde{A}$, so that $D_{ii} = \sum_{j\in[n]}\tilde{a}_{ij}$ for all $i\in[n]$. Our results can be extended to this case with minor changes.}.
In particular, given $(\epsilon_{k})$ the adjacency matrix $A=(a_{ij})$ follows a Bernoulli distribution where $a_{ij}\sim \Ber(p)$ if $i,j$ are in the same class  and $a_{ij}\sim \Ber(q)$ if they are in distinct classes. This completes the distributions for the class membership and the graph. Let us now describe the distributions for the node and edges features.

Consider the node features $x_{i}$ to be independent $d$-dimensional Gaussian random vectors with
$x_{i}\sim N(\mu,\sigma I)$ if $i\in C_{0}$ and
$x_{i}\sim N(-\mu,\sigma I)$ if $i\in C_{1}$.
Here $\mu\in\R^{d}$ is the mean, $\sigma\ge 0$ is the standard deviation and $I$ is the identity matrix. 
Let $\mathcal{E}$ be the set of edges which consists of pairs $(i,j)$ such that $a_{ij}=1$. Consider $E\in\R^{|\mathcal{E}|\times h}$ to be the edge feature matrix such that such that each row $E_{(i,j)}$ is an independent $h$-dimensional Gaussian random vector with
$E_{(i,j)}\sim N(\nu,\zeta I)$ if $(i,j)$ is an intra-edge, i.e., $i,j\in C_{0}$ or $i,j\in C_{1}$, and
$E_{(i,j)}\sim N(-\nu,\zeta I)$ if $(i,j)$ is an inter-edge, i.e., $i\in C_{0}, j\in C_1$ or $i\in C_{1}, j\in C_0$.
Here $\nu\in\R^{d}$ is the mean, $\zeta\ge 0$ is the standard deviation.

Denote by $CSBM(n,p,q,\mu,\nu,\sigma, \zeta)$ the coupling of a stochastic block model with the Gaussian mixture models for the nodes and the edges with means $\mu,\nu$ and standard deviation $\sigma,\zeta$, respectively, as described above. We denote a sample by $(A,X, E)\sim CSBM(n,p,q,\mu,\nu,\sigma, \zeta)$.

\subsection{Assumptions}
We now we state two standard assumptions on the CSBM that we will use in our analysis. The first assumption is $p,q\ge \Omega (\log^2 n / n )$, and it guarantees that the expected degrees of the graph are $\Omega(\log^2 n)$, they also guarantee degree concentration. The second assumption is that the standard deviation $\zeta$ of the edge features is constant. This assumption is without loss of generality since all that really matters is the ratio of the distance between the means of the edges features over the standard deviation. As long as we allow the distance between the means to grow while $\zeta$ is fixed then the results are not restricted, while the analysis is simplified.

\subsection{Graph attention}

The graph attention convolution is defined as $\hat x_i=\sum_{j\in [n]} \tilde{A}_{ij}\gamma_{ij} x_j$ $\forall i\in[n]$, 
where $\gamma_{ij}$ is the attention coefficient of the edge $(i,j)$. We focus on a single layer graph attention since this architecture is enough for the simple CSBM that we consider. 

There are many ways to set the attention coefficients $\gamma$. We discuss the setting in our paper and how it is related to the original definition in~\cite{Velickovic2018GraphAN} and newer ones~\cite{BAY21}. We define the attention function $\Psi(E_{(i,j)})$ which takes as input the features of the edge $(i,j)$ $E_{(i,j)}$ and outputs a scalar value. The function $\Psi$ is often parameterized by learnable variables, and it is used to define the attention coefficients
\begin{align*}
    \gamma_{ij}:= \frac{\exp(\Psi(E_{(i,j)}))}{\sum_{\ell\in N_i}\exp(\Psi(E_{(i,\ell)}))}\label{def:attention_coeff},
\end{align*}
where $N_i$ is the set of neighbors of node $i$. 

In the original paper~\cite{Velickovic2018GraphAN} the function $\Psi$ is a linear function of the two dimensional vector $[w^T x_i,w^T x_j]$ passed through LeakyRelu, where the coefficients of the linear function are learnable parameters, $w$ are learnable parameters as well and are shared with the parameters outside attention. In this paper we consider independent edge features as input to the attention mechanism. Although in the original paper~\cite{Velickovic2018GraphAN} edge features are mentioned as an input to the model this seems an important departure from what was extensively studied in~\cite{Velickovic2018GraphAN}. However, using edge features captures the effect of dominating noise in graph attention, which is what we are interested in this paper for understanding performance of graph attention. 
Finally, we consider $\Psi$ functions that are a composition of a Lipschitz and a linear function. This is enough to prove that graph attention is able to distinguish intra- from inter-edges and consequently leads to better performance than graph convolution when the edge features are clean. Given that the edge features in our data model are independent from node features, this setting avoids the issues discussed in~\cite{BAY21}.



\section{Results}\label{sec:results}

In this section we describe our results. We split the section into two parts. In the first part we describe performance of graph attention in case the edge features are clean. In the second part we describe performance of graph attention in case the edge features are noisy.

\subsection{Clean edge features}

Consider the case of clean edge features, this means that $\|\nu\|\ge \omega(\zeta \sqrt{\log (|\mathcal{E}|)})$. We call this regime clean because in this case there is not much overlap between the two Gaussians in the Gaussian mixture model of the CSBM. In the following theorem we prove that there exists an attention architecture such that it is able to distinguish intra- from inter-edges. The reason that such an attention architecture is useful is because it allows us to prove in~\cref{thm:clean} that the means of the convolved node features do not move closer, while achieving large variance reduction. The importance of such an attention mechanism is also verified by the fact that using it the threshold of perfect node classification in~\cref{thm:clean} is better than that of graph convolution. We comment on this later on in this section.

\begin{proposition}\label{prop:gammas}
Let $(A,X, E)\sim CSBM$, and assume that $\|\nu\|\ge \omega(\zeta \sqrt{\log (|\mathcal{E}|)})$. If $p>q$, we have that there exists a function $\Psi$ that provides the following attention coefficients
    \begin{equation*}
    \gamma_{ij}=
        \begin{cases}
            \frac{2}{n p}(1\pm o_n(1)) & i,j\in C_0 \mbox{ or } i,j\in C_1\\
            o\left(\frac{1}{n(p+q)}\right) & \mbox{otherwise.}\\
        \end{cases}
    \end{equation*}
    with probability $1-o_{n}(1)$. If $q>p$, we have that
    \begin{equation*}
    \gamma_{ij}=
        \begin{cases}
            o\left(\frac{1}{n(p+q)}\right) & i,j\in C_0 \mbox{ or } i,j\in C_1\\
            \frac{2}{n q}(1\pm o_n(1)) & \mbox{otherwise.}\\
        \end{cases}
    \end{equation*}
    with probability $1-o_{n}(1)$.
\end{proposition}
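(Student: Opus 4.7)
The plan is to exhibit an explicit attention function $\Psi$ that behaves as a hard classifier of intra- versus inter-edges in the clean regime, and then to read off the claimed values of $\gamma_{ij}$ via Gaussian concentration of the edge features together with Chernoff concentration of the node degrees. For the case $p>q$, I would set $\Psi(x)=\phi\!\left(\nu^{T} x/\|\nu\|^{2}\right)$, where $\phi:\mathbb{R}\to\mathbb{R}$ is any Lipschitz function satisfying $\phi(z)=R$ for $z\ge 1/2$, $\phi(z)=-R$ for $z\le -1/2$, and linearly interpolating on $[-1/2,1/2]$ (so $\phi$ is $2R$-Lipschitz); here $R=R(n)$ is any sequence with $R\to\infty$, for concreteness $R=\log n$. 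This $\Psi$ is a composition of a Lipschitz function with a linear function, as required. For the case $q>p$, replace $\nu$ by $-\nu$ in the linear part; by symmetry the rest of the argument goes through unchanged.

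Next I would establish two concentration events, each holding with probability $1-o_n(1)$, and condition on their intersection. First, conditional on $A$, for every realized edge $(i,j)$ the scalar $\nu^{T}E_{(i,j)}/\|\nu\|^{2}$ is Gaussian with mean $+1$ or $-1$ according to whether $(i,j)$ is intra or inter, and with variance $\zeta^{2}/\|\nu\|^{2}$. A standard Gaussian tail bound combined with a union bound over the at-most $n^{2}$ possible edges shows that, uniformly over all realized edges, this scalar deviates from its mean by at most $O(\zeta\sqrt{\log n}/\|\nu\|)$, which is $o(1)$ by the clean hypothesis $\|\nu\|\ge\omega(\zeta\sqrt{\log|\mathcal{E}|})=\omega(\zeta\sqrt{\log n})$. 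Hence the argument of $\phi$ lies in the saturation region $[1/2,\infty)$ on every intra-edge and in $(-\infty,-1/2]$ on every inter-edge, so that $\Psi(E_{(i,j)})$ equals exactly $+R$ or $-R$ according to the edge type. Second, using $p,q\ge\Omega(\log^{2} n/n)$, a standard Chernoff bound gives, uniformly over all nodes $i$, $|N_i\cap C_{\epsilon_i}|=(np/2)(1\pm o_n(1))$ and $|N_i\setminus C_{\epsilon_i}|=(nq/2)(1\pm o_n(1))$.

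On this event, the softmax normalizer at any node $i$ reduces to $|N_i\cap C_{\epsilon_i}|e^{R}+|N_i\setminus C_{\epsilon_i}|e^{-R}$. When $p>q$ the second term is at most a factor $(q/p)e^{-2R}=o(1)$ times the first, so the denominator equals $(np/2)e^{R}(1\pm o_n(1))$. Dividing, $\gamma_{ij}=(2/np)(1\pm o_n(1))$ for intra-edges, and $\gamma_{ij}=(2/np)e^{-2R}(1\pm o_n(1))$ for inter-edges; using $p+q\le 2p$ the latter satisfies $\gamma_{ij}\cdot n(p+q)\le 4 e^{-2R}(1+o_n(1))\to 0$, which is precisely $\gamma_{ij}=o(1/(n(p+q)))$. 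The case $q>p$ is identical after the sign flip in $\Psi$, with the roles of intra- and inter-edges exchanged.

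The main obstacle is preventing the Gaussian fluctuations of the edge features from propagating through the softmax exponential and spreading the intra-class attention coefficients. A naive unnormalized linear $\Psi$ would introduce an uncontrolled multiplicative factor $e^{\pm O(\zeta\sqrt{\log n})}$ in $\gamma_{ij}$, which diverges with $n$. The trick is to normalize the linear projection by $1/\|\nu\|$: the clean-regime signal-to-noise condition $\|\nu\|/\zeta\gg\sqrt{\log n}$ then translates into uniform $o(1)$-closeness of the projected edge features to $\pm 1$, and any Lipschitz saturating transform on top produces exactly the same value $\pm R$ on every edge of a given type, so the whole argument reduces to a one-line softmax computation.
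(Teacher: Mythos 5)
Your proof is correct and follows the same skeleton as the paper's: project the edge features onto the direction of $\nu$, show via a Gaussian tail bound plus a union bound over the edges that this statistic uniformly separates intra- from inter-edges in the clean regime, and then evaluate the softmax using Chernoff concentration of $|N_i\cap C_0|$ and $|N_i\cap C_1|$. The only substantive difference is how the noise is kept out of the exponentials: the paper keeps $\Psi$ purely linear, $\Psi(E_{(i,j)})=\alpha s^T E_{(i,j)}$ with $s=\sign(p-q)\nu/(\zeta\|\nu\|)$, and tunes the scale $\alpha$ so that the noise contributes $e^{\pm o_n(1)}$ while the intra/inter gap contributes a factor $e^{-\omega_n(1)}$; you instead normalize by $\|\nu\|^{2}$ and clamp with a saturating Lipschitz map at levels $\pm R$, so that on the good event every edge evaluates to exactly $\pm R$ and the softmax computation becomes exact. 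Both devices rely on an $n$-dependent parameter ($\alpha$ there; the plateau height $R=\log n$, hence Lipschitz constant $2R$, here), which is legitimate since the proposition is an existence claim, and in both cases the divergence ($\alpha\|\nu\|/\zeta=\omega_n(1)$, respectively $R\to\infty$) is precisely what upgrades the inter-edge coefficients from $\Theta\left(\frac{1}{n(p+q)}\right)$ to the claimed $o\left(\frac{1}{n(p+q)}\right)$. Your remaining bookkeeping is accurate: $\log|\mathcal{E}|=\Theta(\log n)$ under the standing density assumption $p,q=\Omega(\log^2 n/n)$, the deviation $O(\zeta\sqrt{\log n}/\|\nu\|)=o(1)$ keeps the argument of $\phi$ in the saturation region, $n(p+q)\gamma_{ij}\le 4e^{-2R}(1+o_n(1))\to 0$ for inter-edges when $p>q$, and the $q>p$ case follows by the sign flip exactly as you state.
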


\textit{Proof sketch}. We construct $\Psi$ such that it separates intra- from inter-edges and it concentrates around its mean. We define $s:=\sign(p-q)\nu/\|\nu\|$ and  $\Psi(E_{(i,j)}):= s^T E_{(i,j)}$, which measures correlations with one of the means of the Gaussian mixture for the edge features. If $p>q$ the $\Psi$ function concentrates around a large positive value for intra-edges and a large negative value for inter-edges. The opposite holds for $q>p$. Then we plug in $\Psi$ in the definition of the attention coefficients $\gamma_{ij}$. Using concentration of $\Psi$ we prove the result.

In the following theorem we utilize~\cref{prop:gammas} to prove a positive result and a negative result for perfect classification using graph attention. 

\begin{theorem}\label{thm:clean}
Let $(A,X, E)\sim CSBM$, and assume that $\|\nu\|\ge \omega(\zeta \sqrt{\log (|\mathcal{E}|)})$.
\begin{enumerate}
    \item If $\|\mu\|\ge \omega\left(\sigma \sqrt{\frac{\log{n}}{n \max(p,q)}}\right)$, then we can construct a graph attention architecture that classifies the nodes perfectly with probability $1-o_n(1)$.
    \item If $\|\mu\|\le  K \sigma  \sqrt{\frac{\log{n}}{n\max(p,q)}\left(1-\max(p,q)\right)}$ for some constant $K$ and if $\max(p,q) \le 1-36 \log n / n$, then for any fixed $\|w\|=1$ graph attention fails to perfectly classify the nodes with probability at least $1-2\exp(-c'(1-\max(p,q))\log n)$ for some constant $c'$.
\end{enumerate}
\end{theorem}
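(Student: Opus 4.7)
The plan is to reduce the analysis to the post-attention features $\hat x_i$ under the attention architecture of \cref{prop:gammas}. Without loss of generality assume $p>q$. Let $\mathcal F$ be the intersection of the $(1-o_n(1))$-probability event from \cref{prop:gammas} with the Chernoff event that every node's intra- and inter-class degrees are $(1\pm o_n(1))np/2$ and $(1\pm o_n(1))nq/2$ respectively (guaranteed by the density assumption $p,q\ge \Omega(\log^2 n/n)$). On $\mathcal F$ one has $\sum_{j:a_{ij}=1,\;j,i\;\mathrm{same\ class}}\gamma_{ij}=1\pm o_n(1)$, $\sum_{j:a_{ij}=1,\;j,i\;\mathrm{different\ class}}\gamma_{ij}=o_n(1)$, and $\sum_j a_{ij}\gamma_{ij}^2=(1\pm o_n(1))\cdot 2/(np)$. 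Because the node features are independent of $(A,E)$, conditionally on $\mathcal F$ the convolved feature is Gaussian:
\[
\hat x_i \;\sim\; N\!\Big(\epsilon_i'\,\mu\,(1\pm o_n(1)),\;\tfrac{2\sigma^2}{np}(1\pm o_n(1))\,I\Big),
\]
where $\epsilon_i'\in\{+1,-1\}$ encodes the class of node $i$.

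For Part 1, I would take the linear classifier $\hat y_i=\sign(w^\top\hat x_i)$ with $w=\mu/\|\mu\|$. Then $\epsilon_i'\,w^\top\hat x_i$ is conditionally Gaussian with mean $(1-o_n(1))\|\mu\|$ and variance $(1+o_n(1))\cdot 2\sigma^2/(np)$, so the Gaussian upper tail yields per-node error probability at most $\exp(-(1-o_n(1))\,np\|\mu\|^2/(4\sigma^2))$. The hypothesis $\|\mu\|=\omega(\sigma\sqrt{\log n/(n\max(p,q))})$ turns this into $\exp(-\omega(\log n))$, and a union bound over the $n$ nodes (together with $\Pr(\mathcal F)=1-o_n(1)$) gives perfect classification with probability $1-o_n(1)$.

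For Part 2, fix any $w$ with $\|w\|=1$ and condition on $(A,E)$ and $\mathcal F$. Then $Y_i:=w^\top\hat x_i$ is Gaussian with $|\E[Y_i]|\le \|\mu\|(1+o_n(1))$ by Cauchy--Schwarz, and $\Var(Y_i)\ge (1-o_n(1))\cdot 2\sigma^2/(np)$. The hypothesis on $\|\mu\|$ therefore forces $|\E[Y_i]|/\sqrt{\Var(Y_i)}\le K\sqrt{(1-\max(p,q))\log n/2}\,(1+o_n(1))$, so the Gaussian anti-concentration bound $\Pr(Z>t)\ge (2\pi)^{-1/2}t^{-1}(1-t^{-2})e^{-t^2/2}$ applied with $t$ of order $\sqrt{(1-\max(p,q))\log n}$ yields a per-node misclassification probability of at least $q_n := c_0\exp(-c_1(1-\max(p,q))\log n)$, with $c_1=K^2/4$ up to negligible corrections. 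To promote this into an upper bound on $\Pr(\bigcap_i\{\sign(Y_i)=\epsilon_i'\}\mid \mathcal F)$, I would compute the pairwise covariance via the expected number of shared intra-class neighbours ($\Theta(n\max(p,q)^2)$), which yields correlation $\rho_{ij}=O(\max(p,q))$, and then decompose each $Y_i$ as a common-noise component plus an independent remainder. In the regime where the theorem is nonvacuous, $\rho_{ij}$ is small enough that the independent-remainder part dominates and one obtains $(1-q_n)^{\Omega(n)}\le 2\exp(-c'(1-\max(p,q))\log n)$ for a suitable $c'=c'(K)$ when $K$ is chosen small.

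The main obstacle is this final aggregation. A direct union or Paley--Zygmund argument only gives $\Pr(\text{failure})\gtrsim q_n=\Theta(n^{-c_1(1-\max(p,q))})$, which falls short of the required $1-O(n^{-c'(1-\max(p,q))})$, because within a class the $Y_i$'s are positively correlated through shared neighbours with correlation as large as $\Theta(\max(p,q))$. Getting the exponentially small upper bound on the ``all correct'' event requires either a Gaussian-comparison / Slepian-type inequality that exploits the explicit common-noise factorization of $(Y_i)_i$ to reduce the joint orthant probability to its independent-product analogue up to a multiplicative constant, or an explicit selection of $\Omega(n)$ nodes with essentially disjoint neighbourhoods followed by a direct product bound. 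The side condition $\max(p,q)\le 1-36\log n/n$ is exactly what keeps $(1-\max(p,q))\log n$ in a regime where this aggregation step is meaningful.
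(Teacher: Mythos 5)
Your Part 1 is essentially the paper's argument (attention coefficients from \cref{prop:gammas}, the classifier $w=\sign(p-q)\mu/\|\mu\|$, sub-Gaussian concentration of the convolved noise at scale $\sqrt{\log n/(n\max(p,q))}$, union bound), and that half is fine. The problem is Part 2, where you yourself flag the decisive step --- upgrading a per-node misclassification probability into an exponentially small bound on the ``all nodes correct'' event despite the positive correlations $\rho_{ij}=\Theta(\max(p,q))$ induced by shared neighbours --- and then leave it unresolved, offering only two unexecuted alternatives. That step is the actual content of the negative result, so as written the proposal has a genuine gap. Neither sketched alternative is a routine fill-in: the ``$\Omega(n)$ nodes with essentially disjoint neighbourhoods'' idea fails outright in the dense regime (for constant $p$ any two same-class nodes share $\Theta(np^2)$ neighbours, and the correlation $\Theta(p)$ does not vanish), and the Slepian/common-factor route would require comparing to an equicorrelated process, conditioning on the common factor, and re-deriving the orthant bound --- none of which is carried out, and the correlations here are not equal across pairs, so this needs real work.

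The paper avoids the product/comparison framing entirely. It observes that perfect classification of $C_0$ forces $\max_{i\in C_0} Z_i < K\sqrt{\tfrac{\log n}{np}(1-p)}$ with $Z_i=\sum_{j\in N_i}\gamma_{ij}w^Tg_j$, then (i) lower-bounds the canonical metric $d(i,j)=\sqrt{\E[(Z_i-Z_j)^2]}$ using only the \emph{uncommon} neighbours of $i$ and $j$ (\cref{prop:temp}), getting $d(i,j)\gtrsim\sqrt{(1-p)/(np)}$; (ii) applies Sudakov's minoration to get $\E[\max_i Z_i]\ge c\sqrt{\tfrac{\log n}{np}(1-p)}$; and (iii) applies Borell's inequality, with per-coordinate variance $\Theta(1/np)$, to conclude $P\bigl(\max_i Z_i\le K\sqrt{\tfrac{\log n}{np}(1-p)}\bigr)\le 2\exp(-c'(1-p)\log n)$. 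This handles arbitrary positive correlations with no independence surrogate, and it is also where the hypothesis $\max(p,q)\le 1-36\log n/n$ is genuinely used --- it guarantees enough uncommon neighbours for the metric lower bound in \cref{prop:temp}, not merely (as you suggest) that the exponent stays ``meaningful.'' To complete your Part 2 you would need to supply this Sudakov--Borell argument (or fully execute a comparison argument of equivalent strength); the anti-concentration-per-node starting point by itself cannot yield the stated $1-2\exp(-c'(1-\max(p,q))\log n)$ failure probability.
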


\textit{Proof sketch}. For part 1 we use the attention coefficients from~\cref{prop:gammas} and we plug them in the definition of graph attention. Then because the distance between the means is large we can use simple concentration arguments to show that the convolved data concentrate around their means, which are classifiable with high probability using the classifier $w:=\sign(p-q)\mu/\|\mu\|$. This classifier measures correlation with one of the means. For part 2, the Gaussian noise dominates the means of the convolved node features. Thus there exists at least one node for which is not possible to detect its correct class with the given probability. 

\textit{Discussion of~\cref{thm:clean}}. There is a difference between the threshold in the positive result (part 1 of~\cref{thm:clean}) and the negative result (part 2 of~\cref{thm:clean}). The difference is prominent when $\max(p,q)=1-o_n(1)$. In that case, the threshold for the negative regime is very small and the probability can be so low that the result is not meaningful. This is an expected outcome. Consider the case $p=1$ and $q$ very small, then after convolution the data collapse approximately to two points that can be easily separated with a linear classifier. The difference between the two thresholds is small when $\max(p,q) \le 1 - \epsilon$ for any constant $\epsilon\in(0,1)$. That is, when $\max(p,q)$ is away from $1$. Finally, there is a difference due to $\omega$ in the positive result. This difference is not important since we can make the order in $\omega$ small with the cost of affecting the probability of perfect classification, although the probability will still be $1-o_n(1)$.

A limitation of our analysis is the assumption of a fixed $w$. Although in the proof of part 1 we utilize a specific fixed $w$ and we show that for this fixed $w$ graph attention is able to perfectly classify the nodes, it would be an interesting future work to set $w$ to be the optimal solution of some expected loss function.

It is important to note that if the edge features are clean then graph attention is better than graph convolution. In~\cref{thm:noisy} we will see that the threshold for graph convolution for the perfect classification is $\|\mu\| = \omega\left(\frac{p+q}{|p-q|}\sqrt{\frac{\log n}{n\max(p,q)}}\right)$, while for failing perfect classification the threshold is $\|\mu\| \le K\sigma \frac{p+q}{|p-q|}\sqrt{\frac{\log n}{n\max(p,q)}(1-\max(p,q))}$ for some constant $K$. By simply comparing these thresholds to those of graph attention in~\cref{thm:clean} it is easy to see that the parameter regime of CSBM where graph attention can perfectly classify the data is larger than that of graph convolution. First the difference $|p-q|$ is not affecting graph attention and also the threshold on the distance between the means is smaller for graph attention.

\subsection{Noisy edge features}\label{ssec:noisy-edge-features}

Consider the case of noisy edge features, this means that $\|\nu\|\le K \zeta$ for some constant $K$. We call this regime noisy because in this case there is a lot of overlap between the two Gaussians in the Gaussian mixture model of the CSBM. Note that there is a gap between this regime and the clean features regime, the two regimes differ by a factor of $\omega(\sqrt{\log (|\mathcal{E}|)})$. Although this factor grows with $|\mathcal{E}|$ we note that the factor changes very slowly with $|\mathcal{E}|$. For example, for $|\mathcal{E}|=10^{20}$, $\sqrt{\log (|\mathcal{E}|)}\approx 6.78$. Below we present the result about $\gamma$, which is crucial for obtaining node classification results and whose proof is deferred to the Appendix.

\begin{proposition}\label{prop:gammas_noisy}
Assume that $\|\nu\| \le K \zeta$ for some constant $K$. Then, with probability at least $1-o_n(1)$, there exists a subset of nodes $\mathcal{A} \subseteq [n]$ with cardinality at least $n - o(n)$ such that for all $i \in \mathcal{A}$ the following hold:
\begin{enumerate}
	\item There is a subset $J_{i,0} \subseteq N_i \cap C_0$ with cardinality at least $\frac{9}{10}|N_i \cap C_0|$, such that $\gamma_{ij} = \Theta(1/|N_i|)$ for all $j \in J_{i,0}$;
	\item There is a subset $J_{i,1} \subseteq N_i \cap C_1$ with cardinality at least $\frac{9}{10}|N_i \cap C_1|$, such that $\gamma_{ij} = \Theta(1/|N_i|)$ for all $j \in J_{i,1}$.
\end{enumerate}
\end{proposition}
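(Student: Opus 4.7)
The plan is to exploit the fact that with $\|\nu\| \le K\zeta$ and $\zeta$ constant, the two Gaussian mixtures for edge features overlap heavily, so the attention score $\Psi(E_{(i,j)})$ is sub-Gaussian with parameters that barely depend on whether the edge is intra or inter. Since $\Psi = \phi \circ L$ for an $L_\phi$-Lipschitz $\phi$ and linear $L(E) = w^\top E + b$, the map $\Psi$ is itself $L_\phi\|w\| = O(1)$-Lipschitz. By Gaussian Lipschitz concentration, for a single edge $(i,j)$ with $E_{(i,j)} \sim N(\pm\nu, \zeta^2 I)$,
\[
\Pr\!\bigl(|\Psi(E_{(i,j)}) - \mu_{ij}| > t\bigr) \le 2\exp\!\bigl(-t^2/(2L_\phi^2\|w\|^2\zeta^2)\bigr),
\]
where $\mu_{ij} := \E\Psi(E_{(i,j)})$, and the intra- and inter-means differ by at most $L_\phi\|w\| \cdot \|2\nu\| \le 2L_\phi\|w\|K\zeta = O(1)$. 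Hence, for any prescribed small $\delta>0$ there exist constants $c, M$ (in $L_\phi, \|w\|, \zeta, K, \delta$) such that the event $\{|\Psi(E_{(i,j)}) - c| \le M\}$ (call such an edge \emph{typical}) holds with probability at least $1-\delta$ for every edge, intra or inter.

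Next I would condition on the random graph. Under $p, q \ge \Omega(\log^2 n/n)$, standard binomial concentration gives $|N_i \cap C_\ell| = \Omega(\log^2 n)$ for both $\ell \in \{0,1\}$ and every $i$ with probability $1-o(1)$. Conditional on the graph, the edge features are independent, hence so are the indicators $Z_{ij} := \indi\{\text{edge } (i,j)\text{ is not typical}\}$, each Bernoulli with parameter at most $\delta$. Fixing $\delta := 1/40$ and applying a multiplicative Chernoff bound yields, for every $i$ and every $\ell \in \{0,1\}$,
\[
\Pr\!\left(\sum_{j \in N_i \cap C_\ell} Z_{ij} \ge \tfrac{1}{10}|N_i \cap C_\ell|\right) \le \exp\!\bigl(-\Omega(|N_i \cap C_\ell|)\bigr) = n^{-\Omega(\log n)}.
\]
A union bound then leaves at least $9/10$ of the intra-class and $9/10$ of the inter-class neighbors typical for every $i$, defining the candidate subsets $J_{i,0}$ and $J_{i,1}$.

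Finally, I would control the attention denominator $D_i := \sum_{\ell \in N_i} \exp(\Psi(E_{(i,\ell)}))$. Sub-Gaussianity of $\Psi$ gives $\E e^{t\Psi} \le \exp(t\mu_{ij} + t^2 L_\phi^2\|w\|^2\zeta^2/2)$, so $\E e^{\Psi}$ and $\E e^{2\Psi}$ are $\Theta(1)$ constants, hence $\E D_i = \Theta(|N_i|)$ and $\Var(D_i) = O(|N_i|)$. Chebyshev at deviation $|N_i|/2$ yields $\Pr(|D_i - \E D_i| > |N_i|/2) = O(1/|N_i|) = O(1/\log^2 n) = o(1)$. For every $j \in J_{i,\ell}$ the numerator $\exp(\Psi(E_{(i,j)})) \in [e^{c-M}, e^{c+M}] = \Theta(1)$, while $D_i = \Theta(|N_i|)$, so $\gamma_{ij} = \Theta(1/|N_i|)$. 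Letting $\mathcal{A}$ be the set of nodes for which the degree bounds, the typicality counts in both classes, and $D_i = \Theta(|N_i|)$ all hold, each $i$ lies in $\mathcal{A}$ with probability $1-o(1)$, so Markov's inequality applied to the number of failing nodes gives $|\mathcal{A}| \ge n - o(n)$ with probability $1 - o(1)$. The main obstacle is controlling $D_i$ despite the log-normal-like tail of $\exp(\Psi)$, which is heavier than sub-exponential and so rules out a Bernstein-type argument; fortunately a second-moment Chebyshev bound combined with the $\Omega(\log^2 n)$ degree lower bound suffices, and one must verify that the constants in $\Theta(\cdot)$ depend only on $L_\phi, \|w\|, \zeta, K$ and not on $n$.
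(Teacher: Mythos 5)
Your proof is correct for the proposition as stated, but it takes a genuinely different route from the paper. The paper never touches second moments of the softmax denominator: it defines $\mathcal{A}$ purely in terms of edge-feature magnitudes ($|s^Tf_{(i,j)}|\le 10\sqrt{\log(n(p+q))}$ for all $j\in N_i$), shows $|\mathcal{A}|\ge n-O(n/\log n)$, and then controls $\sum_{\ell\in N_i}\exp(\Psi(E_{(i,\ell)}))$ for \emph{every} $i\in\mathcal{A}$ simultaneously via a dyadic binning argument: Chernoff bounds on the bin cardinalities $|B_{i,\ell}^t|=|\{j: 2^{t-1}\le|s^Tf_{(i,j)}|\le 2^t\}|$ for $t\le T=O(\log\log n)$, followed by a deterministic summation $\sum_t|B^t_{i,\ell}|e^{c_12^t+c_2}=O(n(p+q))$ (Lemma A.5). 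You instead bound the denominator $D_i$ per node by Chebyshev, using $\E e^{2\Psi}=O(1)$ from Gaussian--Lipschitz concentration, and then discard the $o(n)$ nodes where the per-node event fails via Markov. That is more elementary and perfectly adequate here, since the statement only asks for a set of size $n-o(n)$; what the paper's heavier machinery buys is uniform, conditional-on-one-event control of the denominator, which is then reused to prove the additional parts of the appendix version of this proposition (the intra/inter attention-mass ratios and $\sum_j\gamma_{ij}^2=\Theta(1/(n(p+q)))$) needed for Theorem 2. One small repair to your write-up: Chebyshev at deviation $|N_i|/2$ only gives the upper bound $D_i=O(|N_i|)$ when the constant in $\E D_i\ge c|N_i|$ may be smaller than $1/2$; the matching lower bound $D_i=\Omega(|N_i|)$ should be taken instead from the $\Omega(|N_i|)$ typical edges, whose terms are each $\Theta(1)$ (or by choosing the deviation as a fixed fraction of $\E D_i$) --- with that noted, the constants in all $\Theta(\cdot)$'s depend only on $L_\phi,\|w\|,\zeta,K$, as required.
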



The above proposition states that for the majority of the nodes at least $90\%$ of their intra- and inter-edge attention coefficients are up to a constant uniform. This behaviour is similar to that of GCN. We utilize~\cref{prop:gammas_noisy} in the next theorem to prove that graph attention performs similarly to graph convolution.

\begin{theorem}\label{thm:noisy}
Let $(A,X, E)\sim CSBM$, and assume that $\|\nu\| \le K \zeta$ for some constant $K$.
\begin{enumerate}
    \item If $\|\mu\| = \omega\left(\sigma\frac{p+q}{|p-q|}\sqrt{\frac{\log n}{n\max(p,q)}}\right)$, with probability $1-o_n(1)$ graph attention classifies all nodes correctly.
    \item If $\|\mu\| \le K' \sigma \frac{p+q}{|p-q|}\sqrt{\frac{\log n}{n\max(p,q)}(1-\max(p,q))}$ for some constant $K'$  and if $\max(p,q) \le 1-36 \log n / n$, then for any fixed $\|w\| = 1$ graph attention fails to perfectly classify the nodes with probability at least $1-2\exp(-c'(1-\max(p,q))\log n)$ for some constant $c'$.
\end{enumerate}
\end{theorem}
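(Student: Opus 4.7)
I would use~\cref{prop:gammas_noisy}, which says that in the noisy regime the attention coefficients on most rows are (up to constants) uniform, to argue that graph attention behaves like a graph convolution. Part 1 is then a signal-to-noise computation after fixing the classifier $w:=\sign(p-q)\mu/\|\mu\|$; Part 2 combines a matching per-node misclassification probability (via Cauchy--Schwarz plus Gaussian tail) with a decorrelation step over a subset of vertices whose neighborhoods are disjoint.

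\textbf{Part 1.} Condition on $(A,E)$, so that $w^T\hat{x}_i$ is Gaussian in the $X$-randomness with mean $m_i = w^T\mu\cdot(-1)^{\epsilon_i}\bigl(\sum_{j\in N_i\cap C_{\epsilon_i}}\gamma_{ij}-\sum_{j\in N_i\cap C_{1-\epsilon_i}}\gamma_{ij}\bigr)$ and variance $v_i=\sigma^2\|\gamma_i\|_2^2$. For $i\in\mathcal{A}$,~\cref{prop:gammas_noisy} together with Chernoff concentration of the SBM degrees $|N_i\cap C_k|$ (legal since $p,q=\Omega(\log^2 n/n)$) yields $|m_i|=\Omega(\|\mu\|\,|p-q|/(p+q))$ with the sign matching $i$'s class, and $\|\gamma_i\|_2^2=O(1/|N_i|)=O(1/(n\max(p,q)))$; the $\ell_2$ bound uses that the at most $|N_i|/10$ ``atypical'' coefficients also inherit a uniform $O(1/|N_i|)$ upper bound from the subgaussian concentration of the Lipschitz score $\Psi$ when $\|\nu\|\le K\zeta$. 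The resulting signal-to-noise ratio is $\Omega(\|\mu\|\,|p-q|/(p+q)\cdot\sqrt{n\max(p,q)}/\sigma)=\omega(\sqrt{\log n})$ by hypothesis, so a Gaussian tail bound plus a union bound classifies every vertex in $\mathcal{A}$ correctly; the $o(n)$ vertices in $\mathcal{A}^c$ are handled by reusing the uniform upper bound on $\Psi$.

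\textbf{Part 2.} Fix any $w$ with $\|w\|=1$ and condition on $(A,E)$. Using~\cref{prop:gammas_noisy} once more, for $i\in\mathcal{A}$ the near-uniform structure of $\gamma_i$ and SBM degree concentration give $\bigl|\sum_{j\in N_i\cap C_0}\gamma_{ij}-\sum_{j\in N_i\cap C_1}\gamma_{ij}\bigr|=O(|p-q|/(p+q))$, hence $|m_i|=O(\|\mu\|\,|p-q|/(p+q))$; Cauchy--Schwarz applied to $\sum_j\gamma_{ij}=1$ gives $v_i\ge\sigma^2/|N_i|$. With $|N_i|\le 2n\max(p,q)$ w.h.p., the hypothesis on $\|\mu\|$ yields $|m_i|/\sqrt{v_i}\le C\sqrt{(1-\max(p,q))\log n}$ for a constant $C$ that can be made small by shrinking $K'$, and the Gaussian lower tail produces a per-vertex misclassification probability at least $n^{-c''(1-\max(p,q))}$ for some $c''<1$. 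To upgrade this into failure of perfect classification, I would extract greedily from $\mathcal{A}$ a subset $S$ of vertices with pairwise disjoint $1$-neighborhoods (feasible by the degree bound); conditional on $(A,E)$ the variables $\{w^T\hat{x}_i\}_{i\in S}$ are independent Gaussians in $X$, and
\[
\Pr(\text{all of } S \text{ correct}) \;\le\; (1-n^{-c''(1-\max(p,q))})^{|S|} \;\le\; 2\exp(-c'(1-\max(p,q))\log n)
\]
after adjusting constants, with the side hypothesis $\max(p,q)\le 1-36\log n/n$ keeping the exponent meaningful.

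\textbf{Main obstacle.} The decorrelation step in Part 2 is the hardest: the $\hat{x}_i$'s share Gaussian noise of common neighbors, so the events ``vertex $i$ correctly classified'' are positively correlated. Obtaining the stated exponential requires either a genuinely independent-neighborhood subset (plentiful when $\max(p,q)\ll 1$ but scarcer as $\max(p,q)\to 1$) or a Slepian-type Gaussian comparison that trades correlation for slightly worse constants. A secondary subtlety in Part 1 is controlling $\gamma_{ij}$ uniformly on all of $[n]$, not only on $\mathcal{A}$; this needs a uniform upper bound on $\gamma$ that is not explicit in~\cref{prop:gammas_noisy} but should follow from subgaussian control on $\Psi$.
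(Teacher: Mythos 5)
There are two genuine gaps, one in each part. In Part 1 you treat the statement as holding for an arbitrary Lipschitz attention and extract the signal lower bound $|m_i|=\Omega\bigl(\|\mu\|\,|p-q|/(p+q)\bigr)$ from \cref{prop:gammas_noisy}. That proposition (including its stronger appendix form) only bounds the intra/inter attention-mass gap from \emph{above}, i.e.\ $\bigl|\sum_{j\in N_i\cap C_0}\gamma_{ij}-\sum_{j\in N_i\cap C_1}\gamma_{ij}\bigr|\le O\bigl(|p-q|/(p+q)\bigr)$ with the right sign; it gives no lower bound on its magnitude, and for a generic Lipschitz $\Psi$ in the noisy regime the gap can be $o(|p-q|/(p+q))$ (the attention can rebalance the two classes' masses), so your signal-to-noise computation has no floor. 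Part 1 is an existence statement about the architecture: the paper proves it by \emph{choosing} $\Psi\equiv 0$, so that $\gamma_{ij}=1/|N_i|$ exactly, the convolved mean is $\frac{|p-q|}{p+q}\|\mu\|(1\pm o_n(1))$, and the standard Hoeffding/union-bound argument (as in part 1 of \cref{thm:clean}) closes the proof; this also removes your worry about controlling $\gamma$ on $\mathcal{A}^c$. Your plan needs such an explicit choice of $\Psi$; \cref{prop:gammas_noisy} alone cannot supply it.

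In Part 2 the decorrelation device is the real problem, and you correctly sense it but do not resolve it. Pairwise disjoint neighborhoods essentially do not exist outside the sparsest regime: since $|N_i|=\Theta(n(p+q))$, any disjoint family has size $|S|\le O(1/(p+q))$, which is $O(1)$ for constant $p,q$, and once $p+q\gg n^{-1/2}$ any two vertices share $\Theta(n(p^2+q^2))$ neighbors with high probability, so typically $|S|=1$; meanwhile your product bound needs $|S|\gtrsim n^{c''(1-\max(p,q))}(1-\max(p,q))\log n$ to reach the stated exponent. The theorem covers $\max(p,q)$ all the way up to $1-36\log n/n$, so the argument is vacuous on most of the admissible range, and the positive correlation you note means the product bound cannot simply be forced through. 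The paper handles the correlated case head-on: it views $Z_i=\sum_{j\in N_i}\gamma_{ij}w^Tg_j$, $i\in\mathcal{A}\cap C_0$, as a Gaussian process, lower-bounds the canonical metric using counts of \emph{uncommon} neighbors (\cref{prop:temp} together with the appendix claim on $\widehat{J}_{ij}$), applies Sudakov's minoration to get $\E\max_i Z_i\ge c_1\sqrt{\tfrac{\log n}{n\max(p,q)}(1-\max(p,q))}$, and then Borell's inequality (with the variance bound $\sum_j\gamma_{ij}^2=\Theta(1/(n(p+q)))$ from the appendix version of \cref{prop:gammas_noisy}) to obtain the $2\exp(-c'(1-\max(p,q))\log n)$ failure bound. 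Some step of this type — a genuine Gaussian-process comparison rather than an independent sub-family — is the missing core of your Part 2.
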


\textit{Proof sketch}. 
The sketch of the proof of this theorem is similar to~\cref{thm:clean}. The major difference is that when the edge features are noisy the majority of the attention coefficients are up to a constant uniform, see~\cref{prop:gammas_noisy}, and the attention mechanism is not able to distinguish intra- from inter-edges. Let's start the sketch for part 2. Using~\cref{prop:gammas_noisy} we prove that the convolved means of the node features get closer by $(p+q)/(p-q)$. That's how this quantity appears in the thresholds. Again, using concentration arguments for the noise in the data and the assumed bound on the distance between the means we can show that the noise is larger than the convolved means with high probability. Therefore, graph attention misclassifies at least one node with high probability. The proof for part 1 is easy, we simply pick a function $\Psi$ that allows us  to match the threshold in part 2. This is achieved by simply setting $\Psi=0$, which also happens to reduce graph attention to graph convolution since all attention coefficients are exactly uniform. Because the attention coefficients are uniform the term $n(p+q)$ appears in the threshold. The remaining of the proof follows the same approach as part 1 of~\cref{thm:clean}.


\textit{Discussion on~\cref{thm:noisy}}. Note that the same thresholds hold for graph convolution\footnote{The analysis for graph convolution is nearly identical to~\cref{thm:noisy}. Graph convolution has also been analyzed in~\cite{BFJ2021}, but our analysis includes the dependence on $p,q$, while in~\cite{BFJ2021} the authors consider $(p-q)/(p+q)$ to be a constant. Moreover, the analysis in our paper holds regardless of $p>q$ or $q>p$.}. 
Therefore, we conclude that graph attention is not better than graph convolution for perfect node classification when the edge features are noisy. 


\section{Synthetic Experiments}

We investigate empirically our theoretical results on the CSBM. We do $50$ trials and we present averaged results and standard deviation. In each trial we generate data $(A,X,E)$ that follow the CSBM. We use the constructed solutions that are described in our theorems, which come with the corresponding guarantees, to define the learnable parameters in the models. In particular, for the graph attention we set $w=\sign(p-q)\mu/\|\mu\|$. The attention function $\Psi$ is set to $\Psi(E_{(i,j)}):= s^T E_{(i,j)}$, where $s:=\sign(p-q)\nu/\|\nu\|$. For GCN we also set $w=\sign(p-q)\mu/\|\mu\|$. Furthermore, we set $n=400$, $d=n/\log^2 n$, $\sigma=0.1$ and each class has $n/2$ nodes.

\subsection{Clean edge features}

We set $\zeta=0.1$ and we pick the mean of the edge features $\nu$ such that $\|\nu\|\ge 100 \zeta \sqrt{\log (0.5n^2(p+q))}$.

\textbf{Varying $q$.} We perform two experiments to demonstrate parts 1 and 2 of~\cref{thm:clean}, For part 1, which is the positive result, we pick $\mu$ such that $\|\mu\| = 5\sigma \sqrt{\log n / (n\max(p,q))}$. For part 2, which is the negative result, we set $\|\mu\| = \sigma \sqrt{\log n / (n\max(p,q))}$. We fix $p=0.4$ and we vary $q$ from $\log^2/n$ to $2p$. In~\cref{fig:clean_edges_varying_q_positive} we present the positive result for graph attention and we also compare graph attention to graph convolution. For any value of $q$ graph attention achieves perfect classification while graph convolution depends on $|p-q|$. In~\cref{fig:clean_edges_varying_q_negative} we present the negative result for graph attention. In this experiment the distance between the means is small and graph attention fails to achieve perfect classification, but it is better than graph convolution.
 \begin{figure}[ht!]
      \centering
      \begin{subfigure}[b]{0.49\columnwidth}
          \centering
          \includegraphics[width=\columnwidth]{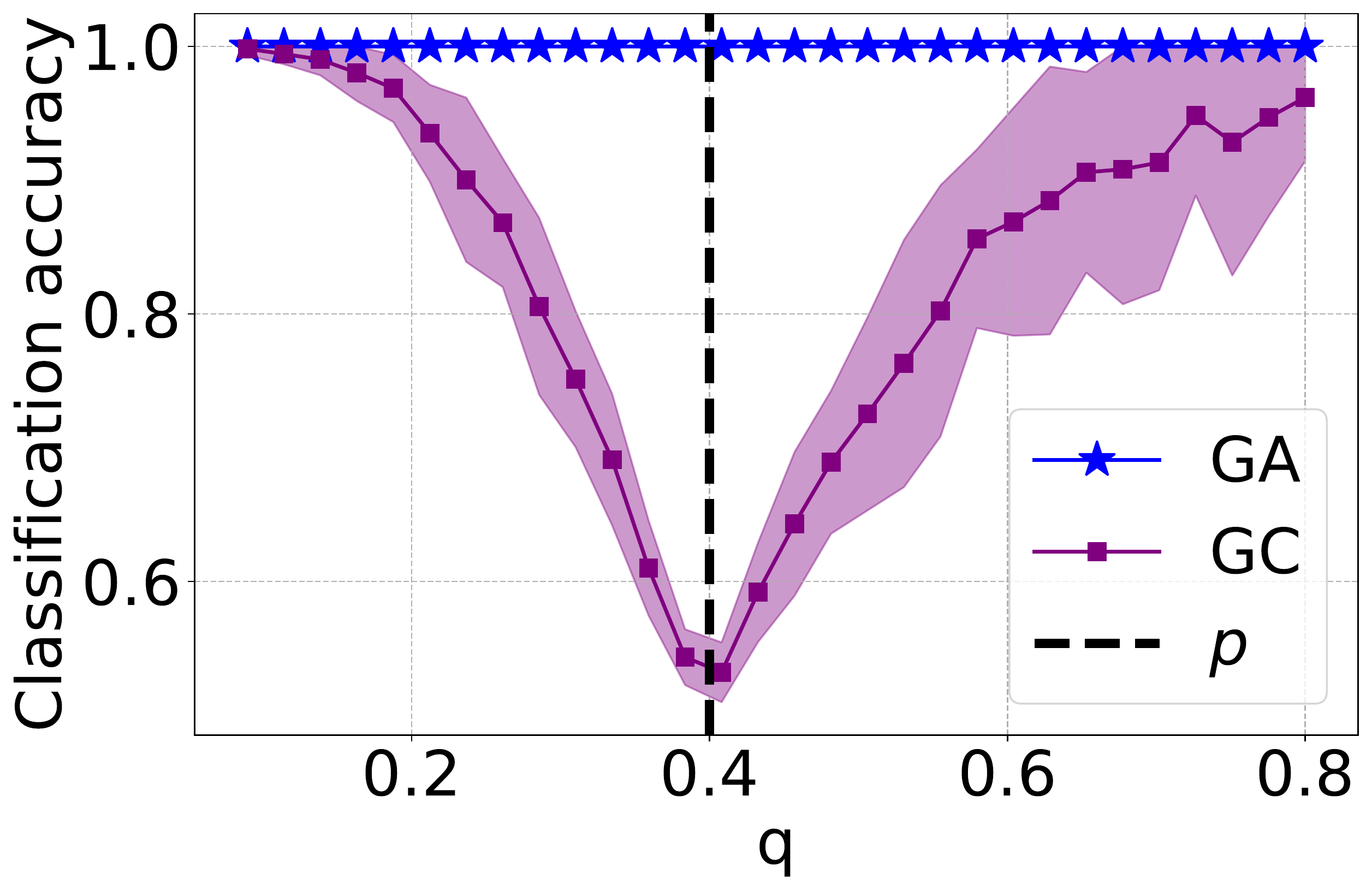}
          \caption{Positive result, part 1~\cref{thm:clean}}
          \label{fig:clean_edges_varying_q_positive}
      \end{subfigure}
      \begin{subfigure}[b]{0.49\columnwidth}
          \centering
          \includegraphics[width=\columnwidth]{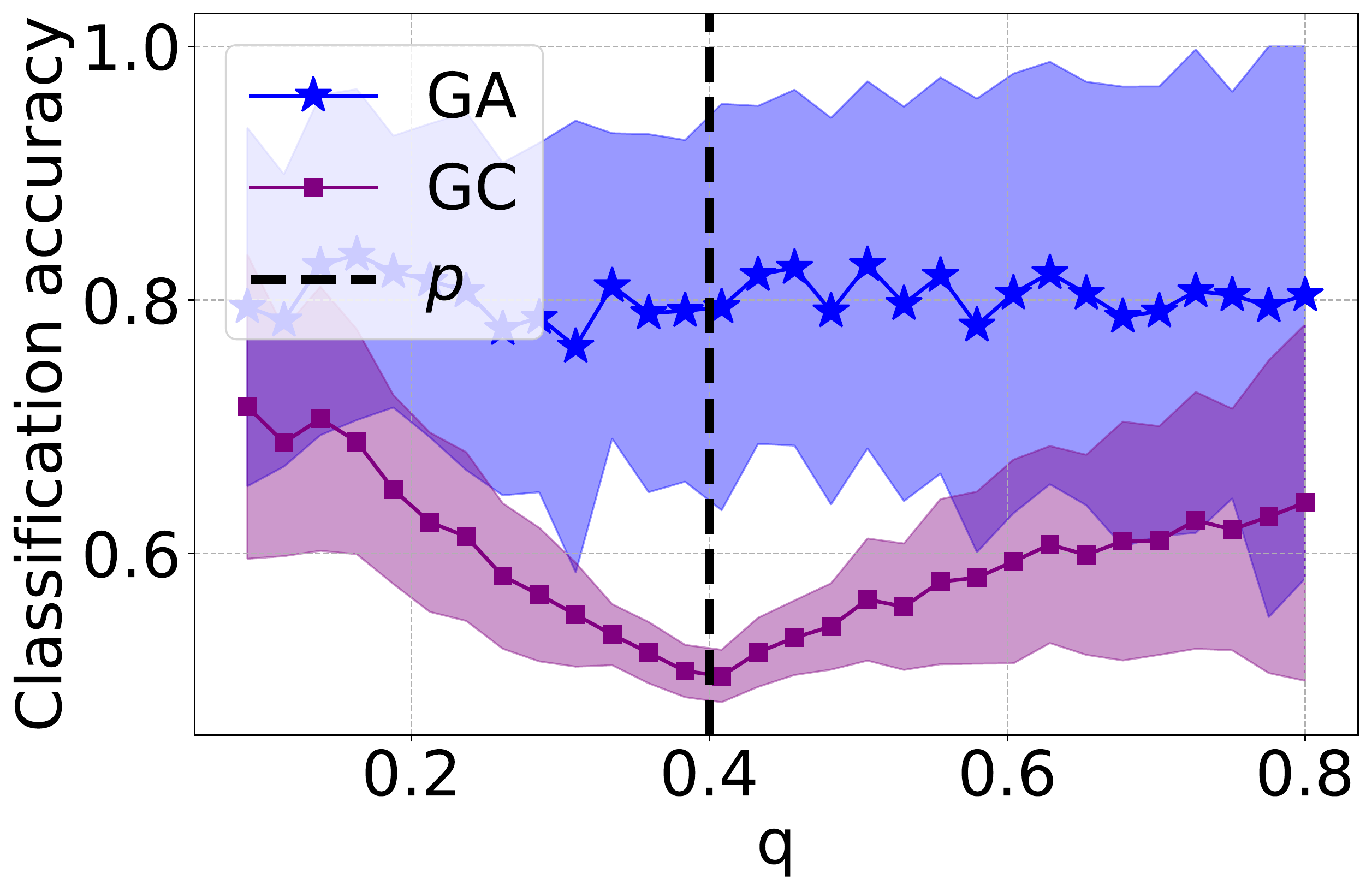}
          \caption{Negative result, part 2~\cref{thm:clean}}
          \label{fig:clean_edges_varying_q_negative}
      \end{subfigure}
      \caption{Demonstration of part 1 and 2 of~\cref{thm:clean}. }\label{fig:clean_edges_varying_q}
 \end{figure}

\textbf{Varying the distance of the means of the node features.} We illustrate how the attention coefficients and the accuracy change with the distance $\|\mu\|$. We fix $p=0.4$ and $q=0.33$ (this makes the graphs sufficiently noisy). We vary $\|\mu\|$ from $0.1\sigma \sqrt{\log n / (n\max(p,q))}$ to $20\sigma \sqrt{\log n}$. \cref{fig:clean_edges_varying_distance_gammas} illustrates~\cref{prop:gammas}. We observe empirically the separation of intra- and inter-$\gamma$ as claimed in~\cref{prop:gammas}. \cref{fig:clean_edges_varying_distance_accuracy} illustrates a combination of parts 1 and 2 of~\cref{thm:clean}. We observe that when the edge features are clean and $|p-q|$ is small then graph attention is better than graph convolution.
 \begin{figure}[ht!]
      \centering
      \begin{subfigure}[b]{0.49\columnwidth}
          \centering
          \includegraphics[width=\columnwidth]{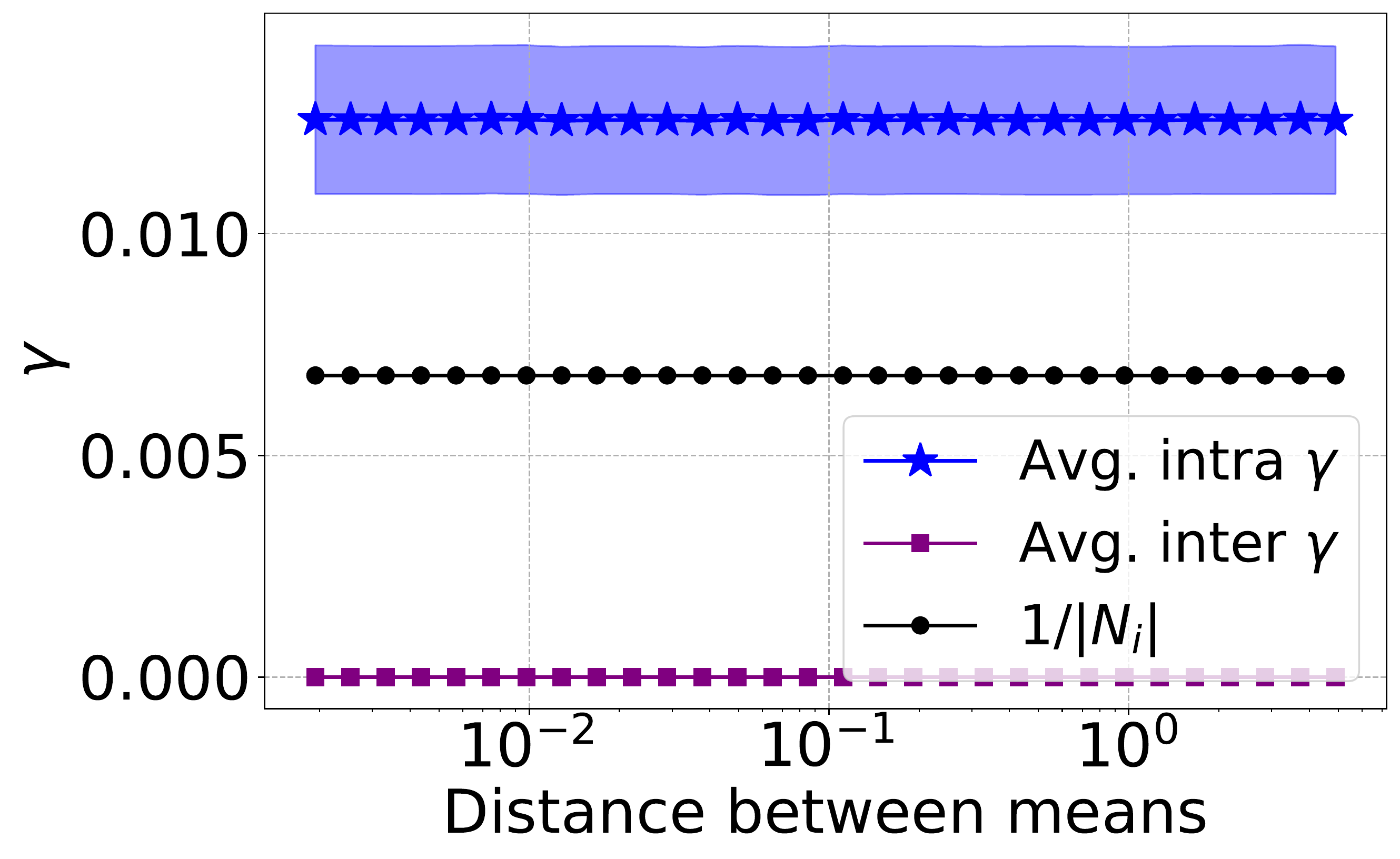}
          \caption{\cref{prop:gammas}}
          \label{fig:clean_edges_varying_distance_gammas}
      \end{subfigure}
      \begin{subfigure}[b]{0.49\columnwidth}
          \centering
          \includegraphics[width=\columnwidth]{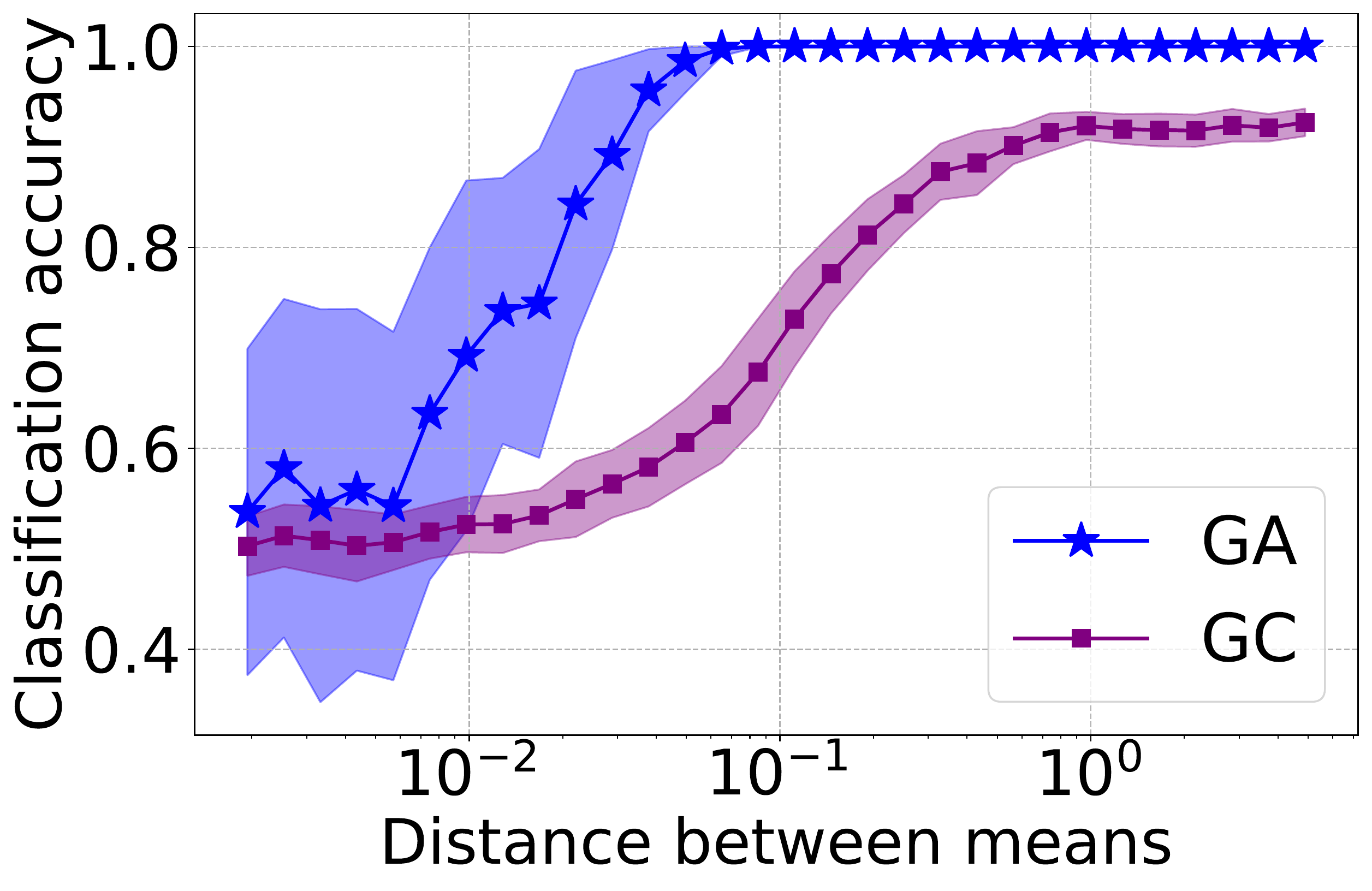}
          \caption{Parts 1 and 2 of~\cref{thm:clean}}
          \label{fig:clean_edges_varying_distance_accuracy}
      \end{subfigure}
      \caption{This figure demonstrates how the attention coefficients and the accuracy are changing as a function of the distance between the means of the node features. }\label{fig:clean_edges_varying_distance}
 \end{figure}
 
\subsection{Noisy edge features}

We set $\zeta=0.1$ and we pick the mean of the edge features $\nu$ such that $\|\nu\|\ge 100 \zeta$.

\textbf{Varying $q$.} We fix $p=0.4$ and we vary $q$ from $\log^2/n$ to $2p$. We perform two experiments to demonstrate parts 1 and 2 of~\cref{thm:noisy}, For part 1, which is the positive result, we pick $\mu$ such that $\|\mu\| = 8\sigma \frac{p+q}{|p-q|}\sqrt{\log n / (n\max(p,q))}$. For part 2, which is the negative result, we set $\|\mu\| = 0.1\sigma \frac{p+q}{|p-q|}\sqrt{\log n / (n\max(p,q))}$. In~\cref{fig:noisy_edges_varying_q_positive} we present the positive result for graph attention and we also compare graph attention to graph convolution. We observe that for any value of $q$ graph attention has very similar performance to graph convolution. In~\cref{fig:noisy_edges_varying_q_negative} we present the negative result for graph attention. In this experiment, graph attention has very similar performance to graph convolution because the attention coefficients are approximately uniform.
 \begin{figure}[ht!]
      \centering
      \begin{subfigure}[b]{0.49\columnwidth}
          \centering
          \includegraphics[width=\columnwidth]{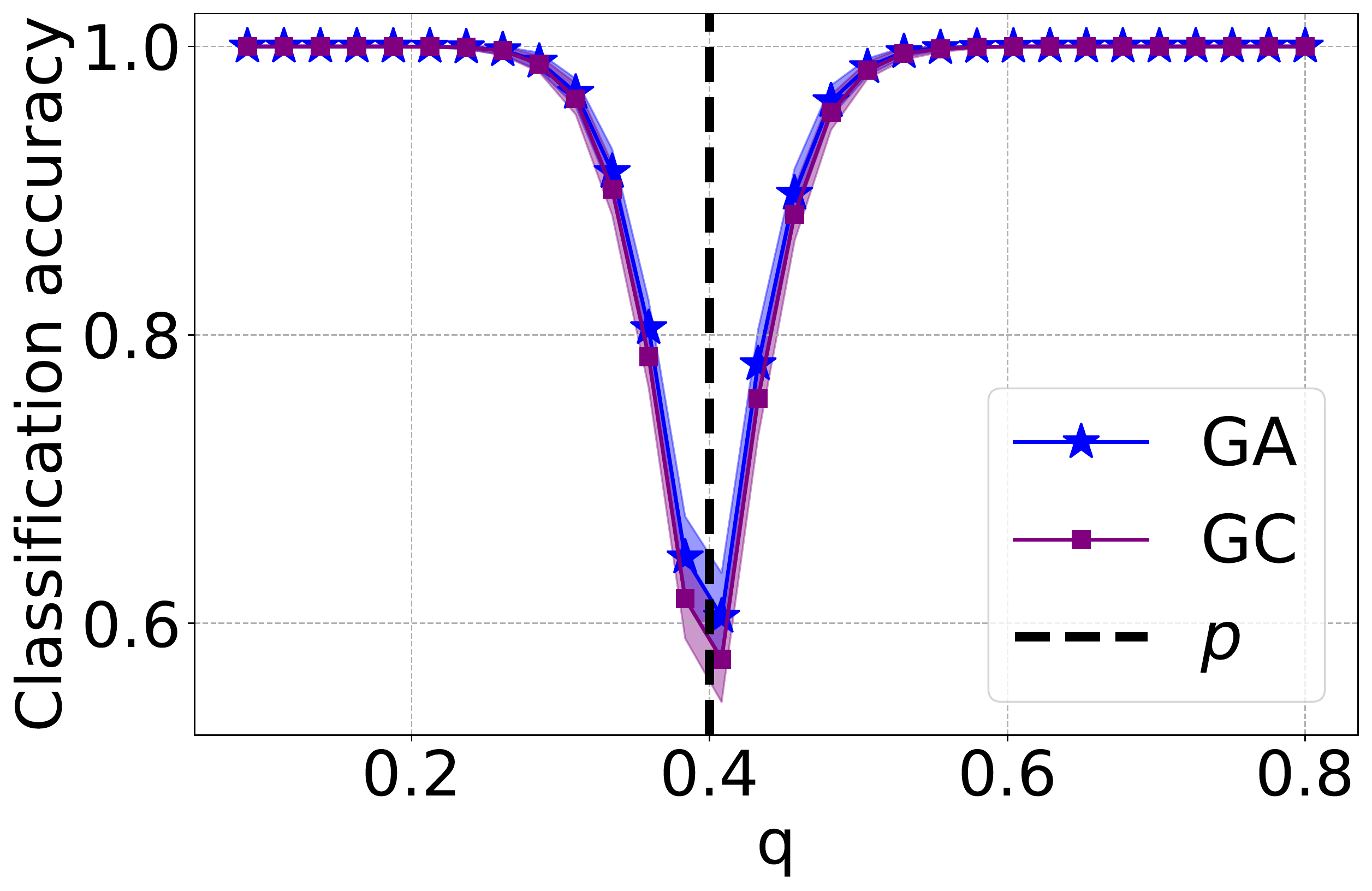}
          \caption{Positive result, part 1~\cref{thm:noisy}}
          \label{fig:noisy_edges_varying_q_positive}
      \end{subfigure}
      \begin{subfigure}[b]{0.49\columnwidth}
          \centering
          \includegraphics[width=\columnwidth]{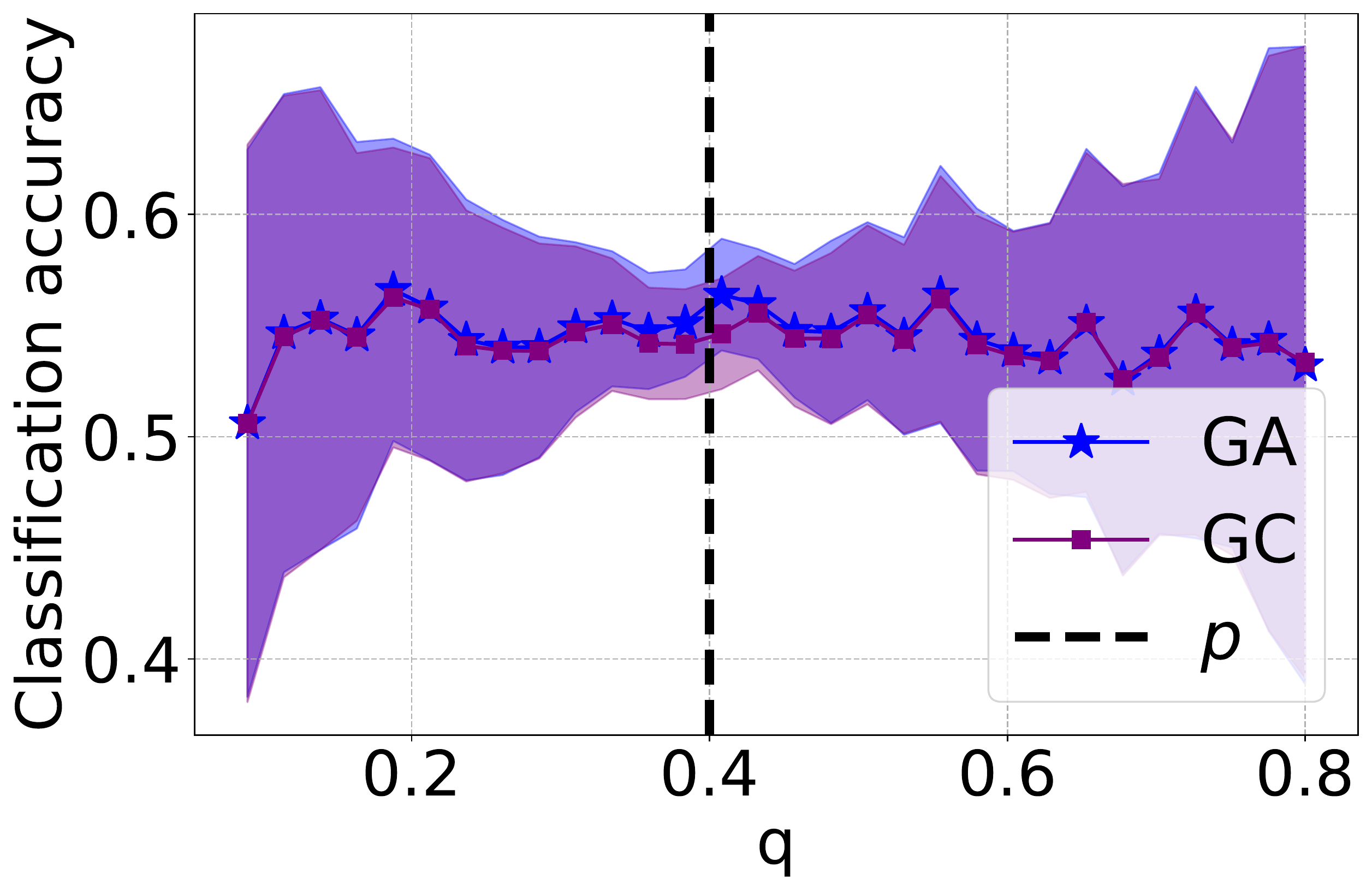}
          \caption{Negative result, part 2~\cref{thm:noisy}}
          \label{fig:noisy_edges_varying_q_negative}
      \end{subfigure}
      \caption{Demonstration of part 1 and 2 of~\cref{thm:noisy}. }\label{fig:noisy_edges_varying_q}
 \end{figure}

\textbf{Varying the distance of the means of the node features.} In this experiment we illustrate how the attention coefficients and the accuracy change as a function of the distance $\|\mu\|$. We fix $p=0.4$ and $q=0.33$ (this makes the graphs sufficiently noisy). We vary $\|\mu\|$ from $0.01\sigma \frac{p+q}{|p-q|}\sqrt{\log n / (n\max(p,q))}$ to $20\sigma \sqrt{\log n}$. \cref{fig:noisy_edges_varying_distance_gammas} illustrates~\cref{prop:gammas_noisy}. We observe empirically that intra- and inter-$\gamma$ concentrate around the same value and they are both approximately uniform as claimed in~\cref{prop:gammas_noisy}. \cref{fig:noisy_edges_varying_distance_accuracy} illustrates a combination of parts 1 and 2 of~\cref{thm:noisy}. We observe that graph attention has very similar performance to graph convolution.
 \begin{figure}[ht!]
      \centering
      \begin{subfigure}[b]{0.49\columnwidth}
          \centering
          \includegraphics[width=\columnwidth]{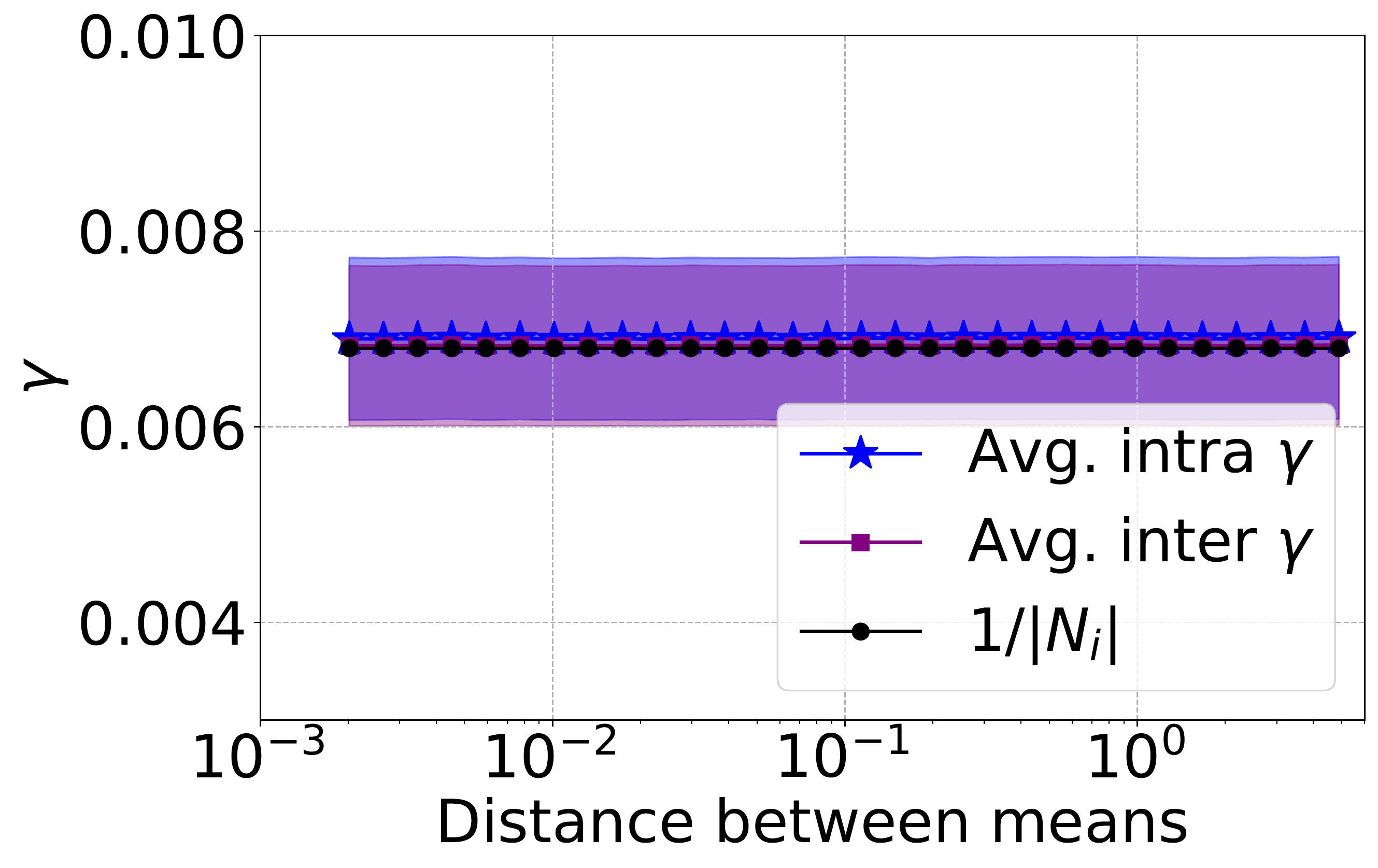}
          \caption{\cref{prop:gammas_noisy}}
          \label{fig:noisy_edges_varying_distance_gammas}
      \end{subfigure}
      \begin{subfigure}[b]{0.49\columnwidth}
          \centering
          \includegraphics[width=\columnwidth]{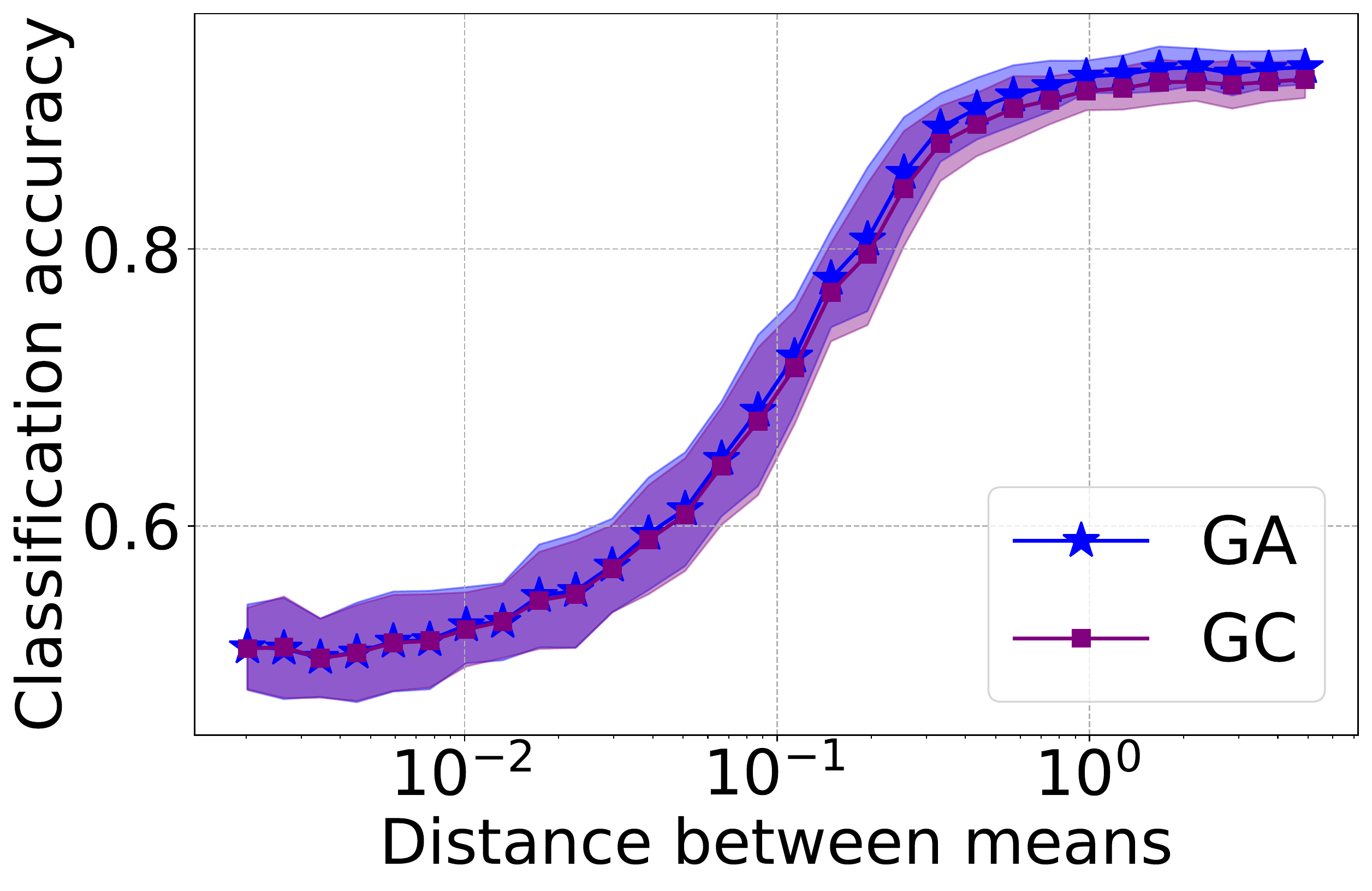}
          \caption{Parts 1 and 2 of~\cref{thm:noisy}}
          \label{fig:noisy_edges_varying_distance_accuracy}
      \end{subfigure}
      \caption{This figure demonstrates how the attention coefficients and the accuracy are changing as a function of the distance between the means of the node features. }\label{fig:noisy_edges_varying_distance}
 \end{figure}

\textbf{Attention coefficients for varying the distance of the means of the edge features.} In this experiment we demonstrate how the attention coefficients scale as a function of the distance between the means of the edge features. This is basically a combination of the results in~\cref{prop:gammas} and~\cref{prop:gammas_noisy}. We fix $p=0.4$ and $q=0.33$ (this makes the graphs sufficiently noisy). We vary $\|\nu\|$ from $0.01\zeta \sqrt{\log (0.5n^2(p+q))}$ to $30\zeta \sqrt{\log (0.5n^2(p+q))}$. The results are presented in~\cref{fig:varying_distance_of_edge_features_gammas}. We observe that for small distance between the means of the edge features the attention coefficients concentrate around the uniform measure, see~\cref{prop:gammas_noisy}, while as the distance increases then the intra-$\gamma$ increase up to the value $2/np$, see~\cref{prop:gammas} and the inter-$\gamma$ become very small, see~\cref{prop:gammas}. 

 \begin{figure}[ht!]
      \centering
      \begin{subfigure}[b]{0.5\columnwidth}
          \centering
          \includegraphics[width=\columnwidth]{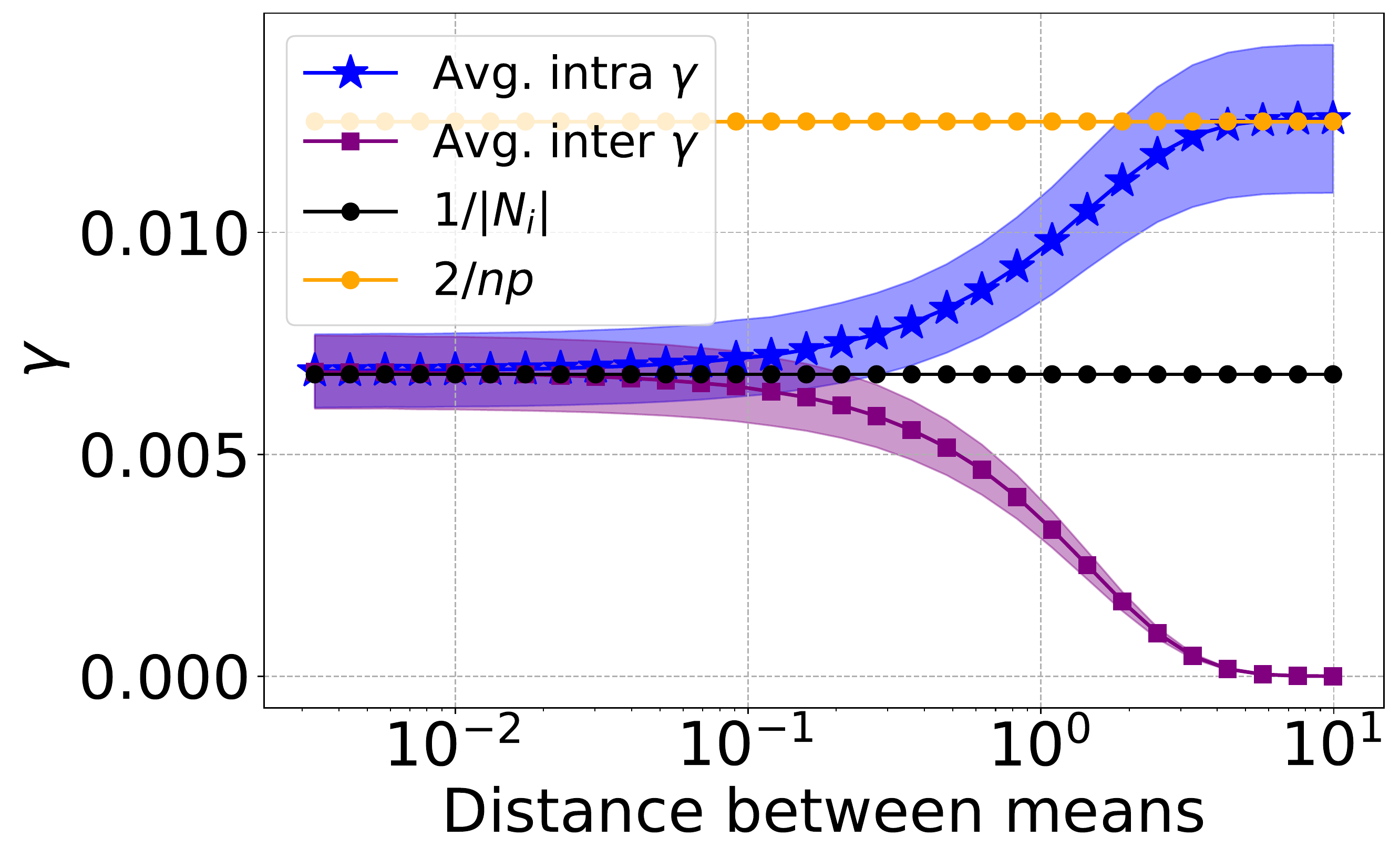}
      \end{subfigure}
      \caption{\cref{prop:gammas} and~\cref{prop:gammas_noisy}. }\label{fig:varying_distance_of_edge_features_gammas}
 \end{figure}
 
\section{Experiments on Real Data}


We use the popular real data Amazon Computers, Amazon Photos, Cora, PubMed, and CiteSeer. These data are publicly available and can be downloaded from~\cite{FL2019}. The datasets come with multiple classes, however, for each of our experiments we do a one-v.s.-all  classification. This is a semi-supervised problem, only a fraction of the training nodes have labels. The rest of the nodes are used for measuring prediction accuracy. For Cora, PubMed and CiteSeer we use the train/test split that is given by~\cite{FL2019}. For Amazon Computers and Photos where the train/test split is not given we sample randomly $1\%$ of the nodes for training, the rest are used for testing. For each dataset we split the number of features into the first half and second half. The former is used for node features and the latter is used for edge features. The edge features are given by the concatenation of the features adjacent to the edge.\footnote{In the appendix we also show experiments without the feature split where all features are used as node features. In this case we use the concatenated edge features as described in the original paper~\cite{Velickovic2018GraphAN} and also in our Preliminaries section. We observe similar performance as in edge splitting.} We present results averaged over $5$ trials to account for randomness in initialization of parameters.

We observe that graph attention is giving similar attention mass to intra- and inter-edges as graph convolution which uses uniform weights as attention coefficients. This also explains the that there is no clear winner between graph attention and graph convolution when it comes to performance. In~\cref{table:1} we illustrate these observations. We present results for class $0$ and $1$ of each dataset. The experiments for the other classes are shown in the appendix. In~\cref{table:1} the intra-mass column is the percentage of the total probability mass assigned to intra-edges by intra-edge attention coefficients. Similarly for the inter-mass column. We observe that graph attention and graph convolution assign similar percentage of the mass to intra- and inter-edges. This results in graph attention performing similarly to graph convolution. We observe the same results for the rest of the classes in the appendix. It is important to mention that the attention coefficients of graph attention might not be exactly uniform or up to a constant uniform since our assumptions for CSBM might be violated, however, we still observe that graph attention has overall the same allocation of intra- and inter-mass to graph convolution. Finally, we observe that the majority of mass is assigned to intra-edges. This is expected since we are solving one-v.s.-all classification problems. However, we would still expect graph attention to have much better mass allocation for inter-edges than graph convolution, but it does not.

\begin{table}[ht!]
\caption{Percentages of intra- and inter-mass allocation for graph attention (GA) and graph convolution (GC), and test accuracy.} \vspace{0.1cm}
\centering
 \begin{tabular}{||c | c c c c c ||} 
 \hline
 data & class & method & intra-m & inter-m  & acc.\\ [0.5ex] 
 \hline\hline
\multirow{4}{*}{\rotatebox[origin=c]{90}{Amzn Co.}} & \multirow{2}{*}{$0$} & GC & $98.7$ & $1.3$ & $96.8$ \\
& & GA & $98.1$ & $1.9$ & $96.7$ \\ \cline{2-6}
& \multirow{2}{*}{$1$} & GC & $93.6$ & $6.4$ & $91.4$ \\
& & GA & $93.3$ & $6.7$ & $88.7$ \\
\hline \hline
\multirow{4}{*}{\rotatebox[origin=c]{90}{Amzn Ph.}} & \multirow{2}{*}{$0$} & GC & $99.0$ & $1.0$ & $95.1$ \\
& & GA & $98.8$ & $1.2$ & $96.0$ \\ \cline{2-6}
& \multirow{2}{*}{$1$} & GC & $94.9$ & $5.1$ & $94.2$ \\
& & GA & $95.5$ & $4.5$ & $89.0$ \\
\hline \hline
\multirow{4}{*}{\rotatebox[origin=c]{90}{Cora}} & \multirow{2}{*}{$0$} & GC & $94.2$ & $5.8$ & $89.5$ \\
& & GA & $94.7$ & $5.3$ & $88.6$ \\ \cline{2-6}
& \multirow{2}{*}{$1$} & GC & $97.3$ & $2.7$ & $92.3$ \\
& & GA & $97.4$ & $2.6$ & $93.7$ \\
\hline \hline
\multirow{4}{*}{\rotatebox[origin=c]{90}{PubMed}} & \multirow{2}{*}{$0$} & GC & $92.2$ & $7.8$ & $82.1$ \\
& & GA & $92.1$ & $7.9$ & $82.8$ \\ \cline{2-6}
& \multirow{2}{*}{$1$} & GC & $91.4$ & $8.6$ & $58.8$ \\
& & GA & $90.6$ & $9.4$ & $59.5$ \\
\hline \hline
\multirow{4}{*}{\rotatebox[origin=c]{90}{CiteSeer}} & \multirow{2}{*}{$0$} & GC & $94.3$ & $5.7$ & $92.0$ \\
& & GA & $94.4$ & $5.6$ & $91.6$ \\ \cline{2-6}
& \multirow{2}{*}{$1$} & GC & $92.6$ & $7.4$ & $82.6$ \\
& & GA & $92.8$ & $7.2$ & $82.8$ \\
\hline 
\end{tabular}
\label{table:1}
\end{table}
\section{Conclusion}
We study conditions on the parameter of the CSBM with edge features such that graph attention can achieve or fail perfect node classification. We split our results into two parts. The first part is when the edge features are clean and the second part is when the edge features are noisy. If the edge features are clean we show that graph attention is able to distinguish intra from inter attention coefficients which allows us to prove that the condition for perfect classification is better than that of graph convolution. If the edge features are noisy we show that the majority of attention coefficients are up to a constant uniform which then implies that graph attention performs similarly to graph convolution.

Working with synthetic data models has a lot of limitations due to their gap with real data, but they also have very important advantages such as providing a solid insight about the performance of methods. It is more productive to discuss limitations in our analysis for potential future researchers who might wish to extend the present work. A limitation of our analysis is the assumption of a fixed $w$ for our negative results. It would be interesting future work to set $w$ to be the optimal solution of some expected loss function. Finally, it would be interesting to study the performance of methods beyond perfect classification.

\bibliography{references}
\bibliographystyle{plain}

\clearpage
\appendix 
\section{Elementary results}
Since $(\epsilon_i)_{i\in[n]}\sim\mbox{Ber}(\frac12)$, by the Chernoff bound~\cite[Section 2]{Vershynin:2018} we have that the number of nodes in each class satisfies
\begin{equation*}
    P\left(\frac{|C_0|}{n},\frac{|C_1|}{n}\in\left[\frac{1}{2}-o_n(1),\frac{1}{2}+o_n(1)\right]\right)\geq1-1/\poly(n).
\end{equation*}

\begin{proposition}[Concentration of degrees,~\cite{BFJ2021}]\label{prop:degree-conc}
Assume that the graph density is $p,q = \Omega\left(\frac{\log^2n}{n}\right)$. Then for any constant $c>0$, with probability at least $1-2n^{-c}$, we have for all $i\in[n]$ that
\begin{align*}
    |N_i| &= \frac n2(p+q)(1 \pm o_n(1))
\end{align*}
where the error term $o_n(1) = O\left({\sqrt{\frac{c}{\log n}}}\right)$.
\end{proposition}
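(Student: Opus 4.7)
The plan is a straightforward conditional Chernoff plus union bound argument. For each fixed node $i$, conditioned on the class memberships $(\epsilon_k)_{k\in[n]}$, the degree $|N_i| = \sum_{j\neq i} a_{ij}$ is a sum of independent Bernoulli random variables, each of parameter $p$ if $j$ lies in the same class as $i$ and $q$ otherwise. First I would invoke the elementary class-size fact stated immediately above the proposition: $|C_0|, |C_1| = (n/2)(1\pm o_n(1))$ with probability at least $1-1/\poly(n)$. On this event, for any $i\in C_k$ the conditional mean is
\begin{equation*}
    \mu_i := \mathbb{E}[|N_i| \mid (\epsilon_k)] = (|C_k|-1)p + |C_{1-k}|q = \tfrac{n}{2}(p+q)(1\pm o_n(1)),
\end{equation*}
which under the density assumption $p,q=\Omega(\log^2 n/n)$ is of order $\Omega(\log^2 n)$.

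Second, I would apply the multiplicative Chernoff bound $\Pr(|X-\mu|\ge t\mu)\le 2\exp(-t^2\mu/3)$ to each $|N_i|$ for $t\in(0,1)$. Choosing $t=\sqrt{9c\log n/\mu_i}$ yields $t = O(\sqrt{c/\log n})$ since $\mu_i=\Omega(\log^2 n)$, and gives a per-node failure probability of at most $2n^{-3c}$. A union bound over all $n$ nodes produces total failure probability at most $2n^{1-3c}$; after absorbing the additional factor of $n$ into the constant (i.e.\ replacing $c$ by $c+1$ in the internal choice of $t$), one obtains the advertised $1-2n^{-c}$ bound with multiplicative error $1\pm o_n(1)$ of the stated order $O(\sqrt{c/\log n})$. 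Combining with the class-size event, which has probability $1-1/\poly(n) \gg 1 - 2n^{-c}$, one concludes $|N_i| = (n/2)(p+q)(1\pm o_n(1))$ simultaneously for all $i\in[n]$.

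There is no real obstacle here: the argument is entirely bookkeeping. The only two items requiring care are (i) showing that the $o_n(1)$ contribution from class-size fluctuations and the $o_n(1)$ from Chernoff combine into a single error of order $O(\sqrt{c/\log n})$, which follows because the former is $o_n(1)$ at rate $1/\poly(n)$ and is dominated by the latter, and (ii) aligning the constants in the Chernoff exponent with the prescribed rate. Since the proposition is quoted from \cite{BFJ2021}, the cleanest route is to reference their Lemma directly, but the three-step recipe above (condition on class sizes, apply multiplicative Chernoff with the correct choice of deviation, union bound over nodes) gives a self-contained derivation.
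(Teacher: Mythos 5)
Your proposal is correct and follows essentially the same route as the paper: a multiplicative Chernoff bound for $|N_i|$ at deviation $\delta=\Theta\bigl(\sqrt{c\log n/(n(p+q))}\bigr)=O\bigl(\sqrt{c/\log n}\bigr)$ (valid since $n(p+q)=\Omega(\log^2 n)$), followed by a union bound over the $n$ nodes with the constant absorbed so the overall failure probability is $2n^{-c}$. The only difference is your extra conditioning step: the paper never invokes class-size concentration, since marginally each indicator $a_{ij}$ is Bernoulli with parameter $\tfrac{p+q}{2}$, so $|N_i|$ is directly a sum of independent Bernoullis with mean $\tfrac{n}{2}(p+q)$; your conditional version is equally valid because the relative class-size fluctuation is $O\bigl(\sqrt{\log n/n}\bigr)$, which is indeed dominated by the $O\bigl(\sqrt{c/\log n}\bigr)$ Chernoff error.
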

\begin{proof}
Note that $|N_i|$ is a sum of $n$ Bernoulli random variables, hence, we have by the Chernoff bound~\cite[Section 2]{Vershynin:2018} that
\[
P\left(|N_i| \in \left[\frac n2(p+q)(1-\delta), \frac n2(p+q)(1+\delta)\right]^c \right)\leq 2\exp(-Cn(p+q)\delta^2),
\]
for some $C>0$. We now choose $\delta=\sqrt{\frac{(c+1)\log n}{Cn(p+q)}}$ for a large constant $c>0$. Note that since $p,q=\Omega\left(\frac{\log^2 n}{n}\right)$, we have that $\delta = O\left(\sqrt{\frac{c}{\log n}}\right) = o_n(1)$. Then following a union bound over $i\in[n]$, we obtain that with probability at least $1-2n^{-c}$,
\begin{align*}
|N_i| = \frac n2(p+q) \left(1 \pm O\Big(\sqrt{\frac{c}{\log n}}\Big)\right)\; \text{for all } i\in[n]
\end{align*}
\end{proof}

\begin{proposition}[Concentration of number of neighbors in each class]\label{prop:class_degree}
Assume that the graph density is $p,q = \Omega\left(\frac{\log^2 n}{n}\right)$. Then for any constant $c>0$, with probability at least $1 - 2n^{-c}$,
\begin{align*}
    |N_i\cap C_l| &= \frac n2 p(1\pm o_n(1)) \quad \text{for all } i\in C_l \mbox{ and } l\in\{0,1\} \\
    |N_i\cap C_l| &= \frac n2 q(1\pm o_n(1)) \quad \text{for all } i\in C_l^c \mbox{ and } l\in\{0,1\}
\end{align*}
where the error term $o_n(1) = O\left({\sqrt{\frac{c}{\log n}}}\right)$.
\end{proposition}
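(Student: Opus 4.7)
\textit{Proof plan}. The argument is a direct adaptation of the proof of \cref{prop:degree-conc}, applied separately to the intra-class and inter-class neighbor counts. First I would condition on the high-probability event from the opening paragraph of the appendix that $|C_0|, |C_1| = \frac{n}{2}(1 \pm o_n(1))$; this holds with probability at least $1 - 1/\poly(n)$ and is absorbed into the final failure probability.

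Next, fix $i \in C_l$. By the definition of the CSBM, the indicator variables $\{a_{ij} : j \in C_l \setminus \{i\}\}$ are independent Bernoulli$(p)$, and $\{a_{ij} : j \in C_l^c\}$ are independent Bernoulli$(q)$, all mutually independent of each other. Hence $|N_i \cap C_l| = \sum_{j \in C_l \setminus \{i\}} a_{ij}$ is a sum of $|C_l|-1 = \frac{n}{2}(1\pm o_n(1))$ independent Bernoulli$(p)$ variables with mean $\frac{n}{2} p (1 \pm o_n(1))$, and $|N_i \cap C_l^c|$ is a sum of $|C_l^c| = \frac{n}{2}(1\pm o_n(1))$ independent Bernoulli$(q)$ variables with mean $\frac{n}{2} q (1 \pm o_n(1))$.

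Applying the multiplicative Chernoff bound (as in \cref{prop:degree-conc}) to each sum with deviation parameter $\delta_p = \sqrt{(c+1)\log n / (C n p)}$ for the intra-class count and $\delta_q = \sqrt{(c+1)\log n / (C n q)}$ for the inter-class count yields
\[
P\left(\bigl||N_i \cap C_l| - \tfrac{n}{2} p\bigr| \ge \tfrac{n}{2} p \cdot \delta_p \right) \le 2\exp(-Cnp\,\delta_p^2) \le 2 n^{-(c+1)},
\]
and analogously for $|N_i \cap C_l^c|$ with parameter $q$. The assumption $p, q = \Omega(\log^2 n / n)$ ensures $\delta_p, \delta_q = O(\sqrt{c/\log n}) = o_n(1)$, matching the claimed error term. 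Finally, a union bound over all $n$ nodes and the two cases (intra and inter) gives the stated $1 - 2 n^{-c}$ lower bound on the probability, after combining with the event on class sizes.

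The argument is entirely routine; there is no real obstacle. The only mild subtlety is handling the conditioning on $|C_l|$: since the class sizes concentrate to $n/2(1 \pm o_n(1))$, the mean and the number of Bernoulli terms both absorb the $(1 \pm o_n(1))$ factor without changing the order of the final error, so the two $o_n(1)$ terms can be combined into a single $o_n(1)$ of the same order $O(\sqrt{c/\log n})$.
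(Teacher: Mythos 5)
Your proposal is correct and follows essentially the same route as the paper: a multiplicative Chernoff bound on the Bernoulli sum $|N_i\cap C_l|$ with deviation parameter $\delta \asymp \sqrt{c\log n/(np)}$ (resp.\ $q$), followed by a union bound over the $n$ nodes, with the assumption $p,q=\Omega(\log^2 n/n)$ guaranteeing $\delta = O(\sqrt{c/\log n})$. The only difference is that you explicitly condition on the class-size concentration $|C_l| = \frac{n}{2}(1\pm o_n(1))$ before applying Chernoff, whereas the paper works directly with the mean $\frac{n}{2}p$; since the class-size fluctuation is of order $\sqrt{\log n/n}$, lower order than the claimed error, this is only a matter of bookkeeping and does not change the argument.
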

\begin{proof}
For any two distinct nodes $i,j\in[n]$ we have that $|N_i\cap C_l| = \sum_{j \in C_l}a_{ij}$. This is a sum of independent Bernoulli random variables, with mean $\E|N_i \cap C_l| = \frac{n}{2}p$ if $i\in C_l$ and $\E|N_i \cap C_l| = \frac{n}{2}q$ if $i\in C_l^c$. Denote $\mu_{ij}=\E|N_i\cap C_l|$. Therefore, by the Chernoff bound \cite[Section 2]{Vershynin:2018}, we have for a fixed pair of nodes $(i,j)$ that
\begin{align*}
P\left(|N_i\cap C_l| \in \left(\mu_{ij}(1-\delta_{ij}), \mu_{ij}(1+\delta_{ij})\right)^c\right)\leq 2\exp(-C\mu_{ij}\delta_{ij}^2)
\end{align*}
for some constant $C>0$. We now choose $\delta_{ij}=\sqrt{\frac{(c+2)\log n}{C\mu_{ij}}}$ for any large $c>0$. Note that since $p,q=\Omega\left(\frac{\log^2 n}{ n}\right)$, we have that $\delta_{ij} = O\left(\sqrt{\frac{c}{\log n}}\right) = o_n(1)$. Then following a union bound over all nodes $i\in [n]$, we obtain that with probability at least $1 - 2n^{-c}$, for all pairs of nodes $(i,j)$ we have
\begin{align*}
    |N_i\cap C_l| &= \frac n2 p(1\pm o_n(1)) \quad \text{for all } i\in C_l \mbox{ and } l\in\{0,1\} \\
    |N_i\cap C_l| &= \frac n2 q(1\pm o_n(1)) \quad \text{for all } i\in C_l^c \mbox{ and } l\in\{0,1\}.
\end{align*}
\end{proof}

\begin{proposition}[Concentration of uncommon neighbors]\label{prop:temp}
Assume that the graph density parameters satisfy $\max\{p,q\} \le 1-36 \log n /n$, then with probability at least $1 - n^{-1}$ we have that for all $i,j\in [n]$, $i \neq j$,
\begin{enumerate}
\item If $i,j \in C_l$ and $l \in \{0,1\}$, then
\begin{align*}
    |(N_i \cup N_j)\backslash (N_i\cap N_j)| 
    \ge \frac{n}{2}(p+q - p^2 -q^2)(1-o_n(1)),
\end{align*}
moreover,
\begin{align*}
    |((N_i \cup N_j)\backslash (N_i\cap N_j)) \cap C_l| &\ge \frac{n}{2}(p - p^2)(1-o_n(1)), \\
    |((N_i \cup N_j)\backslash (N_i\cap N_j)) \cap C_l^c| &\ge \frac{n}{2}(q - q^2)(1-o_n(1)).
\end{align*}
\item If $i \in C_l$, $j \in C_l^c$ and $l \in \{0,1\}$, then
\begin{align*}
    |(N_i \cup N_j)\backslash (N_i\cap N_j)| 
    \ge \frac{n}{2}(p+q - 2pq)(1-o_n(1)).
\end{align*}
\end{enumerate}
\end{proposition}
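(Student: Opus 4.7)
The plan is to recast $|(N_i \cup N_j)\setminus(N_i \cap N_j)|$ as a sum of independent Bernoulli indicators, apply the multiplicative Chernoff bound with $\delta = 1/2$, and union bound over the $O(n^2)$ pairs. The reformulation hinges on the identity $(N_i \cup N_j)\setminus(N_i \cap N_j) = \{k \in [n]\setminus\{i,j\} : a_{ik} \neq a_{jk}\}$. Setting $Y_k := \mathbbm{1}\{a_{ik} \neq a_{jk}\}$, the variables $Y_k$ are independent (conditional on the class labels) because $a_{ik}$ and $a_{jk}$ involve disjoint edges of the SBM. The relevant success probabilities are computed by cases: if $i,j \in C_l$ then $P(Y_k=1) = 2p(1-p)$ for $k \in C_l$ and $2q(1-q)$ for $k \in C_l^c$; if $i \in C_l, j \in C_l^c$ then $P(Y_k=1) = p(1-q)+(1-p)q = p+q-2pq$ for every $k$.

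Using the class size concentration $|C_l| = (n/2)(1 \pm o_n(1))$ from the start of the appendix, the expectation $E[X]$ of $X := |(N_i \cup N_j)\setminus(N_i\cap N_j)|$ equals $n(p+q-p^2-q^2)(1\pm o_n(1))$ in Case 1 and $n(p+q-2pq)(1 \pm o_n(1))$ in Case 2; the two restricted quantities in Case 1 have means $np(1-p)(1\pm o_n(1))$ and $nq(1-q)(1\pm o_n(1))$ respectively. Now apply the multiplicative lower tail $\Pr(X \leq (1-\delta)E[X]) \leq \exp(-\delta^2 E[X]/2)$ with $\delta = 1/2$; this yields $X \geq E[X]/2$, which, plugged into the mean estimates, reproduces exactly the claimed lower bounds $(n/2)(\cdot)(1-o_n(1))$. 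The per-pair failure probability is thus $\exp(-E[X]/8)$, so to union bound over the $O(n^2)$ pairs (and the three restricted sums in Case 1, resp.\ the one in Case 2) it suffices to show that every mean we need clears roughly $24\log n$.

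The two standing hypotheses provide this slack. Because $\max\{p,q\} \leq 1-36\log n/n$, when $p$ is close to $1$ we have $np(1-p) \approx n(1-p) \geq 36\log n$; because $p = \Omega(\log^2 n/n)$, when $p$ is at the small end we have $np(1-p) \approx np \geq \log^2 n$; in the interior the product is even larger. The same holds for $q$ and for the mixed quantity $n(p+q-2pq)$. Thus each mean exceeds $24\log n$ with room to spare, the per-pair failure is at most $n^{-3}$ (in fact $n^{-4}$), and a union bound over pairs delivers total failure probability at most $n^{-1}$.

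The main obstacle is the calibration of the constant $36$ in the hypothesis $\max\{p,q\} \leq 1-36\log n/n$. It is exactly what is required so that the smallest restricted mean at its worst boundary value still clears the Chernoff threshold needed for an $O(n^2)$ union bound to yield a $1-n^{-1}$ guarantee. Checking the mean lower bound in all regimes, especially when $p$ and $q$ lie on opposite sides (one near $\log^2 n/n$, the other near $1 - 36\log n/n$), is a short but careful case analysis; once handled, the rest of the argument is a direct application of Chernoff to independent Bernoullis.
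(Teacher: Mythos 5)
Your proposal is correct and follows essentially the same route as the paper: both reduce $|(N_i\cup N_j)\backslash(N_i\cap N_j)|$ to a sum of independent Bernoulli indicators with success probabilities $2p(1-p)$, $2q(1-q)$, or $p+q-2pq$, apply the multiplicative Chernoff lower tail with an effective $\delta\approx 1/2$ (the paper fixes $\delta=3\sqrt{\log n/\E|J_{ij}|}$ and uses $\max\{p,q\}\le 1-36\log n/n$ to show $\delta\le\tfrac12(1+o_n(1))$, which is equivalent to your fixing $\delta=1/2$ and checking the mean clears $\approx 24\log n$), and finish with a union bound over the $O(n^2)$ pairs. Your explicit treatment of the restricted and mixed cases is a slightly more complete write-up of what the paper dismisses as "analogous."
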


\begin{proof}
Consider an arbitrary pair of nodes $i,j\in C_0$ such that $i\neq j$. The probability that a node $l \in [n]$ is a neighbor of exactly one of $i,j$ is $2p(1-p)$ if $l\in C_0$ and $2q(1-q)$ if $l\in C_1$. Let $J_{ij}:=(N_i \cup N_j)\backslash (N_i\cap N_j)$. Then $J_{ij}$ is a sum of independent Bernoulli random variables and $\E|J_{ij}| = n(p(1-p) + q(1-q))$. Hence, by the multiplicative Chernoff bound we have that for any $0 < \delta \le 1$,
\[
    P\Big(|J_{ij}| \le \E|J_{ij}|(1-\delta)\Big) \le \exp\left(-\frac{1}{3} \delta^2\E|J_{ij}|  \right).
\]
In what follows we find a suitable choice for $\delta$. Because $\max\{p,q\} \le 1-36 \log n/n$, we have that
\[
    p+q-p^2-q^2 \ge \max\{p - p^2, q - q^2\} \ge 1- \frac{36 \log n}{n} - \left(1-\frac{36 \log n}{n}\right)^2 = \frac{36 \log n}{n}\left(1 - \frac{36 \log n}{n}\right)
\]
and hence
\[
     3\sqrt{\frac{\log n}{\E |J_{ij}|}} = 3\sqrt{\frac{\log n}{n(p+q-p^2-q^2)}} \le \frac{1}{2\sqrt{1 - \frac{36 \log n}{n}}} \le \frac{1}{2}\left(\frac{1}{1-6\sqrt{\frac{\log n}{n}}}\right) \le \frac{1}{2}\left(1 + O\left(\sqrt{\frac{\log n}{n}}\right)\right).
\]
Therefore we may choose $\delta = 3\sqrt{ \log n} /\sqrt{\E |J_{ij}|}$ and apply the union bound over all $i,j \in [n]$ to get that with probability at least $1 - n^{-1}$, for all $i,j \in C_0$, $i \neq j$,
\begin{align*}
    |J_{ij}| \ge \E |J_{ij}|(1-\delta) = n(p + q - p^2 - q^2)\left(1- \frac{1}{2} -O\left(\sqrt{\frac{\log n}{n}}\right)\right) = \frac{n}{2}(p + q - p^2 - q^2)(1-o_n(1)),
\end{align*}
which proves the claim on the cardinality of $|(N_i \cup N_j)\backslash (N_i\cap N_j)|$ for $i,j \in C_l$ and $l=0$. The other cases follow analogously.
\end{proof}

\section{Proofs for clean edge features}

Without loss of generality we ignore the self-loops in the graph. This is because adding self-loops only introduces an additional independent random variable, which changes the results up to an unimportant constant. Moreover, for proofs that rely on constructing the function $\Psi$ we provide the general definition for any $p,q$. However, since the proofs for $p>q$ and $q>p$ are almost identical, we provide the proofs for the case $p>q$. The proof for $q>p$ is different up to flipping signs and considering the fact that on expectation the inter-edges are more than the intra-edges.

\begin{proposition}\label{a-prop:gammas}
Let $(A,X, E)\sim CSBM(n,p,q,\mu,\nu,\sigma, \zeta)$, and assume that $\|\nu\|\ge \omega(\zeta \sqrt{\log (|\mathcal{E}|)})$. If $p>q$, we have that
    \begin{equation*}
    \gamma_{ij}=
        \begin{cases}
            \frac{2}{n p}(1\pm o_n(1)) & i,j\in C_0 \mbox{ or } i,j\in C_1\\
            o\left(\frac{1}{n(p+q)}\right) & \mbox{otherwise.}\\
        \end{cases}
    \end{equation*}
    with probability $1-o_{n}(1)$. If $q>p$, we have that
    \begin{equation*}
    \gamma_{ij}=
        \begin{cases}
            o\left(\frac{1}{n(p+q)}\right) & i,j\in C_0 \mbox{ or } i,j\in C_1\\
            \frac{2}{n q}(1\pm o_n(1)) & \mbox{otherwise.}\\
        \end{cases}
    \end{equation*}
    with probability $1-o_{n}(1)$.
\end{proposition}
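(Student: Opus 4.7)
The plan is to construct an attention function $\Psi$ that saturates on the two Gaussian modes of the edge features, use uniform Gaussian concentration over $\mathcal{E}$ to make that saturation deterministic on a high-probability event, and then compute the softmax directly using the neighbor-count bounds from~\cref{prop:degree-conc,prop:class_degree}. Assume $p > q$ throughout; the case $q > p$ follows identically after flipping a sign. Set $s := \nu/\|\nu\|$ and $\Psi(E_{(i,j)}) := \phi(s^T E_{(i,j)})$, where $\phi(t) := \max\{-L, \min\{L, t\}\}$ is the $1$-Lipschitz symmetric clip at level $L := \|\nu\|/2$, so $\Psi$ is a composition of a Lipschitz function with a linear function as allowed. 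Since $s^T E_{(i,j)} \sim N(\|\nu\|, \zeta^2)$ on intra-edges and $N(-\|\nu\|, \zeta^2)$ on inter-edges, a Gaussian tail bound combined with a union bound over the $|\mathcal{E}|$ edges yields a uniform deviation $|s^T E_{(i,j)} - \E s^T E_{(i,j)}| \le C\zeta\sqrt{\log|\mathcal{E}|}$ with probability $1 - o_n(1)$, for an absolute constant $C$. Because $\|\nu\| = \omega(\zeta\sqrt{\log|\mathcal{E}|})$, this forces $s^T E_{(i,j)} \ge L$ on every intra-edge and $s^T E_{(i,j)} \le -L$ on every inter-edge simultaneously, hence $\Psi(E_{(i,j)}) = L$ uniformly on intra-edges and $\Psi(E_{(i,j)}) = -L$ uniformly on inter-edges.

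Next, I would plug these constant values into the softmax. For any node $i \in C_l$ and intra-edge $(i,j)$,
\begin{equation*}
\gamma_{ij} \;=\; \frac{e^{L}}{|N_i \cap C_l|\, e^{L} + |N_i \cap C_l^c|\, e^{-L}} \;=\; \frac{1}{|N_i \cap C_l| + |N_i \cap C_l^c|\, e^{-2L}}.
\end{equation*}
Because $e^{-2L} = e^{-\|\nu\|}$ decays faster than any inverse polynomial while $|N_i \cap C_l^c| \le n$, the second summand in the denominator is $o_n(|N_i \cap C_l|)$, and the class-degree concentration $|N_i \cap C_l| = (np/2)(1 \pm o_n(1))$ from~\cref{prop:class_degree} then gives $\gamma_{ij} = (2/(np))(1 \pm o_n(1))$. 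For an inter-edge $(i,j)$ the denominator is identical but the numerator is $e^{-L}$, so $\gamma_{ij} = \Theta(e^{-\|\nu\|}/(np)) = o(1/(n(p+q)))$ as claimed. A final union bound over the $O(1)$ high-probability events completes the $1 - o_n(1)$ conclusion.

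The main obstacle, and the reason for using the saturating clip rather than the bare linear $\Psi(E) = s^T E$ suggested in the main-text sketch, is the log-normal multiplicative fluctuation that Gaussian noise introduces upon exponentiation: the variables $\exp(s^T E_{(i,j)})$ have constant-order relative variance, which would weaken the target $(1 \pm o_n(1))$ ratio to merely $\Theta(1)$. Clipping at any level $L$ strictly between $C\zeta\sqrt{\log|\mathcal{E}|}$ and $\|\nu\|$ kills this noise on the high-probability event while preserving the exponential gap between intra- and inter-edge $\Psi$-values that is needed for the softmax to concentrate intra-attention mass on a single class.
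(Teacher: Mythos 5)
Your proof is correct and follows the same skeleton as the paper's: project the edge feature onto the direction of $\nu$, use a Gaussian tail bound plus a union bound over $\mathcal{E}$ to get uniform separation of the scores on intra- versus inter-edges, and then evaluate the softmax using the class-degree concentration of \cref{prop:class_degree}. The one genuine difference is the device used to prevent the Gaussian fluctuations from surviving exponentiation: you saturate the score with a clip at level $L=\|\nu\|/2$, so that on the good event $\Psi$ is exactly $\pm L$, whereas the paper keeps $\Psi$ linear but rescales it, $\Psi(E_{(i,j)})=\alpha s^T E_{(i,j)}$ with $\alpha$ chosen so that $\alpha\sqrt{\log|\mathcal{E}|}=o_n(1)$ while $\alpha\|\nu\|/\zeta=\omega_n(1)$; then the noise only contributes $\exp(\pm o_n(1))=1\pm o_n(1)$ factors in the denominator. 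Your diagnosis that the unscaled linear $\Psi$ from the main-text sketch would only yield $\Theta(1)$-accurate ratios is exactly right, and both the clip and the $\alpha$-scaling are legitimate fixes within the allowed class of $\Psi$ (Lipschitz composed with linear); the scaling has the cosmetic advantage of keeping $\Psi$ linear, the clip has the advantage of making the scores literally deterministic on the good event.

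One sentence in your argument is wrong as stated, though the conclusion it supports is fine: $e^{-2L}=e^{-\|\nu\|}$ does \emph{not} decay faster than every inverse polynomial under the hypothesis, since $\|\nu\|=\omega(\zeta\sqrt{\log|\mathcal{E}|})$ permits, e.g., $\|\nu\|=\log^{3/5}n$, in which case $e^{-\|\nu\|}\gg n^{-c}$ for every constant $c$; combined with the crude bound $|N_i\cap C_l^c|\le n$ this would not give $n e^{-\|\nu\|}=o(|N_i\cap C_l|)$ when $p$ is as small as $\log^2 n/n$. The repair is immediate and uses only what you already invoke: since $p>q$, \cref{prop:class_degree} gives $|N_i\cap C_l^c|=\frac{nq}{2}(1\pm o_n(1))\le |N_i\cap C_l|(1+o_n(1))$, hence $|N_i\cap C_l^c|e^{-2L}=o(|N_i\cap C_l|)$ simply because $e^{-\|\nu\|}=o_n(1)$; likewise the inter-edge bound needs only $e^{-\|\nu\|}\cdot\Theta(1/(np))=o(1/(n(p+q)))$, which holds because $p+q\le 2p$. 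With that sentence replaced, the proof is complete.
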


\begin{proof}
    We will construct a $\Psi$ function such that with high probability it is able to separate intra- from inter-edges and it concentrates around its mean. Then we will use $\Psi$ to show that the attention coefficients $\gamma_{ij}$ concentrate as well. Define $s:=\sign(p-q)\nu/(\zeta\|\nu\|)$ and  $\Psi(E_{(i,j)}):= \alpha s^T E_{(i,j)}$ where $\alpha > 0$ is a scaling parameter whose value we will determine later. We will prove the result for the case that $p>q$, the result for $q > p$ is similar. We will show that function $\Psi$ concentrates around its mean. First, let us rewrite $E_{(i,j)}:= (2\epsilon_i-1)(2\epsilon_j-1)\nu + \zeta f_{(i,j)}$. Thus we have that $\Psi(E_{(i,j)})=\alpha (2\epsilon_i-1)(2\epsilon_j-1) \|\nu\|/\zeta +  \alpha \tilde{f}_{(i,j)}$ where $f_{(i,j)} := \zeta s^T f_{(i,j)}$. Because $\|\zeta s\| = 1$ we know that $\tilde{f}_{(i,j)} \sim N(0,1)$, and thus using upper bound on the Gaussian tail probability, e.g. see \cite[Section 2]{Vershynin:2018}, we have that 
    $$
    P\left(|\tilde{f}_{(i,j)} | \ge 10 c\sqrt{\log(|\mathcal{E}|)} \right) \le \exp(100 \log(|\mathcal{E}|)),
    $$
    for some absolute constant $c > 0$. Taking a union bound over all $(i,j)\in\mathcal{E}$ we have that 
    $$
    P\left(|\tilde{f}_{(i,j)} | < 10 \sqrt{\log(|\mathcal{E}|)} \ \forall (i,j)\in \mathcal{E}\right) \ge |\mathcal{E}|\exp(100 \log(|\mathcal{E}|)) = \frac{1}{|\mathcal{E}|^{99}} = o_{|\mathcal{E}|}(1)
    $$
    Let $E^*$ denote that event that $|\tilde{f}_{(i,j)}| < 10c\sqrt{\log(|\mathcal{E}|)}$ for all $(i,j) \in \mathcal{E}$. Then the above inequality says that the event $E^*$ happens with probability at least $1-o_{|\mathcal{E}|}(1)$. Let us assume that $E^*$ happens. Then we have that
    \begin{equation*}
      \Psi(E_{(i,j)}) =  
      \begin{cases}
           \alpha \|\nu\|/\zeta \pm O(\alpha \sqrt{\log(|\mathcal{E}|)})  & i,j\in C_0 \mbox{ or } i,j\in C_1\\
          -\alpha \|\nu\|/\zeta \pm O(\alpha \sqrt{\log(|\mathcal{E}|)})  & \text{otherwise.}
    \end{cases}
    \end{equation*}
    If $i,j\in C_0$, then plugging in $\Psi$ into the attention coefficients $\gamma_{ij}$ we get that 
    \begin{align*}
        \gamma_{ij} 
        &= \frac{\exp\left(\alpha \|\nu\|/\zeta \pm O(\alpha \sqrt{\log(|\mathcal{E}|)})\right)}{\sum_{\ell\in N_i\cap C_0}\exp\left(\alpha \|\nu\|/\zeta \pm O(\alpha \sqrt{\log(|\mathcal{E}|)})\right) + \sum_{\ell\in N_i\cap C_1}\exp\left(-\alpha \|\nu\|/\zeta \pm O(\alpha \sqrt{\log(|\mathcal{E}|)})\right)}\\
        &=\frac{1}{\sum_{\ell\in N_i\cap C_0}\exp\left(\pm O(\alpha \sqrt{\log(|\mathcal{E}|)})\right) + \sum_{\ell\in N_i\cap C_1}\exp\left(-2\alpha \|\nu\|/\zeta \pm O(\alpha \sqrt{\log(|\mathcal{E}|)})\right)}.
    \end{align*}
    Note that because $\|\nu\|/\zeta = \omega(\sqrt{\log(|\mathcal{E}|)})$ we can set $\alpha$ such that $\alpha \sqrt{\log(|\mathcal{E}|)} = o_n(1)$ and $\alpha \|\nu\|/\zeta = \omega_n(1)$. Therefore we get
    \begin{align*}
        \gamma_{ij} 
        &= \frac{1}{\sum_{\ell\in N_i\cap C_0}\exp(\pm o_n(1)) + \sum_{\ell\in N_i\cap C_1}\exp(-\omega_n(1))}
        = \frac{1}{|N_i \cap C_0|(1 \pm o_n(1)) + |N_i\cap C_1|o_n(1)}\\
        &= \frac{1}{|N_i \cap C_0|}(1 \pm o_n(1))
        = \frac{2}{np} (1 \pm o_n(1))
    \end{align*}
    where the last equality follows from~\cref{prop:class_degree}.

    Following a similar reasoning for the other edges we get that when the event $E^*$ happens,
    \begin{equation*}
    \gamma_{ij}=
        \begin{cases}
            \frac{2}{np}(1\pm o_n(1)) & i,j\in C_0 \mbox{ or } i,j\in C_1\\
            o\left(\frac{1}{n(p+q)}\right) & \mbox{otherwise.}\\
        \end{cases}
    \end{equation*}
    Noting that the event happens with probability at least $1-o_{n}(1)$ completes the proof.
\end{proof}

\begin{theorem}\label{a-thm:clean}
Let $(A,X, E)\sim CSBM(n,p,q,\mu,\nu,\sigma, \zeta)$, and assume that $\|\nu\|\ge \omega(\zeta \sqrt{\log (|\mathcal{E}|)})$.
\begin{enumerate}
    \item If $\|\mu\|\ge \omega\left(\sigma \sqrt{\frac{\log{n}}{n \max(p,q)}}\right)$, then we can construct a graph attention architecture that classifies the nodes perfectly with probability $1-o_n(1)$.
    \item If $\|\mu\|\le  K \sigma  \sqrt{\frac{\log{n}}{n\max(p,q)}\left(1-\max(p,q)\right)}$ for some constant $K$, then for any fixed $\|w\|=1$ graph attention fails to perfectly classify the nodes with probability 
    $$
    1-2\exp(-c'(1-\max(p,q))\log n)
    $$ 
    for some constant $c'$.
\end{enumerate}
\end{theorem}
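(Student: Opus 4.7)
The plan is to condition on the graph $A$ and the edge features $E$; since $\Psi$ depends only on $E$, this fixes the attention coefficients $\{\gamma_{ij}\}$, and then $\hat{x}_i := \sum_{j\in N_i}\gamma_{ij}x_j$ is conditionally Gaussian with mean $\mu\sum_{j\in N_i\cap C_0}\gamma_{ij}-\mu\sum_{j\in N_i\cap C_1}\gamma_{ij}$ and covariance $\sigma^2(\sum_j\gamma_{ij}^2)I$. I take $p>q$ throughout; the case $q>p$ is symmetric. I use Proposition~\ref{a-prop:gammas} (intra-edges $\gamma\approx 2/(np)$, inter-edges $o(1/(n(p+q)))$ on a $1-o_n(1)$ event), Proposition~\ref{prop:class_degree} for neighborhood-by-class counts, and Proposition~\ref{prop:temp} for uncommon-neighbor counts.

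\textit{Part 1.} Pick $w=\mu/\|\mu\|$. Combining Propositions~\ref{a-prop:gammas} and~\ref{prop:class_degree} gives $\sum_{j\in N_i\cap C_0}\gamma_{ij}=1\pm o_n(1)$ and $\sum_{j\in N_i\cap C_1}\gamma_{ij}=o_n(1)$ when $i\in C_0$, and symmetrically when $i\in C_1$. Therefore $w^T\hat{x}_i$ is conditionally Gaussian with mean $\pm\|\mu\|(1-o_n(1))$ and standard deviation $O(\sigma/\sqrt{np})$ (dominated by the intra-edge contributions). The Gaussian tail bound yields per-node misclassification probability at most $\exp(-c\|\mu\|^2 np/\sigma^2)$ for some $c>0$, and the hypothesis $\|\mu\|=\omega(\sigma\sqrt{\log n/(np)})$ makes this exponent $\omega(\log n)$; a union bound over the $n$ nodes gives failure probability $o_n(1)$.

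\textit{Part 2.} Fix any $\|w\|=1$. Even with the attention coefficients of Proposition~\ref{a-prop:gammas}, the signal in direction $w$ at each node has magnitude at most $|w^T\mu|\le\|\mu\|$. For a pair $i\in C_0$, $j\in C_1$ set $Z:=w^T(\hat{x}_i-\hat{x}_j)$; conditionally, $Z$ is Gaussian with $|\mathbb{E}[Z]|\le 2\|\mu\|(1+o_n(1))$, and restricting its variance to the same-class uncommon neighbors of $i$ and $j$ (on which $\gamma=\Theta(1/(np))$ by Proposition~\ref{a-prop:gammas}, and whose count is $\Omega(np(1-p))$ by the argument behind Proposition~\ref{prop:temp}) gives $\operatorname{Var}(Z)\ge \Omega(\sigma^2(1-p)/(np))$. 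The hypothesis on $\|\mu\|$ then forces $|\mathbb{E}[Z]|/\sqrt{\operatorname{Var}(Z)}=O(\sqrt{\log n})$, and Gaussian anti-concentration gives a polynomial-in-$n$ lower bound on the probability that $Z$ has the wrong sign for this particular pair.

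Turning this per-pair bound into the claimed $1-2\exp(-c'(1-\max(p,q))\log n)$ failure probability is the main obstacle. I would exhibit $\Theta(n/\log n)$ pairs with pairwise-disjoint uncommon-neighbor sets; after conditioning on the node features indexed by common neighbors, the corresponding $Z$-variables become independent Gaussians, so a Chernoff-type aggregation bounds the probability that all pairs are simultaneously classified correctly by $\exp(-c'(1-p)\log n)$. The additive $2$ absorbs the $o_n(1)$ failure events of Propositions~\ref{a-prop:gammas},~\ref{prop:class_degree}, and~\ref{prop:temp}. The delicate step is constructing this pair family with uniform attention-coefficient control across all pairs, and bounding the leakage from the $o(1/(n(p+q)))$ inter-class weights so that the estimate $|\mathbb{E}[Z]|\le 2\|\mu\|(1+o_n(1))$ survives uniformly.
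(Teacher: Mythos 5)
Your Part~1 is essentially the paper's own argument: fix $w=\sign(p-q)\mu/\|\mu\|$, plug in the attention coefficients of \cref{a-prop:gammas}, use \cref{prop:class_degree} to see that the intra-class weights sum to $1\pm o_n(1)$ while the inter-class contribution is negligible, and finish with a Gaussian (Hoeffding-type) tail bound of scale $\sigma/\sqrt{n\max(p,q)}$ plus a union bound over the $n$ nodes. That part is correct and needs no change.

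Part~2 has a genuine gap, and it sits exactly where you flag ``the main obstacle.'' First, the construction you propose cannot exist as stated: for $p,q$ of constant order each uncommon-neighbor set $N_i\,\triangle\, N_j$ has cardinality $\Theta\left(n(p+q-p^2-q^2)\right)=\Theta(n)$ (\cref{prop:temp}), so only $O(1)$ pairs can have pairwise-disjoint uncommon-neighbor sets; even at the sparsest admissible density $p,q=\Theta(\log^2 n/n)$ a disjoint family has at most $O(n/\log^2 n)$ pairs, never $\Theta(n/\log n)$. Second, even for a smaller disjoint family, the pairs still share the noise of their \emph{common} neighbors; conditioning on those features does make the $Z_k$ independent, but it shifts each conditional mean by a random amount whose scale is comparable to (or larger than) the pair-specific standard deviation $\Theta(\sigma\sqrt{(1-p)/(np)})$, so the uniform per-pair misclassification lower bound $\rho$ that the product bound $(1-\rho)^m$ requires does not survive the conditioning without a substantial additional argument about the law of the shared contribution. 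This dependence problem is precisely what the paper's proof avoids: it treats $Z_i=\sum_{j\in N_i}\gamma_{ij}w^Tg_j$, $i\in C_0$, as a single Gaussian process, lower-bounds the canonical metric $d(i,j)=\sqrt{\mathbb{E}[(Z_i-Z_j)^2]}\gtrsim\sqrt{(1-p)/(np)}$ using the same-class uncommon-neighbor counts of \cref{prop:temp} (note those class-restricted bounds are stated for pairs $i,j\in C_0$, another reason to work within one class rather than with cross-class pairs), applies Sudakov's minoration to get $\mathbb{E}\max_{i\in C_0}Z_i\ge c\sqrt{(1-p)\log n/(np)}$, and then Borell's inequality (the maximum concentrates at scale $\Theta(1/\sqrt{np})$) to bound the probability of simultaneous correct classification by $2\exp(-c'(1-p)\log n)$. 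Your variance computation is exactly the metric estimate this route needs, so the repair is to replace the pairwise-independence aggregation with minoration plus concentration of the supremum; as written, your aggregation step does not go through.
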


\begin{proof}
    We start by proving part 1 of the theorem.
    Define $w := \sign(p-q)\mu/\|\mu\|$.
    We will prove the result for $p>q$ since the analysis for $q>p$ is similar.
    Write $x_i = (2\epsilon_i - 1) \mu + \sigma g_i$ where $g_i \sim N(0,I)$. Denote $\tilde{g}_j := w^Tg_j$ for $j \in [n]$. Because $\|w\|=1$ we have $\tilde{g}_j \sim N(0,1)$.
    We will use the attention coefficients from~\cref{a-prop:gammas} in the graph attention. Let $i\in C_0$. We have that
    \begin{align*}
        \hat x_i & = \sum_{j\in [n]} \tilde{A}_{ij}\gamma_{ij} w^T x_j \\ 
                 & = -\sum_{j\in N_i \cap C_0} \frac{2}{np}\|\mu\|(1\pm o_n(1)) + \sum_{j\in N_i \cap C_1} o\left(\frac{1}{n(p+q)}\right)\|\mu\| \\
                 & \ \ \  + \sigma \sum_{j\in N_i \cap C_0} \frac{2}{np} \tilde{g}_j (1\pm o_n(1)) + \sigma \sum_{j\in N_i \cap C_1} o\left(\frac{1}{n(p+q)}\right) \tilde{g}_j
    \end{align*}
    Let us first work with the sums for $\|\mu\|$. Using~\cref{prop:class_degree} we have that 
    $$
        \sum_{j\in N_i \cap C_0} \frac{2}{np}\|\mu\|(1\pm o_n(1)) = \|\mu\|(1\pm o_n(1)) 
    $$
    and
    $$
        \sum_{j\in N_i \cap C_1} o\left(\frac{1}{n(p+q)}\right)\|\mu\| = \|\mu\|o_n(1)(1\pm o_n(1))
    $$
    Putting the two sums for $\|\mu\|$ together we have that 
    $$
    -\sum_{j\in N_i \cap C_0} \frac{2}{np}\|\mu\|(1\pm o_n(1)) + \sum_{j\in N_i \cap C_1} o\left(\frac{1}{n(p+q)}\right)\|\mu\| = -\|\mu\|(1\pm o_n(1)).
    $$
    Let us now work with the sum of noise over $N_i\cap C_0$. This is a sum of $|N_i\cap C_0|$ standard normals. From Theorem 2.6.3. (General Hoeffding’s inequality) and from concentration of $|N_i\cap C_0|$ from~\cref{prop:class_degree} we have that 
    $$
    P\left( \left|\sum_{j\in N_i \cap C_0} \Theta\left(\frac{1}{np}\right) \tilde{g}_j\right| \ge \sqrt{\frac{10 C^2 \log{n}}{n \max(p,q) c}} \right) \le 2\exp\left( -10\log{n}\right),
    $$
    where $c$ is a constant, and $C$ is the sub-Gaussian constant for $\tilde{g}_j$. Taking a union bound over all $i\in C_0$, we have that with probability $1-o_n(1)$ we have that 
    $$\left|\sum_{j\in N_i \cap C_0} \Theta\left(\frac{1}{np}\right) \tilde{g}_j\right| < \sqrt{\frac{10 C^2 \log{n}}{n \max(p,q) c}}, \ \forall i\in C_0.
    $$
    Using similar concentration arguments we get that the second sum of the noise over $N_i \cap C_1$ is a smaller order term. Thus, since $\|\mu\|\ge \omega\left(\sigma \sqrt{\frac{\log{n}}{n \max(p,q)}}\right)$ we get 
    $$
        \hat x_i =-\|\mu\|(1\pm o_n(1)) + o(\|\mu\|).
    $$
    with probability $1-o_n(1)$. Therefore, with high probability nodes in $C_0$ are correctly classified. Using the same procedure for nodes in $C_1$ we get that these nodes are also classified correctly.
    
    Let us now proceed with the proof of part 2. We will prove the result for $p>q$, the proof for the $q>p$ is similar. We will prove that the probability of classifying all nodes correctly is very small. Let us start with the event of correct classification of all nodes. Let $w$ be any vector satisfying $\|w\| = 1$.
    Using the same sub-Gaussian concentration arguments as before and~\cref{a-prop:gammas} we get that with probability $1-o_n(1)$ the event for perfect classification is
    \begin{align*}
        -w^T \mu(1\pm o_n(1)) + \max_{i\in C_0} \sigma \sum_{j\in N_i}\gamma_{ij} w^T g_j < 0 \\
        w^T \mu(1\pm o_n(1)) + \min_{i\in C_1} \sigma \sum_{j\in N_i} \gamma_{ij} w^T g_j > 0,
    \end{align*}
    for nodes in $C_0$ and $C_1$, respectively. Let's bound the probability of correct classification for $C_0$, the result for $C_1$ is similar. 
    \begin{align*}
        P\left(-w^T \mu(1\pm o_n(1)) + \max_{i\in C_0} \sigma \sum_{j\in N_i}\gamma_{ij} w^T g_j < 0\right) & \\
        \le P\left(\max_{i\in C_0} \sigma \sum_{j\in N_i}\gamma_{ij} w^T g_j < \|\mu\|(1\pm o_n(1))\right) & \quad \mbox{using Cauchy-Schwartz and $\|w\|=1$} \\
        \le P\left(\max_{i\in C_0}  \sum_{j\in N_i}\gamma_{ij} w^T g_j <  K  \sqrt{\frac{\log{n}}{n\max(p,q)}(1-\max(p,q))} \right) & \\
        = P\left(\max_{i\in C_0}  \sum_{j\in N_i}\gamma_{ij} w^T g_j <  K  \sqrt{\frac{\log{n}}{np}\left(1-p\right)} \right) & 
    \end{align*}
    The remaining of the proof is similar to the proof in~\cite{BFJ2021}.
    We will utilize Sudakov's minoration inequality~\cite[Section 7.4]{Vershynin:2018} to obtain a lower bound on the expected supremum of the corresponding Gaussian process, and then use Borell's inequality~\cite[Section 2.1]{Adler:2007} to upper bound the probability.
    
    Let $Z_i:=\sum_{j\in N_i}\gamma_{ij} w^T g_j$ for $i\in C_0$. 
    To apply Sudakov's minoration result, we also define the canonical metric $d_T(i,j) =\sqrt{\E[(Z_i - Z_j)^2]}$ for any $i,j\in C_0$. In what follows we will first compute $\E[(Z_i - Z_j)^2]$ and then the metric. Conditioned on the events described by \cref{prop:degree-conc}, \cref{prop:class_degree} and \cref{prop:temp}, we know that with probability at least $1-o_n(1)$,
    \begin{align*}
        \E[(Z_i - Z_j)^2] 
        & = \E\left[\left(\sum_{l\in N_i}\gamma_{il} w^T g_l\right)^2 + \left(\sum_{k\in N_j}\gamma_{jk} w^T g_k\right)^2 - 2 \left(\sum_{l\in N_i}\gamma_{il} w^T g_l\right)\left(\sum_{k\in N_j}\gamma_{jk} w^T g_k\right)\right] \\ 
        & = \sum_{l\in N_i}\gamma_{il}^2 + \sum_{l\in N_j}\gamma_{jl}^2 - 2\sum_{l\in N_i \cap N_j} \gamma_{il}\gamma_{jl}\\
        & \ge \sum_{l \in (N_i \backslash (N_i \cap N_j))\cap C_0} \gamma_{il}^2 + \sum_{l \in (N_j \backslash (N_i \cap N_j))\cap C_0} \gamma_{jl}^2\\
        & = \sum_{l \in ((N_i \cup N_j)\backslash (N_i \cap N_j))\cap C_0} \frac{4}{n^2p^2} (1 \pm o_n(1))\\
        & = \frac{n}{2}(p-p^2)(1-o_n(1)) \cdot \frac{4}{n^2p^2} (1 \pm o_n(1)) \\
        & = \frac{2}{np}(1-p)(1-o_n(1)).
    \end{align*}
    Thus 
    $$
    d_T(i,j) \ge \sqrt{\frac{2}{np}}\sqrt{1-p}(1- o_n(1))
    $$
    Using this result in Sudakov's minoration inequality, we obtain that
    $$\E[\max_i Z_i] \ge c \sqrt{\frac{\log n}{np}(1-p)}.
    $$
    for some absolute constant $c$.
    We now use Borell's inequality~\cite[Section 2.1]{Adler:2007} and the fact that the variance of the Gaussian data after graph attention convolution is $\Theta(1/np)$ to obtain that for any $t>0$,
    \begin{align*}
        &P\left(\max_{i\in C_0} Z_i \le \E \max_{i\in C_0} Z_i - t\right) \le 2\exp(-Ct^2np)\\
        \implies \quad & P\left(\max_{i\in C_0} Z_i \le c \sqrt{\frac{\log n}{np}(1-p)} - t\right) \le 2\exp(-Ct^2 np)
    \end{align*}
    for some absolute constant $C > 0$. Now, for some small enough constant $K$ we may set $t$ such that
    $$
    t = c \sqrt{\frac{\log n}{np}(1-p)} - K  \sqrt{\frac{\log{n}}{np}(1-p)} = \Omega\left(\sqrt{\frac{\log n}{np}(1-p)}\right).
    $$
    Plugging this $t$ in the above probability we have that for some constant $c'>0$,
    \begin{align*}
        P\left(\max_{i\in C_0} Z_i \le K  \sqrt{\frac{\log{n}}{np}(1-p)}\right)\le 2\exp(-c'(1-p)\log n).
    \end{align*}
\end{proof}

\section{Proofs for noisy edge features}

For noisy edge features we have $\|\nu\| \le K \zeta $ for some $K=\mathcal{O}(1)$. We may write $E_{(i,j)} = (2\epsilon_i-1)(2\epsilon_j-1)\nu + \zeta f_{(i,j)}$ where $f_{(i,j)} \sim N(0,I)$. That is, $E_{(i,j)} = \nu + \zeta f_{(i,j)}$ if $(i,j)$ is an intra-edge and $E_{(i,j)} = -\nu + \zeta f_{(i,j)}$ if $(i,j)$ is an inter-edge. Recall that in this work we consider attention architecture $\Psi$ that is a composition of a Lipschitz function and a linear function. That is, for $(i,j) \in \mathcal{E}$ the attention coefficient is given as
\[
	\gamma_{ij} = \frac{\exp(\Psi(E_{(i,j)}))}{\sum_{l \in N_i} \exp(\Psi(E_{(i,l)}))} = \frac{\exp(\phi((2\epsilon_i-1)(2\epsilon_j-1)s^T\nu + \zeta s^Tf_{(i,j)}))}{\sum_{l \in N_i} \exp(\phi((2\epsilon_i-1)(2\epsilon_l-1)s^T\nu + \zeta s^Tf_{(i,l)}))}
\]
where $\phi: \mathbb{R} \rightarrow \mathbb{R}$ is Lipschitz continuous with Lipschitz constant $L$ and $|\phi(0)| \le R$ for some $R \ge 0$. Naturally, both $L$ and $R$ do not depend on $n$. The linear function has learnable parameters $s$. We assume that $\|s\|$ is bounded. Therefore, in subsequent analysis we also assume $\|s\| = 1$. This assumption is without loss of generality, because as long as $s$ is bounded and nonzero, one may always write $s = rs'$ for some $\|s'\| = 1$ and absolute constant $r > 0$. The additional constant $r$ does affect the computations we need for the proofs.

We start by defining a number of index sets which we will use extensively in the proofs. First let us define a subset of nodes whose incident edge features are ``nice'',
\begin{equation}
\mathcal{A} := \{i \in [n] : |s^Tf_{(i,j)}| \le 10\sqrt{\log(n(p+q))} \ \forall j \in N_i\}
\end{equation}
In addition, for $i \in [n]$ define the following sets
\begin{align*}
	J_{i,0} &:= \left\{ j \in N_i \cap C_0 : |s^T f_{(i,j)}| \le \sqrt{10} \right\}, \\
	J_{i,1} &:= \left\{ j \in N_i \cap C_1 : |s^T f_{(i,j)}| \le \sqrt{10} \right\}, \\
	B_{i,0}^t &:= \left\{ j \in N_i \cap C_0 : 2^{t-1} \le  |s^T f_{(i,j)}| \le 2^t\right\}, t = 1,2, \ldots, T, \\
	B_{i,1}^t &:= \left\{ j \in N_i \cap C_1 : 2^{t-1} \le  |s^T f_{(i,j)}| \le 2^t\right\}, t = 1,2, \ldots, T
\end{align*}
where $T = \left\lceil \log_2\left(10\sqrt{\log(n(p+q))}\right)\right\rceil$. Finally, for a pair of nodes $i,j \in [n]$ we define
\[
    \widehat{J}_{ij} := \left\{ l \in (N_i \cup N_j) \backslash  (N_i \cap N_j) : |s^T f_{(i,j)}| \le \sqrt{10} \right\}.
\]

Since the sets defined above depends on the random variable $s^Tf_{(i,j)}$, the cardinalities of the sets are also random variables. In the following we provide high probability bounds on the cardinalities of these sets.

\begin{claim}[Lower bound of $|\mathcal{A}|$]\label{a-claim:cal_A}
With probability at least $1 - o_n(1)$ we have $|\mathcal{A}| \ge n - O(n / \log n)$, and consequently $|\mathcal{A} \cap C_0| \ge |C_0| - O(n / \log n)$ and $|\mathcal{A} \cap C_1| \ge |C_1| - O(n / \log n)$.
\end{claim}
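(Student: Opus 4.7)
The plan is to show that ``bad'' nodes (those having at least one incident edge whose projected feature exceeds the threshold) form a vanishing fraction of $[n]$, by combining a pointwise Gaussian tail bound with Markov's inequality on the count of such nodes.

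Because $\|s\| = 1$ and $f_{(i,j)} \sim N(0,I)$ independently across edges, each scalar $s^T f_{(i,j)}$ is a standard normal, so the Gaussian tail bound would give
\[
    P\!\left(|s^T f_{(i,j)}| > 10\sqrt{\log(n(p+q))}\right) \le 2(n(p+q))^{-50}.
\]
I would then condition on the degree concentration event from~\cref{prop:degree-conc}, which holds with probability $1 - o_n(1)$ and ensures $|N_i|$ is at most a constant multiple of $n(p+q)$ for every $i$. A union bound over $j \in N_i$, exploiting the conditional independence of the edge features given the adjacency matrix, then yields
\[
    P\bigl(i \notin \mathcal{A} \,\big|\, \text{degrees concentrate}\bigr) = O\bigl((n(p+q))^{-49}\bigr).
\]

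By linearity of expectation I would deduce $\E[|[n]\setminus\mathcal{A}| \mid \text{degrees concentrate}] = O(n(n(p+q))^{-49})$, which under the standing assumption $p,q = \Omega(\log^2 n / n)$ is of order $n/\log^{98}n$. Markov's inequality applied at threshold $n/\log n$ then gives
\[
    P\bigl(|[n]\setminus\mathcal{A}| > n/\log n \,\big|\, \text{degrees concentrate}\bigr) \le O\bigl(\log^{-97}n\bigr) = o_n(1).
\]
Removing the conditioning costs only the $o_n(1)$ probability of the degree event failing, so $|\mathcal{A}| \ge n - O(n/\log n)$ with probability $1 - o_n(1)$. The class-wise consequences $|\mathcal{A}\cap C_l| \ge |C_l| - O(n/\log n)$ for $l \in \{0,1\}$ then follow immediately from $|C_l \setminus \mathcal{A}| \le |[n] \setminus \mathcal{A}|$.

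There is no genuinely hard step here; the only subtlety is bookkeeping, namely keeping the two sources of randomness (the adjacency matrix and the edge features) disentangled when combining the degree concentration with the tail union bound. This is handled cleanly by first conditioning on the graph so that for each fixed neighborhood $N_i$ the projected edge features remain i.i.d.\ standard normals.
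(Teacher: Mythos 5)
Your argument is correct, but it is organized differently from the paper's. The paper never reasons node-by-node: it defines the set of ``bad'' edges $\mathcal{A}_E=\{(i,j)\in\mathcal{E}: |s^Tf_{(i,j)}|\ge 10\sqrt{\log(n(p+q))}\}$, writes $|\mathcal{A}_E|$ as a sum of indicators over all edges, applies a multiplicative Chernoff bound with a carefully tuned deviation parameter $\delta$ (together with the concentration $|\mathcal{E}|=\tfrac{n^2}{2}(p+q)(1\pm o_n(1))$), and concludes $|\mathcal{A}_E|\le O(n/\log n)$ with probability $1-o_n(1)$; since every node outside $\mathcal{A}$ owns at least one bad edge, the node bound follows. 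You instead bound, for each fixed node, the conditional probability of having any bad incident edge (Gaussian tail plus a union bound over $|N_i|=O(n(p+q))$ neighbors, after conditioning on the graph as you correctly note), and then run a first-moment argument: expectation $O(n/\log^{98}n)$ for the number of bad nodes and Markov at threshold $n/\log n$. Both routes are sound and both deliver exactly the claimed $O(n/\log n)$ with probability $1-o_n(1)$; yours is more elementary (no Chernoff bound is needed, and in fact independence across edges is not even used, only marginal tails), at the price of a weaker failure probability for the counting step, $O(1/\log^{97}n)$ from Markov rather than the exponentially small Chernoff tail the paper gets for the bad-edge count --- a difference that is immaterial here since the overall probability is in any case dominated by the $o_n(1)$ graph-concentration events and the claim only asserts $1-o_n(1)$. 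Your final reduction $|C_l\setminus\mathcal{A}|\le |[n]\setminus\mathcal{A}|$ matches the paper's closing step.
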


\begin{proof}
We start by providing an upper bound on the cardinality of the following set
\[
	\mathcal{A}_E := \left\{(i,j) \in \mathcal{E} : |s^Tf_{(i,j)}| \ge 10\sqrt{\log(n(p+q))} \right\}.
\]
Note that we may write $|\mathcal{A}_E|$ as
\[
	|\mathcal{A}_E| = \sum_{(i,j)\in\mathcal{E}}\mathbf{1}_{\left\{|s^Tf_{(i,j)}| \ge 10\sqrt{\log(n(p+q))}\right\}},
\]
and thus by the multiplicative Chernoff bound we get that for any $\delta > 0$,
\begin{equation}\label{eq:setA_chernoff}
	P\Big(|\mathcal{A}_E|   \ge |\mathcal{E}|b(1+\delta)\Big) \le \exp\left(-\tfrac{\delta^2}{2+\delta}|\mathcal{E}|b\right),
\end{equation}
where
\[
	b := P\left(|s^Tf_{(i,j)}| \ge 10\sqrt{\log(n(p+q))}\right).
\]
Moreover, from standard upper bound on Gaussian tail probability we know that $b < e^{-50\log(n(p+q))}$.
Let us set
\[
	\delta := \frac{1}{bn^{1/2}|\mathcal{E}|^{1/2}(p+q)}.
\]
Using \cref{prop:degree-conc} we know that with probability at least $1 - o_n(1)$ one has $|\mathcal{E}| = \frac{n^2}{2}(p+q)(1 \pm o_n(1))$, and hence it follows that,
\[
	\delta = \frac{\sqrt{2}(1 \pm o_n(1))}{b(n(p+q))^{3/2}} \ge \frac{\sqrt{2} (1 \pm o_n(1))}{(n(p+q))^{3/2}e^{-50\log(n(p+q))}} = \omega_n(1).
\]
This means that
\begin{equation}\label{eq:setA_chernoff_prob}
	\frac{\delta^2}{2+\delta}|\mathcal{E}|b \ge \Omega(\delta |\mathcal{E}|b) = \Omega\left(\frac{ |\mathcal{E}|b}{bn^{1/2}|\mathcal{E}|^{1/2}(p+q)}\right) = \Omega\left(\frac{|\mathcal{E}|^{1/2}}{n^{1/2}(p+q)}\right) = \Omega\left(\frac{n}{\sqrt{n(p+q)}}\right) \ge \Omega(\sqrt{n}).
\end{equation}
On the other hand,
\begin{equation}\label{eq:setA_chernoff_bd}
\begin{split}
	|\mathcal{E}|b(1+\delta) 
	&= |\mathcal{E}|b + \frac{ |\mathcal{E}|b}{bn^{1/2}|\mathcal{E}|^{1/2}(p+q)} \le  |\mathcal{E}| e^{-50\log(n(p+q))} + \frac{|\mathcal{E}|^{1/2}}{n^{1/2}(p+q)}\\
	&\le \frac{n^2}{2}(p+q)(1 \pm o_n(1))\frac{1}{(n(p+q))^{50}} + \frac{n}{\sqrt{2n(p+q)}}(1 \pm o_n(1))\\
	&= O\left(\frac{n}{(n(p+q))^{49}}\right) + O\left(\frac{n}{\sqrt{n(p+q)}}\right) \le O\left(\frac{n}{\log n}\right),
\end{split}
\end{equation}
where the last inequality follows from the assumption that $p,q = \Omega(\log^2n / n)$. Combining \eqref{eq:setA_chernoff}, \eqref{eq:setA_chernoff_prob} and \eqref{eq:setA_chernoff_bd}, with probability at least $1 - o_n(1)$ we have that
\[
	|\mathcal{A}_E| \le O(n / \log n).
\]
This means that for any subset $S \subseteq [n]$, e.g. we may take $S = [n]$, $S = C_0$ or $S = C_1$,
\[
	\left|\left\{i \in S : \exists j \in N_i \ \mbox{such that} \ |s^Tf_{(i,j)}| \ge 10\sqrt{\log(n(p+q))} \right\}\right| \le |\mathcal{A}_E| \le  O(n / \log n),
\]
which proves the claim.
\end{proof}

\begin{claim}[Lower bounds of $|J_{i,0}|$ and $|J_{i,1}|$~\cite{fountoulakis2022graph}]\label{claim:card_J}
With probability at least $1-o_n(1)$, we have that for all $i \in [n]$,
\[
	|J_{i,0}| \ge \frac{9}{10}|N_i \cap C_0| \ \ \mbox{and} \ \ |J_{i,1}| \ge \frac{9}{10}|N_i \cap C_1|.
\]
\end{claim}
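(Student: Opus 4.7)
The plan is to fix $i \in [n]$, bound the number of $j \in N_i \cap C_0$ for which $|s^T f_{(i,j)}| > \sqrt{10}$ by a multiplicative Chernoff argument (conditional on the graph $A$), and then union-bound over $i \in [n]$; the same reasoning handles $J_{i,1}$.

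Since each $f_{(i,j)} \sim N(0, I_h)$ is independent of the graph $A$ and across edges, and $\|s\| = 1$, the scalars $\{s^T f_{(i,j)}\}_{(i,j) \in \mathcal{E}}$ are i.i.d.\ standard normal conditionally on $A$. A standard Gaussian tail bound gives
\[
p_0 \;:=\; \Pr\!\bigl(|s^T f_{(i,j)}| > \sqrt{10}\bigr) \;\le\; 2 e^{-5} \;<\; \tfrac{1}{10}.
\]
Next, I would condition on the event of \cref{prop:class_degree}, which holds with probability at least $1-o_n(1)$ and yields $|N_i \cap C_l| = \Theta(n\max(p,q))$ for every $i\in[n]$ and $l\in\{0,1\}$. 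Since $p,q = \Omega(\log^2 n / n)$, each of these sizes is at least of order $\log^2 n$.

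Setting $X_i := |(N_i \cap C_0) \setminus J_{i,0}|$, conditionally on $A$ this is a sum of $|N_i \cap C_0|$ i.i.d.\ $\Ber(p_0)$ random variables. Pick the constant $\delta > 0$ with $(1+\delta) p_0 = 1/10$; note $\delta = \Omega(1)$ because $p_0 \le 2 e^{-5}$. The multiplicative Chernoff bound then gives
\[
\Pr\!\Bigl( |J_{i,0}| < \tfrac{9}{10}|N_i \cap C_0| \,\Bigm|\, A \Bigr)
\;=\; \Pr\!\Bigl( X_i > (1+\delta) p_0\, |N_i \cap C_0| \,\Bigm|\, A \Bigr)
\;\le\; \exp\!\bigl(- c_1 |N_i \cap C_0|\bigr)
\;\le\; \exp(-c_2 \log^2 n)
\]
for absolute constants $c_1, c_2 > 0$, which is $n^{-\omega_n(1)}$. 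Repeating the same bound with $C_0$ replaced by $C_1$ handles $J_{i,1}$.

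A union bound over the $2n$ events, combined with the $o_n(1)$ failure probability from \cref{prop:class_degree}, gives that both inequalities hold simultaneously for all $i \in [n]$ with probability $1 - o_n(1)$. The only conceptual subtlety, and essentially the only place one has to be careful, is confirming that conditionally on the graph $A$ the edge-feature vectors $\{f_{(i,j)}\}_{(i,j)\in\mathcal{E}}$ are still i.i.d.\ standard Gaussians; this is built into the CSBM definition, so that $X_i \mid A$ is genuinely $\Bin(|N_i \cap C_0|, p_0)$ and the Chernoff bound applies unconditionally to the randomness in the features. The rest is routine concentration.
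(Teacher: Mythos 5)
Your proposal is correct and follows essentially the same route as the paper's own proof: a Gaussian tail bound on each $s^T f_{(i,j)}$, a Chernoff bound on the number of ``bad'' neighbors $|(N_i\cap C_l)\setminus J_{i,l}|$, and a union bound over $i\in[n]$ using that $|N_i\cap C_l|\ge \Omega(n\min\{p,q\}) = \Omega(\log^2 n)$ from \cref{prop:class_degree}. One cosmetic slip: $|N_i\cap C_l|$ is $\Theta(np)$ or $\Theta(nq)$ depending on whether $i\in C_l$, not $\Theta(n\max(p,q))$ in general, but since you only use the $\Omega(\log^2 n)$ lower bound this does not affect the argument.
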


\begin{proof}
We prove the result for $J_{i,0}$, the result for $J_{i,1}$ follows analogously. Consider an arbitrary $i \in [n]$. For each $j \in N_i \cap C_0$ we have
\[
    P(|s^T f_{(i,j)}| \ge \sqrt{10}) \le e^{-50},
\]
which follows from upper bound of the Gaussian tail, e.g., see Proposition 2.1.2 in \cite{Vershynin:2018}. Denote $J_{i,0}^c := (N_i \cap C_0)\backslash  J_{i,0}$. Then
\[
    \E\left[|J_{i,0}^c|\right] = \E\left[\sum_{j \in N_i \cap C_0} \mathbf{1}_{\left\{||s^T f_{(i,j)}| \ge \sqrt{10}|\right\}}\right] \le e^{-50}|N_i \cap C_0|.
\]
Apply Chernoff bound (see, e.g., Theorem 2.3.4 in \cite{vershynin2018high}) we have
\begin{align*}
    P\left(|J_{i,0}^c| \ge \frac{1}{10}|N_i \cap C_0| \right)
    &\le e^{- \E\left[|J_{i,0}^c|\right]}\left(\frac{e \E\left[|J_{i,0}^c|\right]}{|N_i \cap C_0|/10}\right)^{|N_i \cap C_0|/10}\\
    &\le \left(\frac{e e^{-50}|N_i \cap C_0| }{|N_i \cap C_0|/10}\right)^{|N_i \cap C_0|/10}\\
    &=\exp\left(-\left(\frac{1}{2}-\frac{\log 10}{10} - \frac{1}{10}\right)|N_i \cap C_0|\right)\\
    &\le \exp\left(-\frac{4}{25}|N_i \cap C_0|\right).
\end{align*}
Apply the union bound we get
\begin{align*}
    P\left(|J_{i,0}| \ge \frac{9}{10}|N_i \cap C_0|, \forall i \in [n] \right) 
    &\ge 1- \sum_{i \in [n]}\exp\left(-\frac{4}{25}|N_i \cap C_0|\right) \\
    &\ge (1-o_n(1))\left(1-n\exp\left(-\frac{4}{25} n\min\{p,q\}(1-o_n(1))\right)\right)\\
    &= 1 - o_n(1).
\end{align*}
\end{proof}

\begin{claim}[Lower bound of $ |\widehat{J}_{ij}|$]\label{claim:card_J2}
Assume that the graph density parameters satisfy $\max\{p,q\} \le 1 - 36\log n /n$, then with probability at least $1 - o_n(1)$ we have that for all $i,j \in C_l$ and $l \in \{0,1\}$, $i \neq j$,
\begin{align*}
    |\widehat{J}_{ij} \cap C_l| &\ge \frac{9}{10}|((N_i \cup N_j) \backslash  (N_i \cap N_j)) \cap C_l|,\\
    |\widehat{J}_{ij} \cap C_l^c| &\ge \frac{9}{10}|((N_i \cup N_j) \backslash  (N_i \cap N_j)) \cap C_l^c|.
\end{align*}
\end{claim}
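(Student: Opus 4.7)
The plan is to follow the template of Claim~\ref{claim:card_J}: condition on the graph structure, use independence of the edge features across distinct edges, bound the number of ``bad'' edges by a Chernoff estimate, and then take a union bound over all pairs $(i,j)$. The new ingredient beyond what is needed for $J_{i,0}$ is a lower bound on the cardinality of the uncommon-neighbor set $(N_i\cup N_j)\setminus(N_i\cap N_j)$ restricted to each class; this is exactly what Proposition~\ref{prop:temp} supplies, and it is the reason the hypothesis $\max(p,q)\le 1-36\log n/n$ is imposed.

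Fix $i,j\in C_l$ with $i\neq j$ and fix a target class $C_{l'}$ with $l'\in\{l,l^c\}$. For each $k\in((N_i\cup N_j)\setminus(N_i\cap N_j))\cap C_{l'}$ the node $k$ is incident to exactly one of $i,j$; denote this unique endpoint by $u(k)\in\{i,j\}$. Since $\|s\|=1$ and $f_{(u(k),k)}\sim N(0,I)$, the scalars $s^Tf_{(u(k),k)}$ are i.i.d.\ standard normals (the edges $(u(k),k)$ are pairwise distinct as $k$ varies). A Gaussian tail bound yields $P(|s^Tf_{(u(k),k)}|\ge\sqrt{10})\le 2e^{-5}$, a fixed constant strictly less than $1$.

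Writing $M:=|((N_i\cup N_j)\setminus(N_i\cap N_j))\cap C_{l'}|$ and $Y:=\sum_{k}\mathbf{1}\{|s^Tf_{(u(k),k)}|\ge\sqrt{10}\}$, the multiplicative Chernoff bound (applied verbatim as in the proof of Claim~\ref{claim:card_J}) gives
\[
P\!\left(Y\ge \tfrac{M}{10}\right)\le \exp(-cM)
\]
for an absolute constant $c>0$. On the high-probability event of Proposition~\ref{prop:temp} we have $M\ge \tfrac{n}{2}(p-p^2)(1-o_n(1))$ when $l'=l$ and $M\ge \tfrac{n}{2}(q-q^2)(1-o_n(1))$ when $l'=l^c$. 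The hypothesis $\max(p,q)\le 1-36\log n/n$ forces $\min\{p-p^2,q-q^2\}\ge \tfrac{36\log n}{n}(1-o_n(1))$, so $M\ge 18\log n(1-o_n(1))$, and consequently $P(Y\ge M/10)\le n^{-c''}$ for a constant $c''>2$. The case $i\in C_l$, $j\in C_l^c$ is handled identically using the second part of Proposition~\ref{prop:temp}.

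A union bound over the at most $\binom{n}{2}$ pairs $(i,j)$ and over the two class targets $C_{l'}$ then gives failure probability $O(n^{2-c''})=o_n(1)$, which is the claim. The one subtlety to watch is constant-chasing in the final step: the thresholds $\sqrt{10}$ and $\tfrac{1}{10}$, together with the tail constant $2e^{-5}$, must be chosen so that the Chernoff exponent $cM$ comfortably exceeds $2\log n$. This is precisely the role of the calibration $\max(p,q)\le 1-36\log n/n$ in the hypothesis; it is the only non-routine point in the argument, and everything else parallels the proof of Claim~\ref{claim:card_J}.
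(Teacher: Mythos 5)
Your proposal is correct and follows essentially the same route as the paper's own proof: a per-uncommon-neighbor Gaussian tail bound at threshold $\sqrt{10}$, the same multiplicative Chernoff argument as in \cref{claim:card_J}, the lower bound on $|((N_i\cup N_j)\setminus(N_i\cap N_j))\cap C_{l'}|$ from \cref{prop:temp} combined with $\max(p,q)\le 1-36\log n/n$ to get $M\ge 18\log n(1-o_n(1))$, and a union bound over the $O(n^2)$ pairs. The only point to actually pin down is the one you flag: with the crude tail constant $2e^{-5}$ the Chernoff exponent is roughly $\tfrac{M}{10}(4-\ln 20)\approx 0.1\,M\approx 1.8\log n$, which falls just short of the $2\log n$ needed for the union bound, so you should invoke the sharper Gaussian tail estimate $P(|s^Tf|\ge\sqrt{10})\le \tfrac{2}{\sqrt{20\pi}}e^{-5}$ (or adjust the threshold), which is effectively what the paper's $\tfrac{4}{25}$-constant computation relies on and which yields an exponent comfortably above $2\log n$.
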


\begin{proof}
We prove the result for $i,j \in C_l$ and $l=0$, the other cases follow analogously. Consider an arbitrary pair of nodes $i,j \in C_0$ and$i \neq j$. For each $k \in ((N_i \cup N_j) \backslash  (N_i \cap N_j)) \cap C_0$ we have
\[
    P(|s^T f_{(i,k)}| \ge \sqrt{10}) \le e^{-50}
\]
as in the proof of Claim~\ref{claim:card_J}. Moreover, by following the same reasoning as in the proof of Claim~\ref{claim:card_J}, we get that
\begin{align*}
    &P\left(|\widehat{J}_{ij} \cap C_0| \ge \frac{9}{10}|((N_i \cup N_j) \backslash (N_i \cap N_j)) \cap C_0|, \forall i,j \in C_0, i \neq j \right) \\
    \ge~& 1- \sum_{i,j \in C_0, i\neq j}\exp\left(-\frac{4}{25}|((N_i \cup N_j) \backslash (N_i \cap N_j)) \cap C_0|\right) \\
    \ge~& (1-o_n(1))\left(1 - n^2 \exp\left(-\frac{4}{25}\frac{n}{2}(p-p^2)(1-o_n(1))\right)\right)\\
    \ge~& (1-o_n(1)) \left(1 - \exp\left(-\frac{72}{25}\log n(1-o_n(1)) + 2 \log n\right)\right)\\
    =~&1-o_n(1).
\end{align*}
In the above, the second and the third inequalities follow from \cref{prop:temp} and the assumption that $p \le 1- 36\log n /n$.
\end{proof}

\begin{claim}[Upper bounds of $|B_{i,0}^t|$ and $|B_{i,1}^t|$~\cite{fountoulakis2022graph}]\label{claim:card_B}
With probability at least $1-o_n(1)$, we have that for all $i \in [n]$ and for all $t \in [T]$,
\[
|B_{i,0}^t| \le E[|B_{i,0}^t|] + \sqrt{T}|N_i \cap C_0|^{4/5} \ \ \mbox{and} \ \ |B_{i,1}^t| \le E[|B_{i,1}^t|] + \sqrt{T}|N_i \cap C_1|^{4/5}.
\]
\end{claim}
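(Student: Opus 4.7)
The plan is to condition on the graph $A$ and view $|B_{i,0}^t|$ as a sum of independent Bernoulli random variables. Since $\|s\|=1$ and the feature noise vectors $f_{(i,j)}\sim N(0,I)$ are independent across edges, the scalars $s^T f_{(i,j)}$ for $j\in N_i$ are i.i.d.\ $N(0,1)$ conditional on $A$. Writing
$$Y_j := \mathbf{1}\!\left\{2^{t-1}\le |s^T f_{(i,j)}|\le 2^t\right\},$$
each $Y_j$ is a Bernoulli whose mean depends only on $t$, so $|B_{i,0}^t|=\sum_{j\in N_i\cap C_0}Y_j$ is an honest sum of $N:=|N_i\cap C_0|$ i.i.d.\ Bernoulli random variables.

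I would then apply Hoeffding's inequality for $[0,1]$-valued random variables to obtain, conditionally on $A$,
$$P\!\left(|B_{i,0}^t|\ge E|B_{i,0}^t|+\sqrt{T}\,N^{4/5}\,\Big|\,A\right)\le \exp\!\left(-\frac{2T\,N^{8/5}}{N}\right)=\exp\!\left(-2TN^{3/5}\right).$$
The same argument handles $|B_{i,1}^t|$ with $|N_i\cap C_1|$ in place of $N$. A union bound over $i\in[n]$, $t\in[T]$, and class label $\ell\in\{0,1\}$ then yields an overall failure probability of at most $2nT\exp(-2TN_{\min}^{3/5})$, where $N_{\min}$ is a uniform lower bound for $|N_i\cap C_\ell|$.

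To ensure this is $o_n(1)$, I would invoke \cref{prop:class_degree}: under the standing assumption $p,q=\Omega(\log^2 n/n)$, with probability at least $1-o_n(1)$ every $|N_i\cap C_\ell|$ is $\Theta(n\min\{p,q\})=\Omega(\log^2 n)$, so $N^{3/5}\ge \Omega(\log^{6/5}n)$. Since $T = \lceil\log_2(10\sqrt{\log(n(p+q))})\rceil = O(\log\log n)$, the union-bound failure probability is at most
$$2n\cdot O(\log\log n)\cdot \exp(-\Omega(\log^{6/5}n))=o_n(1),$$
since $\log^{6/5}n$ dominates $\log n$.

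There is no real conceptual obstacle; the only care needed is to separate the graph randomness from the feature randomness so that Hoeffding applies to a sum with a fixed number of i.i.d.\ summands, rather than a sum whose length is random and correlated with the summands. Intersecting the high-probability event from \cref{prop:class_degree} with the intersection of all the Hoeffding events completes the proof of the claim uniformly over $i\in[n]$ and $t\in[T]$.
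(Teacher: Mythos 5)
Your proposal is correct and follows essentially the same route as the paper: an additive Chernoff/Hoeffding bound with deviation $\sqrt{T}|N_i\cap C_0|^{4/5}$ giving the tail $\exp(-2T|N_i\cap C_0|^{3/5})$, a union bound over $i\in[n]$ and $t\in[T]$, and \cref{prop:class_degree} to guarantee $|N_i\cap C_\ell|=\Omega(n\min\{p,q\})=\Omega(\log^2 n)$ so that the total failure probability is $o_n(1)$. Your explicit conditioning on $A$ to decouple the graph randomness from the feature randomness is a slightly more careful rendering of what the paper does implicitly (it absorbs the degree-concentration event into an additive $o_n(1)$ term), but it is the same argument.
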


\begin{proof}
We prove the result for $B_{i,0}^t$, and the result for $B_{i,1}^t$ follows analogously. First fix $i \in [n]$ and $t \in [T]$. By the additive Chernoff bound we have
\[
	\Pr\left(|B_{i,0}^t| \ge \Ex [|B_{i,0}^t|] +  |N_i \cap C_0| \cdot \sqrt{T}|N_i \cap C_0|^{-\frac{1}{5}}\right) \le  e^{-2T|N_i \cap C_0|^{3/5}}.
\]
Taking a union bound over all $i \in [n]$ and $t \in [T]$ we get
\begin{align*}
	&\Pr\left[\bigcup_{i\in [n]}\bigcup_{t\in[T]} \left\{|B_{i,0}^t| \ge \Ex [|B_{i,0}^t|] + \sqrt{T}|N_i \cap C_0|^{\frac{4}{5}}\right\}\right]\\
	\le~& nT\exp\left(-2T\left(\frac{n}{2}\min\{p,q\}(1 \pm o_n(1))\right)^{3/5}\right) + o_n(1) ~=~ o_n(1),
\end{align*}
where the last equality follows from the assumption that $p,q = \Omega(\log^2 n / n)$, and hence 
\begin{align*}
    nT\exp\left(-2T\left(\frac{n}{2}\min\{p,q\}(1 \pm o_n(1))\right)^{3/5}\right) 
    &= nT \exp\left(-\omega\left(\sqrt{2}T\log n \right)\right) = O\left(\frac{1}{n^c}\right)
\end{align*}
for some absolute constant $c > 0$. Moreover, we have used degree concentration, which introduced the additional additive $o_n(1)$ term in the probability upper bound. Therefore we have
\[
	\Pr\left[|B_{i,0}^t| \le \Ex [|B_{i,0}^t|] +  \sqrt{T}|N_i \cap C_0|^{\frac{4}{5}}, \forall i \in [n] ~\forall t\in[T]\right] \ge 1-o_n(1).
\]
\end{proof}

Define an event $E^*$ as the intersection of the following events:
\begin{enumerate}
\item Concentration of degrees described in \cref{prop:degree-conc};
\item Concentration of number of neighbors in each class described in \cref{prop:class_degree};
\item Lower bounds of $|J_{i,0}|$ and $|J_{i,1}|$ described in \cref{claim:card_J};
\item Upper bounds of $|B_{i,0}^t|$ and $|B_{i,1}^t|$ described in \cref{claim:card_B}
\end{enumerate}
Then a simple union bound shows that with probability at least $1 - o_n(1)$ then event $E^*$ holds. The follows Lemma bounds the growth rate of sum of exponential of Gaussian random variables.

\begin{lemma}[Sum of exponential of Gaussians]
\label{lem:sum_of_exp}
Let $\eta:\mathbb{R} \rightarrow \mathbb{R}$ be a Lipschitz continuous function such that $|\eta(x) - \eta(y)| \le c_1 |x - y|$ for all $x,y$ and $|\eta(0)| \le c_2$ for some absolute constants $c_1, c_2 \ge 0$. Under the event $E^*$ we have
\begin{enumerate}
\item For all $i \in \mathcal{A} \cap C_0$,
\begin{align*}
	&\Omega(np) \le \sum_{j \in N_i \cap C_0}\exp(\eta(s^T f_{(i,j)})) = O(n(p+q)),\\
	&\Omega(nq) \le \sum_{j \in N_i \cap C_1}\exp(\eta(s^T f_{(i,j)})) = O(n(p+q));
\end{align*}
\item For all $i \in \mathcal{A} \cap C_1$,
\begin{align*}
	&\Omega(nq) \le \sum_{j \in N_i \cap C_0}\exp(\eta(s^T f_{(i,j)})) = O(n(p+q)),\\
	&\Omega(np) \le \sum_{j \in N_i \cap C_1}\exp(\eta(s^T f_{(i,j)})) = O(n(p+q)).
\end{align*}
\end{enumerate}
\end{lemma}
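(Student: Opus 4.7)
The plan is to decompose $N_i \cap C_l$ into the ``nice'' subset $J_{i,l}$ on which $|s^T f_{(i,j)}| \le \sqrt{10}$ and the dyadic bins $B_{i,l}^t$ for $t=1,\ldots,T$ with $T=\lceil\log_2(10\sqrt{\log(n(p+q))})\rceil$. Because $i\in\mathcal{A}$ forces $|s^T f_{(i,j)}| \le 10\sqrt{\log(n(p+q))} \le 2^T$ for every $j\in N_i$, this gives a complete partition of $N_i\cap C_l$. The Lipschitz hypothesis yields the pointwise inequality $|\eta(x)|\le c_1|x|+c_2$, and hence on each piece we have $\exp(\eta(s^T f_{(i,j)})) \le e^{c_2}\exp(c_1|s^T f_{(i,j)}|)$, together with the matching lower bound $\exp(\eta(s^T f_{(i,j)})) \ge e^{-c_1|s^T f_{(i,j)}|-c_2}$.

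For the lower bounds I simply keep the $J_{i,l}$ contribution. On $J_{i,l}$ the Lipschitz bound gives $\exp(\eta(s^T f_{(i,j)}))\ge e^{-c_1\sqrt{10}-c_2}=\Omega(1)$, so the sum is $\Omega(|J_{i,l}|)$. Under $E^*$, Claim~\ref{claim:card_J} together with Proposition~\ref{prop:class_degree} yields $|J_{i,l}|\ge \tfrac{9}{10}|N_i\cap C_l|$, which equals $\Theta(np)$ when the class of $i$ matches $l$ and $\Theta(nq)$ otherwise. This delivers all four lower bounds.

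For the upper bounds I split the sum into the $J_{i,l}$-part and the dyadic part. The $J_{i,l}$-part is at most $e^{c_1\sqrt{10}+c_2}|J_{i,l}| = O(|N_i\cap C_l|)$. For bin $B_{i,l}^t$ the pointwise bound contributes a factor $e^{c_1\cdot 2^t+c_2}$, while Claim~\ref{claim:card_B} controls the cardinality by $\mathbb{E}[|B_{i,l}^t|]+\sqrt{T}|N_i\cap C_l|^{4/5}$. The expectation is bounded via the Gaussian tail by $\mathbb{E}[|B_{i,l}^t|]\le 2|N_i\cap C_l|\exp(-2^{2t-3})$, so summed over $t$ the ``mean'' piece contributes
\begin{equation*}
2e^{c_2}|N_i\cap C_l|\sum_{t=1}^{T}\exp(c_1\cdot 2^t-2^{2t-3}),
\end{equation*}
and since for $t$ beyond an absolute constant $t_0$ the exponent $c_1\cdot 2^t-2^{2t-3}$ becomes strongly negative and geometrically decreasing, the series converges to an absolute constant, giving $O(|N_i\cap C_l|)$. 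The ``deviation'' piece contributes at most $\sqrt{T}|N_i\cap C_l|^{4/5}\sum_{t=1}^{T} e^{c_1\cdot 2^t+c_2} \le T^{3/2}e^{c_2}|N_i\cap C_l|^{4/5}\exp(c_1\cdot 2^T)$, and since $\exp(c_1\cdot 2^T)=\exp(O(\sqrt{\log(n(p+q))}))=(n(p+q))^{o(1)}$ while the assumption $p,q=\Omega(\log^2 n/n)$ makes $|N_i\cap C_l|^{4/5}\cdot(n(p+q))^{o(1)}=o(|N_i\cap C_l|)$, this piece is negligible. Combining gives each sum $=O(|N_i\cap C_l|)$, which is $O(n(p+q))$ by Proposition~\ref{prop:class_degree}. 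The case $i\in\mathcal{A}\cap C_1$ is identical after interchanging the roles of $p$ and $q$.

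The main obstacle is checking that the deviation term $\sqrt{T}|N_i\cap C_l|^{4/5}\exp(c_1\cdot 2^T)$ is truly subdominant; the point is that $2^T$ grows only like $\sqrt{\log(n(p+q))}$, so $\exp(c_1\cdot 2^T)$ is subpolynomial in $n(p+q)$, while the $4/5$ power combined with $np,nq=\omega(\log^2 n)$ beats it. Everything else is a routine application of the Lipschitz bound, the Gaussian tail estimate, and the structural claims collected in $E^*$.
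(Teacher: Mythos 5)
Your overall route is the same as the paper's: lower-bound each sum by restricting to $J_{i,l}$, where the Lipschitz bound gives $\exp(\eta(s^Tf_{(i,j)}))\ge e^{-c_1\sqrt{10}-c_2}=\Omega(1)$, and invoke \cref{claim:card_J} and \cref{prop:class_degree}; upper-bound it by covering $N_i\cap C_l$ (using $i\in\mathcal{A}$ so that $|s^Tf_{(i,j)}|\le 2^T$) with a bounded piece plus the dyadic bins, controlling $|B_{i,l}^t|$ via \cref{claim:card_B}, the Gaussian tail bound $\E[|B_{i,l}^t|]\le 2|N_i\cap C_l|e^{-2^{2t-3}}$, and the absolutely convergent series $\sum_t \exp(c_1 2^t-2^{2t-3}+c_2)$. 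Using $J_{i,l}$ in place of the paper's threshold-one set $\bar B_{i,0}$ only makes the pieces overlap, which is harmless for an upper bound over nonnegative terms.

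However, one step is wrong as stated: the claim that the deviation contribution, of order $T^{3/2}e^{c_2}|N_i\cap C_l|^{4/5}\exp(c_1 2^T)$, is $o(|N_i\cap C_l|)$. That would require $\exp(c_1 2^T)=\exp(\Theta(\sqrt{\log(n(p+q))}))$ to be $o\big(|N_i\cap C_l|^{1/5}\big)$, and this fails when the relevant class degree is only polylogarithmic: take $p$ constant and $q=\Theta(\log^2 n/n)$; then for $i\in C_0$ and $l=1$ you have $|N_i\cap C_1|^{1/5}=\Theta(\log^{2/5}n)$, while $\exp(c_1 2^T)\ge \exp(\Omega(\sqrt{\log n}))$ grows faster than any power of $\log n$ (also note the assumption only gives $np,nq=\Omega(\log^2 n)$, not $\omega(\log^2 n)$). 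So your intermediate assertion that each sum is $O(|N_i\cap C_l|)$ is not established by this argument --- and it is stronger than what the lemma claims. The repair is exactly the paper's comparison: bound the deviation piece against $n(p+q)$ rather than $|N_i\cap C_l|$, using $|N_i\cap C_l|\le |N_i|=O(n(p+q))$ and $T^{3/2}\exp(c_1 2^T+c_2)=O\big((n(p+q))^{1/6}\big)$, which makes it $o(n(p+q))$; combined with the $O(|N_i\cap C_l|)=O(n(p+q))$ contributions from $J_{i,l}$ and from the mean part of the bins, this gives the stated $O(n(p+q))$ upper bound. With that single correction your argument coincides with the paper's proof.
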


\begin{proof}
By the Lipschitz continuity of $\eta$ we know that $|\eta(s^T f_{(i,j)}) - \eta(0)| \le c_1|s^T f_{(i,j)}|$ and hence 
\begin{align}
	&\eta(s^T f_{(i,j)}) \le c_1|s^T f_{(i,j)}| + |\eta(0)| \le c_1|s^T f_{(i,j)}| + c_2, \; \forall (i,j) \in \mathcal{E},\\
	&\eta(s^T f_{(i,j)}) \ge -c_1|s^T f_{(i,j)}| - |\eta(0)| \ge -c_1|s^T f_{(i,j)}| - c_2, \; \forall (i,j) \in \mathcal{E}.
\end{align}
Let $i \in \mathcal{A} \cap C_0$. We have that
\begin{equation}
\label{eq:sum_exp_lb0}
\begin{split}
	\sum_{j \in N_i \cap C_0} \exp(\eta(s^T f_{(i,j)})) 
	&\ge \sum_{j \in J_{i,0}} \exp(\eta(s^T f_{(i,j)})) \ge  \sum_{j \in J_{i,0}} \exp(-c_1|s^T f_{(i,j)}| - c_2) \\
	&\ge \sum_{j \in J_{i,0}} \exp(-c_1\sqrt{10} - c_2) = \Omega(|J_{i,0}|) = \Omega(|N_i \cap C_0|) = \Omega(np),
\end{split}
\end{equation}
and similarly
\begin{equation}
\label{eq:sum_exp_lb1}
	\sum_{j \in N_i \cap C_1} \exp(\eta(s^T f_{(i,j)})) \ge \Omega(|N_i \cap C_1|) = \Omega(nq).
\end{equation}
Therefore it left to obtain the upper bounds. Write
\[
	N_i \cap C_0 = \bar{B}_{i,0} \cup \bigcup_{t\in[T]} B_{i,0}^t
\]
where $\bar{B}_{i,0} := \{j \in N_i \cap C_0 : |s^T f_{(i,j)}| \le 1\}$. It is easy to see that
\begin{equation}
\label{eq:bins0}
	\sum_{j \in \bar{B}_{i,0}} \exp(\eta(s^T f_{(i,j)})) \le \sum_{j \in \bar{B}_{i,0}} \exp(c_1 + c_2) = O(| \bar{B}_{i,0}|) \le O(|N_i \cap C_0|) \le O(n(p+q)).
\end{equation}
On the other hand, we have
\begin{equation}
	 \sum_{t \in [T]}\sum_{j \in B_{i,0}^t}\exp(\eta(s^T f_{(i,j)})) \le  \sum_{t \in [T]}\sum_{j \in B_{i,0}^t}\exp(c_12^t + c_2) =  \sum_{t \in [T]} |B_{i,0}^t|\exp(c_12^t + c_2).
\end{equation}
In order to upper bound the above quantity, note that under event $E^*$ we have $|B_{i,0}^t| \le  E[|B_{i,0}^t|] + \sqrt{T}|N_i \cap C_0|^{4/5}$ for all $t \in [T]$, where
\[
	 E[|B_{i,0}^t|] = \sum_{j \in N_i \cap C_0} P(2^{t-1} \le |s^T f_{(i,j)}| \le 2^t) \le \sum_{j \in N_i \cap C_0}2 P(2^{t-1} \le s^T f_{(i,j)}) \le 2|N_i \cap C_0| \exp(-2^{2t-3}).
\]
Therefore
\begin{equation}
\label{eq:bins1}
\begin{split}
	 &\sum_{t \in [T]} |B_{i,0}^t|\exp(c_12^t + c_2) \\
	 \le~&\sum_{t \in [T]} \left(2|N_i \cap C_0| \exp(-2^{2t-3}) + \sqrt{T}|N_i \cap C_0|^{4/5}\right) \exp(c_12^t + c_2)\\
	 \le~&2|N_i \cap C_0| \sum_{t=1}^{\infty}  \exp(-2^{2t-3} + c_12^t + c_2) + \sum_{t \in [T]} |N_i \cap C_0|^{4/5} \exp(c_12^T + c_2)\\
	 \le~&O(|N_i \cap C_0|) + o(n(p+q))\\
	 \le~&O(n(p+q)).
\end{split}
\end{equation}
The third inequality in the above follows from
\begin{itemize}
\item The infinite series $\sum_{t=1}^{\infty} \exp(-2^{2t-3} + c_12^t + c_2) = c_3$ for some absolute constant $c_3 \ge 0$, because the series converges absolutely for any constants $c_1,c_2 \ge 0$;
\item The finite sum $\sum_{t \in [T]}\sqrt{T} |N_i \cap C_0|^{4/5} \exp(c_12^T + c_2) = o(n(p+q))$ because
\begin{align*}
	\log\left(T^{3/2} \exp(c_12^T + c_2)\right) 
	&= \frac{3}{2} \log \left\lceil \log_2\left(10\sqrt{\log(n(p+q))}\right)\right\rceil + c_12^{ \left\lceil \log_2\left(10\sqrt{\log(n(p+q))}\right)\right\rceil} + c_2\\
	&\le \frac{3}{2} \log \left\lceil \log_2\left(10\sqrt{\log(n(p+q))}\right)\right\rceil  + 20 c_1\sqrt{\log(n(p+q))} + c_2\\
	&\le O\left(\frac{1}{c} \log(n(p+q))\right)
\end{align*}
for any absolute constant $c>0$. Pick $c = 6$ we see that $T^{3/2}\exp(c_12^T + c_2) \le O((n(p+q)^{1/6})$ and hence we get
\begin{align*}
	\sum_{t \in [T]}\sqrt{T} |N_i \cap C_0|^{4/5} \exp(c_12^T + c_2) 
	&= |N_i \cap C_0|^{4/5} T^{3/2}\exp(c_12^T + c_2) \\
	&\le |N_i \cap C_0|^{4/5} \cdot O((n(p+q)^{1/6}) = o(n(p+q)).
\end{align*}
\end{itemize}
Combine \eqref{eq:bins0} and \eqref{eq:bins1} we get that
\begin{equation}
\label{eq:sum_exp_ub0}
	\sum_{j \in N_i \cap C_0}  \exp(\eta(s^T f_{(i,j)})) \le \sum_{j \in \bar{B}_{i,0}}  \exp(\eta(s^T f_{(i,j)})) + \sum_{t \in [T]}\sum_{j \in B_{i,0}^t}  \exp(\eta(s^T f_{(i,j)})) \le O(n(p+q)).
\end{equation}
By repeating the same argument for $N_i \cap C_1$ we have
\begin{equation}
\label{eq:sum_exp_ub1}
	\sum_{j \in N_i \cap C_1}  \exp(\eta(s^T f_{(i,j)})) \le O(n(p+q)).
\end{equation}
Finally, by combining \eqref{eq:sum_exp_lb0}, \eqref{eq:sum_exp_lb1}, \eqref{eq:sum_exp_ub0} and \eqref{eq:sum_exp_ub1} and noticing that our choice of $i \in \mathcal{A} \cap C_0$ was arbitrary, we obtain the claimed results part 1 of the Lemma. The proof for part 2 of the Lemma follows in the same way.
\end{proof}

\begin{proposition}\label{a-prop:gammas_noisy}
Assume that $\|\nu\|_2 \le K \zeta $ for some $K=\mathcal{O}(1)$. Then, with probability at least $1-o_n(1)$, there exists a subset $\mathcal{A} \subseteq [n]$ with cardinality at least $n - o(n)$ such that for all $i \in \mathcal{A}$ the following hold:
\begin{enumerate}
	\item There is a subset $J_{i,0} \subseteq N_i \cap C_0$ with cardinality at least $\frac{9}{10}|N_i \cap C_0|$, such that $\gamma_{ij} = \Theta(1/|N_i|)$ for all $j \in J_{i,0}$;
	\item There is a subset $J_{i,1} \subseteq N_i \cap C_1$ with cardinality at least $\frac{9}{10}|N_i \cap C_1|$, such that $\gamma_{ij} = \Theta(1/|N_i|)$ for all $j \in J_{i,1}$;
	\item If $p \ge q$ we have 
	\begin{align*}
		\Omega(1) \le \frac{\sum_{j \in N_i \cap C_0} \gamma_{ij}}{\sum_{j \in N_i \cap C_1} \gamma_{ij}} \le O(p/q) \ \  \mbox{if} \ \ i \in C_0,\\
		\Omega(1) \le \frac{\sum_{j \in N_i \cap C_1} \gamma_{ij}}{\sum_{j \in N_i \cap C_0} \gamma_{ij}}\le O(p/q) \ \ \mbox{if} \ \ i \in C_1,	
	\end{align*}
	which implies
	\begin{align*}
		-O\left(\frac{p-q}{p+q}\right) \le \sum_{j \in N_i \cap C_1} \gamma_{ij} - \sum_{j \in N_i \cap C_0} \gamma_{ij} \le 0 \ \  \mbox{if} \ \ i \in C_0,\\
		0 \le  \sum_{j \in N_i \cap C_1} \gamma_{ij} - \sum_{j \in N_i \cap C_0} \gamma_{ij} \le O\left(\frac{p-q}{p+q}\right)  \ \  \mbox{if} \ \ i \in C_1.
	\end{align*}
	\item If $p \le q$ we have 
	\begin{align*}
		\Omega(p/q) \le \frac{\sum_{j \in N_i \cap C_0} \gamma_{ij}}{\sum_{j \in N_i \cap C_1} \gamma_{ij}} \le O(1) \ \  \mbox{if} \ \ i \in C_0,\\
		\Omega(p/q) \le \frac{\sum_{j \in N_i \cap C_1} \gamma_{ij}}{\sum_{j \in N_i \cap C_0} \gamma_{ij}}\le O(1) \ \ \mbox{if} \ \ i \in C_1,	
	\end{align*}
	which implies
	\begin{align*}
		0 \le  \sum_{j \in N_i \cap C_1} \gamma_{ij} - \sum_{j \in N_i \cap C_0} \gamma_{ij} \le O\left(\frac{p-q}{p+q}\right)  \ \  \mbox{if} \ \ i \in C_0,\\
		-O\left(\frac{p-q}{p+q}\right) \le \sum_{j \in N_i \cap C_1} \gamma_{ij} - \sum_{j \in N_i \cap C_0} \gamma_{ij} \le 0 \ \  \mbox{if} \ \ i \in C_1.
	\end{align*}
	\item We have
	\[
	    \sum_{j \in N_i} \gamma_{ij}^2 = \Theta\left(\frac{1}{n(p+q)}\right).
	\]
\end{enumerate}
\end{proposition}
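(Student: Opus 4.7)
The plan is to reduce all parts of the proposition to careful applications of Lemma~\ref{lem:sum_of_exp}. First, decompose the attention logit: writing $\Psi(E_{(i,j)}) = \phi((2\epsilon_i-1)(2\epsilon_j-1)\, s^T\nu + \zeta\, s^T f_{(i,j)})$, introduce the univariate maps $\eta_+(x) := \phi(s^T\nu + \zeta x)$ for intra-edges and $\eta_-(x) := \phi(-s^T\nu + \zeta x)$ for inter-edges. Since $\|s\|=1$ forces $|s^T\nu|\le \|\nu\|\le K\zeta$, and $\phi$ is Lipschitz with constant $L$ and satisfies $|\phi(0)|\le R$, both $\eta_\pm$ are Lipschitz with constant $L\zeta$ and satisfy $|\eta_\pm(0)|\le LK\zeta + R$, all independent of $n$. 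Hence Lemma~\ref{lem:sum_of_exp} applies directly to either of them.

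I would condition throughout on the intersection $E^*$ of the high-probability events from Propositions~\ref{prop:degree-conc} and~\ref{prop:class_degree} and from Claims~\ref{a-claim:cal_A},~\ref{claim:card_J} and~\ref{claim:card_B}, which together hold with probability $1 - o_n(1)$. The set $\mathcal{A}$ from Claim~\ref{a-claim:cal_A} serves as the $\mathcal{A}$ in the statement, with $|\mathcal{A}|\ge n - O(n/\log n) = n - o(n)$. Fix $i\in \mathcal{A}\cap C_0$; the case $i\in \mathcal{A}\cap C_1$ is symmetric. Applying Lemma~\ref{lem:sum_of_exp} with $\eta_+$ on the sum indexed by $N_i\cap C_0$ and with $\eta_-$ on $N_i\cap C_1$ gives $A_i := \sum_{j\in N_i\cap C_0}\exp(\Psi(E_{(i,j)})) = \Theta(np)$ and $B_i := \sum_{j\in N_i\cap C_1}\exp(\Psi(E_{(i,j)})) = \Theta(nq)$, so the denominator in the softmax is $D_i = A_i + B_i = \Theta(n(p+q)) = \Theta(|N_i|)$ via Proposition~\ref{prop:degree-conc}.

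Parts~1 and~2 follow immediately: for $j\in J_{i,0}$ the condition $|s^T f_{(i,j)}|\le \sqrt{10}$ together with the Lipschitz bound pins $\Psi(E_{(i,j)}) = \eta_+(s^T f_{(i,j)})$ in a bounded interval, so $\exp(\Psi(E_{(i,j)})) = \Theta(1)$ and $\gamma_{ij} = \Theta(1)/D_i = \Theta(1/|N_i|)$; the cardinality bound $|J_{i,0}|\ge \tfrac{9}{10}|N_i\cap C_0|$ is Claim~\ref{claim:card_J}, and the argument for $J_{i,1}$ is identical with $\eta_-$. For parts~3--4, the ratio $\sum_{j\in N_i\cap C_0}\gamma_{ij}/\sum_{j\in N_i\cap C_1}\gamma_{ij} = A_i/B_i = \Theta(p/q)$, which lies in $[\Omega(1),\,O(p/q)]$ when $p\ge q$ and in $[\Omega(p/q),\,O(1)]$ when $p\le q$. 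Writing $a := A_i/D_i$, $b := B_i/D_i$ with $a+b=1$, and expressing $A_i = c_A np(1\pm o_n(1))$, $B_i = c_B nq(1\pm o_n(1))$ with $c_A,c_B$ positive constants coming from the Gaussian integrals $\mathbb{E}[\exp(\eta_\pm(X))]$, one obtains $b-a = (c_B q - c_A p)/(c_A p + c_B q)\cdot(1\pm o_n(1))$, from which both the sign statement and the $O((p-q)/(p+q))$ magnitude bound follow. Finally, part~5 is another application of Lemma~\ref{lem:sum_of_exp} to $2\eta_\pm$ (Lipschitz constant $2L\zeta$, base value bounded by $2(LK\zeta+R)$, both constants), giving $\sum_{j\in N_i}\exp(2\Psi(E_{(i,j)})) = \Theta(n(p+q))$, hence $\sum_{j\in N_i}\gamma_{ij}^2 = \Theta(n(p+q))/D_i^2 = \Theta(1/(n(p+q)))$.

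The main obstacle I foresee is tightening the loose upper bound $O(n(p+q))$ stated in Lemma~\ref{lem:sum_of_exp} to the class-specific $O(|N_i\cap C_l|)$ needed to separate the growth rates of $A_i = \Theta(np)$ and $B_i = \Theta(nq)$; without this, the ratio and difference bounds in parts~3--4 would collapse. Revisiting the proof of Lemma~\ref{lem:sum_of_exp}, the leading small-magnitude bin contributes $O(|N_i\cap C_l|)$ and the tail bins contribute $|N_i\cap C_l|^{4/5}\cdot O((n(p+q))^{1/6}) = o(|N_i\cap C_l|)$, using $|N_i\cap C_l| = \Omega(n\min\{p,q\})$ and $\min\{p,q\} = \Omega(\log^2 n/n)$, so the refinement goes through. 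A secondary subtlety is verifying that the constants $c_A, c_B$ in the leading-order estimates of $A_i, B_i$ are strictly positive and bounded, which is where the assumption that $\phi$ is Lipschitz with a bounded value at zero becomes essential; the sign determination in parts~3--4 then follows from the direct comparison of $c_A p$ against $c_B q$ combined with $c_A/c_B\in[e^{-2LK\zeta},e^{2LK\zeta}]$, a consequence of $|\eta_+(x)-\eta_-(x)|\le 2L|s^T\nu|$.
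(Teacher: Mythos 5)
Your handling of parts 1, 2 and 5 matches the paper's proof essentially verbatim: condition on the intersection of the events in \cref{prop:degree-conc}, \cref{prop:class_degree}, \cref{a-claim:cal_A}, \cref{claim:card_J}, \cref{claim:card_B}, take $\mathcal{A}$ from \cref{a-claim:cal_A}, bound the numerator on $J_{i,0}$, $J_{i,1}$ by Lipschitzness, and control the softmax denominator (resp.\ the sum of $\exp(2\Psi)$) via \cref{lem:sum_of_exp}; that part is fine.

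The gap is in parts 3--4. You claim the class-specific two-sided bounds $A_i=\Theta(np)$, $B_i=\Theta(nq)$ are indispensable and that with only the $O(n(p+q))$ upper bound of \cref{lem:sum_of_exp} the ratio and difference bounds ``would collapse.'' In fact the stated brackets are calibrated so that the lemma's asymmetric bounds suffice, which is exactly the paper's argument: for $p\ge q$ and $i\in C_0$, $A_i/B_i \ge \Omega(np)/O(n(p+q)) = \Omega\big(\tfrac{p}{p+q}\big)=\Omega(1)$ and $A_i/B_i \le O(n(p+q))/\Omega(nq)=O\big(\tfrac{p+q}{q}\big)=O(p/q)$. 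Worse, your proposed tightening of \cref{lem:sum_of_exp} is arithmetically wrong in the unbalanced regime: the tail-bin contribution is only bounded by $|N_i\cap C_l|^{4/5}\cdot O\big((n(p+q))^{1/6}\big)$, which is $o(n(p+q))$ but not $o(|N_i\cap C_l|)$ unless $|N_i\cap C_l|=\omega\big((n(p+q))^{5/6}\big)$; take $p=\Theta(1)$, $q=\Theta(\log^2 n/n)$ and $i\in C_0$, so $|N_i\cap C_1|=\Theta(\log^2 n)$ while the error term is of order $n^{1/6}$ times polylogarithmic factors. (Your inequality ``$(n(p+q))^{1/6}=o((n\min\{p,q\})^{1/5})$'' fails there.) In that regime even $B_i=O(nq)$ is not available with these tools: membership in $\mathcal{A}$ only caps individual summands at $\exp\big(O(\sqrt{\log(n(p+q))})\big)$, which already exceeds $nq$, and for a linear $\phi$ a single inter-edge with $|s^Tf_{(i,j)}|\approx\sqrt{2\log n}$ makes $B_i\gg nq$ for some $i\in\mathcal{A}$. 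Consequently the expansion $A_i=c_A np(1\pm o_n(1))$, $B_i=c_B nq(1\pm o_n(1))$ is unsubstantiated (it would also need an LLN-type concentration for heavy-tailed summands with possibly only polylogarithmically many terms, uniformly over $n$ nodes), and your sign determination by comparing $c_Ap$ with $c_Bq$ under only $c_A/c_B\in[e^{-2LK\zeta},e^{2LK\zeta}]$ cannot fix the sign when $p/q$ lies inside that constant window. The repair is simply to drop the refinement and derive parts 3--4 directly from \cref{lem:sum_of_exp} as stated, which is what the paper does.
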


\begin{proof}
To obtain part 1 and part 2 of the Proposition, let $j \in J_{i,0}$, then we may write
\[
    \gamma_{ij} = \frac{\exp(\eta(s^T f_{(i,j)}))}{\sum_{l \in N_i}\exp(\eta(s^T f_{(i,l)}))}
\]
where for $l \in N_i$ we set $\eta(s^Tf_{(i,l)}) := \phi((2\epsilon_i-1)(2\epsilon_l-1)s^T\nu + \zeta s^Tf_{i,l})$. Since $j \in J_{i,0}$ we have that
\begin{equation}
\begin{split}
\label{eq:unif_gamma_top}
    |\eta(s^T f_{(i,j)})| 
    &= |\phi((1-\epsilon_l)s^T\nu + \zeta s^Tf_{i,l})|
    \le L|(1-\epsilon_l)s^T\nu + \zeta s^Tf_{i,l})| + R \\
    &\le 2LK\zeta + \sqrt{10}\zeta + R = O(1),
\end{split}
\end{equation}
and thus $\exp(\eta(s^T f_{(i,j)})) = \Theta(1)$. Moreover, $\eta$ is Lipschitz continuous with Lipschitz constant $L\zeta = O(1)$ and satisfies $|\eta(0)| \le 2L|s^T\nu| \le 2L\|\nu\| \le 2LK\zeta = O(1)$, we may use Lemma~\ref{lem:sum_of_exp} and get
\begin{equation}
\label{eq:unif_gamma_bot}
    \sum_{l \in N_i}\exp(\eta(s^T f_{(i,l)})) = \Theta(|N_i|).
\end{equation}
Combine \eqref{eq:unif_gamma_top} and \eqref{eq:unif_gamma_bot} we get $\gamma_{ij} = \Theta(1/|N_i|)$. Since our choice of $j \in J_{i,0}$ was arbitrary and $|J_{i,0}| \ge \frac{9}{10}|N_i \cap C_0|$, this proves part 1. The proof for part 2 follows in the same way.

To obtain part 3 and part 4 of the Proposition, we consider again
\[
     \gamma_{ij} = \frac{\exp(\eta(s^T f_{(i,j)}))}{\sum_{l \in N_i}\exp(\eta(s^T f_{(i,l)}))}
\]
for the same $\eta$ defined above. By writing
\[
    \frac{\sum_{j \in N_i \cap C_0} \gamma_{ij}}{\sum_{j \in N_i \cap C_1} \gamma_{ij}} = \frac{\sum_{j \in N_i \cap C_0} \exp(\eta(s^T f_{(i,j)}))}{\sum_{j \in N_i \cap C_1} \exp(\eta(s^T f_{(i,j)}))}.
\]
and using Lemma~\ref{lem:sum_of_exp} to bound the numerator and denominator separately we obtain the claimed results.

To see part 5 of the Proposition, we may write
\[
    \sum_{j \in N_i} \gamma_{ij}^2 = \frac{\sum_{j \in N_i} \exp(\eta_2(s^T f_{(i,j)}))}{\left(\sum_{j \in N_i} \exp(\eta_1(s^T f_{(i,j)}))\right)^2}
\]
where
\begin{align*}
\eta_1(s^T f_{(i,j)}) &:= \phi((2\epsilon_i-1)(2\epsilon_j-1)s^T\nu + \zeta s^T f_{(i,j)})),\\
\eta_2(s^T f_{(i,j)}) &:= 2 \cdot \eta_1(s^T f_{(i,j)}).
\end{align*}

The function $\eta_1$ is Lipschitz continuous with Lipschitz constant $2L\zeta = O(1)$ and satisfies $|\eta_1(0)| = 2|\phi(0)| \le 4L|s^T\nu| \le 4L\|\nu\| = O(1)$; the function $\eta_2$ is Lipschitz continuous with Lipschitz constant $L\zeta = O(1)$ and satisfies $|\eta_2(0)| = |\phi(0)| \le 2L|s^T\nu| \le 2L\|\nu\| = O(1)$. Therefore we may use Lemma~\ref{lem:sum_of_exp} and get
\[
    \frac{\sum_{j \in N_i} \exp(\eta_1(s^T f_{(i,j)}))}{\left(\sum_{j \in N_i} \exp(\eta_2(s^T f_{(i,j)}))\right)^2} = \frac{\Theta(n(p+q))}{\Theta(n(p+q))^2} = \Theta\left(\frac{1}{n(p+q)}\right).
\]
\end{proof}

\begin{theorem}\label{a-thm:noisy}
Let $(A,X, E)\sim CSBM(n,p,q,\mu,\nu,\sigma, \zeta)$, and assume that $\|\nu\| \le K \zeta $ for some $K=\mathcal{O}(1)$.
\begin{enumerate}
    \item If $\|\mu\| = \omega\left(\sigma\frac{p+q}{|p-q|}\sqrt{\frac{\log n}{n\max(p,q)}}\right)$, with probability $1-o_n(1)$ graph attention classifies all nodes correctly.
    \item If $\|\mu\| \le K' \sigma \frac{p+q}{|p-q|}\sqrt{\frac{\log n}{n\max(p,q)}(1-\max(p,q))}$ for some constant $K'$  and if $\max(p,q) \le 1-36 \log n / n$, then for any fixed $\|w\| = 1$ graph attention fails to perfectly classify the nodes with probability at least $1-2\exp(-c'(1-\max(p,q))\log n)$ for some constant $c'$.
\end{enumerate}
\end{theorem}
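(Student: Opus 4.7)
The plan is to pick $\Psi\equiv 0$, which makes every attention coefficient exactly $\gamma_{ij}=1/|N_i|$, i.e.\ graph attention degenerates to standard graph convolution, and then use the classifier $w:=\sign(p-q)\mu/\|\mu\|$. For a fixed node $i\in C_0$ (say $p>q$), I decompose $w^T\hat x_i$ into a deterministic mean part and an independent Gaussian noise part. Using \cref{prop:class_degree} (class-wise degree concentration), the mean part equals $-\|\mu\|\cdot\tfrac{p-q}{p+q}(1\pm o_n(1))$; the noise part has distribution $N(0,\sigma^2/|N_i|)$, and by a union bound over the $n$ nodes its maximum absolute value is $O(\sigma\sqrt{\log n/(n(p+q))})$ with probability $1-o_n(1)$. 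Since $p+q=\Theta(\max(p,q))$, the assumed lower bound $\|\mu\|=\omega(\sigma\tfrac{p+q}{|p-q|}\sqrt{\log n/(n\max(p,q))})$ ensures the mean strictly dominates the noise uniformly, so every node is classified correctly; the case $q>p$ is symmetric.

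\textbf{Part 2 (negative).} I follow the same template as part 2 of \cref{a-thm:clean}. Fix any $w$ with $\|w\|=1$. For each $i$, write $w^T\hat x_i = M_i + Z_i$, where $M_i := w^T\mu\bigl(\sum_{j\in N_i\cap C_1}\gamma_{ij} - \sum_{j\in N_i\cap C_0}\gamma_{ij}\bigr)$ and $Z_i := \sigma\sum_{j\in N_i}\gamma_{ij}\,w^Tg_j$ is a centered Gaussian (writing $x_j = (1-2\epsilon_j)\mu + \sigma g_j$). Parts 3--4 of \cref{a-prop:gammas_noisy} give $|M_i|\le O(\|\mu\|\cdot|p-q|/(p+q))$, and under the assumed upper bound on $\|\mu\|$ this is at most $K''\sigma\sqrt{(\log n/(n\max(p,q)))(1-\max(p,q))}$ for some constant $K''$ proportional to $K'$. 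Thus the event of correctly classifying every node in $C_0$ is contained in $\{\max_{i\in C_0}Z_i < K''\sigma\sqrt{(\log n/(n\max(p,q)))(1-\max(p,q))}\}$ (up to flipping signs for the $q>p$ case), and it suffices to show this event has small probability.

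To lower bound $\E[\max_{i\in C_0}Z_i]$, I compute for $i\ne j\in C_0$ the canonical metric
\[
    \E[(Z_i-Z_j)^2]\;=\;\sigma^2\Bigl(\sum_{l\in N_i\setminus N_j}\gamma_{il}^2+\sum_{l\in N_j\setminus N_i}\gamma_{jl}^2+\sum_{l\in N_i\cap N_j}(\gamma_{il}-\gamma_{jl})^2\Bigr).
\]
Dropping the last sum and restricting the first two to the ``nice'' indices provided by \cref{claim:card_J2}, \cref{a-prop:gammas_noisy} part 1 gives $\gamma_{il}=\Theta(1/(n(p+q)))$ on this subset, while \cref{prop:temp} supplies $\Omega(n(p+q)(1-\max(p,q)))$ such indices (using $p+q-p^2-q^2\ge (p+q)(1-\max(p,q))$). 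Hence $d_T(i,j)\ge\Omega(\sigma\sqrt{(1-\max(p,q))/(n(p+q))})$ simultaneously for all pairs, and Sudakov's minoration over $|C_0|=\Theta(n)$ indices yields $\E[\max_{i\in C_0}Z_i]\ge \Omega(\sigma\sqrt{(1-\max(p,q))\log n/(n(p+q))})$. Finally, $\Var(Z_i)=\sigma^2\sum_j\gamma_{ij}^2=\Theta(\sigma^2/(n(p+q)))$ by part 5 of \cref{a-prop:gammas_noisy}, so Borell's inequality gives $P(\max_{i\in C_0}Z_i\le\E[\max_{i\in C_0}Z_i]-t)\le 2\exp(-Ct^2 n(p+q)/\sigma^2)$. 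Choosing $K'$ small enough that the gap $t$ is still $\Omega(\sigma\sqrt{(1-\max(p,q))\log n/(n(p+q))})$ delivers the stated bound $2\exp(-c'(1-\max(p,q))\log n)$; repeating for $C_1$ completes the proof.

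The main obstacle is the Sudakov step: I need a lower bound on $d_T(i,j)$ that holds \emph{uniformly over all pairs} on a single high-probability event, which is why I must invoke the structural claims (\cref{claim:card_J2} and parts 1, 5 of \cref{a-prop:gammas_noisy}) rather than merely compute expectations. A secondary subtlety is that the bound from \cref{a-prop:gammas_noisy} parts 3--4 controls $|M_i|$ only in absolute value and with a sign that depends on whether $p>q$ or $p<q$; I must verify that whichever sign occurs is consistent with a misclassification event, but since the theorem asserts that perfect classification \emph{fails}, exhibiting a single node whose $Z_i$ exceeds $|M_i|$ in magnitude is enough, which is exactly what the Sudakov--Borell combination provides.
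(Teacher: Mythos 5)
Your proposal is correct and follows essentially the same route as the paper: part 1 by setting $\Psi\equiv 0$ (uniform coefficients) with $w=\sign(p-q)\mu/\|\mu\|$ and a mean-versus-noise concentration argument, and part 2 by reducing perfect classification to a max-of-Gaussians event and combining Sudakov minoration (via \cref{prop:temp}, \cref{claim:card_J2} and the $\Theta(1/(n(p+q)))$ coefficient estimates) with Borell's inequality. The only nuance is that the bounds from \cref{a-prop:gammas_noisy} (parts 1, 3--5) hold only for nodes in the high-probability set $\mathcal{A}$, so the containment and the Sudakov step should be run over $\mathcal{A}\cap C_0$ (still of size $\Theta(n)$, so nothing changes), which is exactly how the paper phrases it.
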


\begin{proof}
    We will prove the results for $p>q$ since the analysis for $q>p$ is similar. We start by proving part 2 of the theorem. 
     
     We will prove that the probability of classifying the nodes in set $\mathcal{A}$ (\cref{a-claim:cal_A}) correctly is very small. Thus the probability of classifying all nodes correctly is very small.
     Let us start with the event of correct classification of nodes in $\mathcal{A}$, which is 
    \begin{align*}
        \left(-\sum_{j\in N_i \cap C_0} \gamma_{ij} + \sum_{j\in N_i \cap C_1} \gamma_{ij}\right)w^T\mu + \sigma \sum_{j\in N_i}\gamma_{ij} w^T g_j < 0, \ \forall i \in \mathcal{A} \cap C_0\\
        \left(-\sum_{j\in N_i \cap C_0} \gamma_{ij} + \sum_{j\in N_i \cap C_1} \gamma_{ij}\right)w^T\mu + \sigma \sum_{j\in N_i} \gamma_{ij} w^T g_j > 0, \ \forall i\in \mathcal{A} \cap  C_1,
    \end{align*}
    Using part 3 of~\cref{a-prop:gammas_noisy}, $\|w\|=1$ and $w^T\mu \le \|\mu\| \le K' \sigma \frac{p+q}{|p-q|}\sqrt{\frac{\log n}{n\max(p,q)}(1-\max(p,q))}$ for some constant $K'$, consider the larger event
    \begin{align*}
         \sigma \sum_{j\in N_i}\gamma_{ij} w^T g_j & < K' \sigma \sqrt{\frac{\log n}{n\max(p,q)}(1-\max(p,q))}, \ \forall i \in \mathcal{A} \cap C_0\\
         \sigma \sum_{j\in N_i} \gamma_{ij} w^T g_j & > -K' \sigma \sqrt{\frac{\log n}{n\max(p,q)}(1-\max(p,q))}, \ \forall i\in \mathcal{A} \cap  C_1,
    \end{align*}
    and the equivalent event
    \begin{align*}
         \max_{i\in \mathcal{A} \cap C_0} \sigma \sum_{j\in N_i}\gamma_{ij} w^T g_j & < K' \sigma \sqrt{\frac{\log n}{n\max(p,q)}(1-\max(p,q))}, \\
         \min_{i\in \mathcal{A} \cap C_1} \sigma \sum_{j\in N_i} \gamma_{ij} w^T g_j & > -K' \sigma \sqrt{\frac{\log n}{n\max(p,q)}(1-\max(p,q))}.
    \end{align*}
    Let's bound the probability of correct classification for $C_0$, i.e.,
    \begin{align*}
        &P\left(\max_{i\in \mathcal{A} \cap C_0} \sum_{j\in N_i}\gamma_{ij} w^T g_j < K' \sqrt{\frac{\log n}{n\max(p,q)}(1-\max(p,q))}\right)\\
        =~& P\left(\max_{i\in \mathcal{A} \cap C_0} \sum_{j\in N_i}\gamma_{ij} w^T g_j < K' \sqrt{\frac{\log n}{np}(1-p)}\right),
    \end{align*}
    the result for $C_1$ is similar. 
    We will utilize Sudakov's minoration inequality~\cite[Section 7.4]{Vershynin:2018} to obtain a lower bound on the expected supremum of the corresponding Gaussian process, and then use Borell's inequality~\cite[Section 2.1]{Adler:2007} to upper bound the probability.
    
    Let $Z_i:=\sum_{j\in N_i}\gamma_{ij} w^T g_j$ for $i\in \mathcal{A} \cap C_0$. 
    To apply Sudakov's minoration result, we also define the canonical metric $d_T(i,j) =\sqrt{\E[(Z_i - Z_j)^2]}$ for any $i,j\in \mathcal{A} \cap C_0$. In what follows we will first compute $\E[(Z_i - Z_j)^2]$ and then the metric. Using \cref{prop:degree-conc}, \cref{prop:temp}, \cref{claim:card_J2} and $p > q$ we have that
    \begin{align*}
        \E[(Z_i - Z_j)^2] 
        & = \E\left[\left(\sum_{l\in N_i}\gamma_{il} w^T g_l\right)^2 + \left(\sum_{l\in N_j}\gamma_{jl} w^T g_l\right)^2 - 2 \left(\sum_{l\in N_i}\gamma_{il} w^T g_l\right)\left(\sum_{l\in N_j}\gamma_{jl} w^T g_l\right)\right] \\ 
        & = \sum_{l\in N_i}\gamma_{il}^2 + \sum_{l\in N_j}\gamma_{jl}^2 - 2\sum_{l\in N_i \cap N_j} \gamma_{il}\gamma_{jl}
        \ge \sum_{l \in (N_i \backslash (N_i \cap N_j))\cap C_0} \gamma_{il}^2 + \sum_{l \in (N_j \backslash (N_i \cap N_j))\cap C_0} \gamma_{jl}^2\\
        & \ge \sum_{l \in \widehat{J}_{ij}\cap C_0} \gamma_{il}^2 
        = |\widehat{J}_{ij}\cap C_0| \cdot \Theta\left(\frac{1}{n^2(p+q)^2}\right)
        = \Theta\left(\frac{|((N_i \cup N_j)\backslash(N_i \cap N_j))\cap C_0|}{n^2(p+q)^2}\right)\\
        & = \Theta\left(\frac{\frac{n}{2}(p-p^2)}{n^2(p+q)^2}\right)
        =\Theta\left(\frac{n(p-p^2)}{n^2p^2}\right)
        = \Theta\left(\frac{1}{np}(1-p)\right).
    \end{align*}
    Therefore we have that
    $$
    d_T(i,j) \ge \Theta\left(\sqrt{\frac{1}{np}}\sqrt{1-p}\right).
    $$
    Using this result in Sudakov's minoration inequality, we obtain that
    $$\E[\max_i Z_i] \ge c_1 \sqrt{\frac{\log n}{np}(1-p)}.
    $$
    for some absolute constant $c_1$.
    We now use Borell's inequality~\cite[Section 2.1]{Adler:2007} and the fact that the variance of the Gaussian data after graph attention convolution is $\Theta(1/n(p+q)) = \Theta(1/np)$ (see part 5 of~\cref{a-prop:gammas_noisy}) to obtain that for any $t>0$,
    \begin{align*}
        &P\left(\max_{i\in \mathcal{A} \cap C_0} Z_i \le \E \max_{i\in \mathcal{A} \cap C_0} Z_i - t\right) \le 2\exp(-Ct^2np)\\
        \implies \quad & P\left(\max_{i\in \mathcal{A} \cap C_0} Z_i \le c_1 \sqrt{\frac{\log n}{np}(1-p)} - t\right) \le 2\exp(-Ct^2 np)
    \end{align*}
    for some absolute constant $C>0$. Now, for an appropriate constant $K'$ we may set $t$ such that
    \begin{align*}
    t & =c_1 \sqrt{\frac{\log n}{n(p+q)}(1-p)} - K'  \sqrt{\frac{\log n}{np}(1-p)}\\ 
    & = \Omega\left(\sqrt{\frac{\log n}{np}(1-p)}\right).
    \end{align*}
    Use this $t$ in the above probability we get that for some constant $c_2>0$
    \begin{align*}
        P\left(\max_{i\in \mathcal{A} \cap C_0} Z_i \le K' \sqrt{\frac{\log n}{np}(1-p)}\right)\le \exp(-c_2(1-p)\log n).
    \end{align*}
    This means that the event of classifying all nodes in $\mathcal{A} \cap C_0$ correctly has probability at most $\exp(-c_2(1-p)\log n)$. The same holds for nodes in $\mathcal{A} \cap C_1$. Thus the probability of classifying all nodes correctly in $\mathcal{A}$ is at most $\exp(-c_3(1-p)\log n)$ for some constant $c_3 > 0$. Therefore, the probability of classifying all nodes correctly is at most $\exp(-c_3(1-p)\log n)$.
    
    We will now prove part 1. We will prove the result for $p>q$, since the analysis for $q>p$ is similar.
    Define $w:=\sign(p-q)\mu/\|\mu\|$. Also, set $\Psi=0$, which means that all attention coefficients are exactly uniform and graph attention reduces to graph convolution. It is enough to prove the result for this setting of attention coefficients since we will see that this setting matches the lower bound of misclassification from part 2 of the theorem if $p\in [\log^2 n / n, 1 - \epsilon]$ for any constant $\epsilon\in(0,1)$. Therefore, other attention architectures will only offer negligible improvement over graph convolution, if any.
    
    Let $x_i=(2\epsilon_i - 1) \mu + \sigma g_i$, where $g_i \sim N(0,I)$. Let $i\in C_0$ and $\tilde{g}_i\sim N(0,1)$. Using $\gamma_{ij}=1/|N_i|$ and concentration of $|N_i|$ from~\cref{prop:degree-conc} we get
    \begin{align*}
        \hat x_i & = \sum_{j\in [n]} \tilde{A}_{ij}\gamma_{ij} w^T x_j \\ 
                 & = -\sum_{j\in N_i \cap C_0} \frac{2}{n(p+q)}\|\mu\|(1\pm o_n(1)) + \sum_{j\in N_i \cap C_1} \frac{1}{n(p+q)}\|\mu\|(1\pm o_n(1)) \\
                 & \ \ \  + \sigma \sum_{j\in N_i \cap C_0} \frac{2}{n(p+q)} \tilde{g}_i (1\pm o_n(1)) + \sigma \sum_{j\in N_i \cap C_1} \frac{1}{n(p+q)} \tilde{g}_i(1\pm o_n(1))
    \end{align*}
    Let us first work with the sums for $\|\mu\|$. Using~\cref{prop:class_degree} we have that 
    $$
        \sum_{j\in N_i \cap C_0} \frac{2}{n(p+q)}\|\mu\|(1\pm o_n(1)) = \frac{p}{p+q}\|\mu\|(1\pm o_n(1)) 
    $$
    and
    $$
        \sum_{j\in N_i \cap C_1} \frac{1}{n(p+q)}\|\mu\|(1\pm o_n(1))  = \frac{q}{p+q}\|\mu\|(1\pm o_n(1))
    $$
    Putting the two sums for $\|\mu\|$ together we have that 
    $$
    -\sum_{j\in N_i \cap C_0} \frac{2}{n(p+q)}\|\mu\|(1\pm o_n(1)) + \sum_{j\in N_i \cap C_1} \frac{1}{n(p+q)}\|\mu\| = -\frac{p-q}{p+q}\|\mu\|(1\pm o_n(1))
    $$
    Let us now work with the sum of noise over $N_i\cap C_0$. This is a sum of $|N_i\cap C_0|$ standard normals. From Theorem 2.6.3. (General Hoeffding’s inequality) and from concentration of $|N_i\cap C_0|$ from~\cref{prop:class_degree} we have that 
    $$
    P\left( \left|\sum_{j\in N_i \cap C_0} \Theta\left(\frac{1}{n(p+q)}\right) \tilde{g}_i\right| \ge \sqrt{\frac{10 C^2 \log{n}}{n (p+q) c}} \right) \le 2\exp\left( -10\log{n}\right),
    $$
    where $c$ is a constant, and $C$ is the sub-Gaussian constant for $\tilde{g}_i$. Taking a union bound over all $i\in C_0$, we have that with probability $1-o_n(1)$ we have that 
    $$\left|\sum_{j\in N_i \cap C_0} \Theta\left(\frac{1}{n(p+q)}\right) \tilde{g}_i\right| < \sqrt{\frac{10 C^2 \log{n}}{n (p+q) c}} \ \forall i\in C_0.
    $$
    Using similar concentration arguments we get that the second sum of the noise over $N_i \cap C_1$ is of similar order. Thus, since $\|\mu\|\ge \omega\left(\sigma \frac{p+q}{|p-q|}\sqrt{\frac{\log{n}}{n(p+q)}}\right)$ (because $\max(p,q) = \Theta(p+q)$) we get 
    $$
        \hat x_i =-\|\mu\|(1\pm o_n(1)) + o(\|\mu\|).
    $$
    with probability $1-o_n(1)$. Therefore, with high probability nodes in $C_0$ are correctly classified. Using the same procedure for nodes in $C_1$ we get that these nodes are also classified correctly.

\end{proof}
\section{More experiments on real data}

In~\cref{a-table:1} and~\cref{a-table:2} we present the results for all classes for the data in the main paper. 

\begin{table}[ht!]
\caption{Percentages of intra- and inter-mass (intra-m and inter-m respectively) allocation for graph attention (GA) and graph convolution (GC), and test accuracy. We illustrate results for class $0$ and $1$ of each dataset, the rest are shown in the appendix. The first two rows of each dataset correspond to class $0$ and the latter to correspond to class $1$.} \vspace{0.1cm}
\centering
 \begin{tabular}{||c | c c c c c ||} 
 \hline
 data & class & method & intra-m & inter-m  & acc.\\ [0.5ex] 
 \hline\hline
\multirow{20}{*}{\rotatebox[origin=c]{90}{Amzn Co.}} & \multirow{2}{*}{$0$} & GC & $98.7$ & $1.3$ & $96.8$ \\
& & GA & $98.1$ & $1.9$ & $96.7$ \\ \cline{2-6}
& \multirow{2}{*}{$1$} & GC & $93.6$ & $6.4$ & $91.4$ \\
& & GA & $93.3$ & $6.7$ & $88.7$ \\ \cline{2-6}
& \multirow{2}{*}{$2$} & GC & $98.1$ & $1.9$ & $95.8$ \\
& & GA & $97.8$ & $2.2$ & $92.1$ \\ \cline{2-6}
& \multirow{2}{*}{$3$} & GC & $97.5$ & $2.5$ & $96.0$ \\
& & GA & $96.0$ & $4.0$ & $96.0$ \\ \cline{2-6}
& \multirow{2}{*}{$4$} & GC & $89.4$ & $10.6$ & $89.7$ \\
& & GA & $89.4$ & $10.6$ & $83.3$ \\ \cline{2-6}
& \multirow{2}{*}{$5$} & GC & $99.6$ & $0.4$ & $97.8$ \\
& & GA & $99.5$ & $0.5$ & $98.0$ \\ \cline{2-6}
& \multirow{2}{*}{$6$} & GC & $97.0$ & $3.0$ & $96.4$ \\
& & GA & $96.5$ & $3.5$ & $96.5$ \\ \cline{2-6}
& \multirow{2}{*}{$7$} & GC & $99.1$ & $0.9$ & $96.8$ \\
& & GA & $98.5$ & $1.5$ & $94.9$ \\ \cline{2-6}
& \multirow{2}{*}{$8$} & GC & $91.8$ & $8.2$ & $88.9$ \\
& & GA & $91.7$ & $8.3$ & $86.5$ \\ \cline{2-6}
& \multirow{2}{*}{$9$} & GC & $99.3$ & $0.7$ & $97.9$ \\
& & GA & $99.0$ & $1.0$ & $98.2$ \\
\hline \hline
\multirow{16}{*}{\rotatebox[origin=c]{90}{Amzn Ph.}} & \multirow{2}{*}{$0$} & GC & $99.0$ & $1.0$ & $95.1$ \\
& & GA & $98.8$ & $1.2$ & $96.0$ \\ \cline{2-6}
& \multirow{2}{*}{$1$} & GC & $94.9$ & $5.1$ & $94.2$ \\
& & GA & $95.5$ & $4.5$ & $89.0$ \\ \cline{2-6}
& \multirow{2}{*}{$2$} & GC & $99.4$ & $0.6$ & $96.5$ \\
& & GA & $99.1$ & $0.9$ & $92.8$ \\ \cline{2-6}
& \multirow{2}{*}{$3$} & GC & $93.6$ & $6.4$ & $91.5$ \\
& & GA & $91.5$ & $8.5$ & $88.6$ \\ \cline{2-6}
& \multirow{2}{*}{$4$} & GC & $96.2$ & $3.8$ & $88.4$ \\
& & GA & $94.5$ & $5.5$ & $88.6$ \\ \cline{2-6}
& \multirow{2}{*}{$5$} & GC & $99.7$ & $0.3$ & $98.6$ \\
& & GA & $99.6$ & $0.4$ & $95.1$ \\ \cline{2-6}
& \multirow{2}{*}{$6$} & GC & $94.3$ & $5.7$ & $86.8$ \\
& & GA & $92.7$ & $7.3$ & $82.4$ \\ \cline{2-6}
& \multirow{2}{*}{$7$} & GC & $95.7$ & $4.3$ & $95.5$ \\
& & GA & $94.0$ & $6.0$ & $95.5$ \\
\hline
\end{tabular}
\label{a-table:1}
\end{table}

\begin{table}[ht!]
\caption{Percentages of intra- and inter-mass (intra-m and inter-m respectively) allocation for graph attention (GA) and graph convolution (GC), and test accuracy. We illustrate results for class $0$ and $1$ of each dataset, the rest are shown in the appendix. The first two rows of each dataset correspond to class $0$ and the latter to correspond to class $1$.} \vspace{0.1cm}
\centering
 \begin{tabular}{||c | c c c c c ||} 
 \hline
 data & class & method & intra-m & inter-m  & acc.\\ [0.5ex] 
 \hline\hline
\multirow{14}{*}{\rotatebox[origin=c]{90}{Cora}} & \multirow{2}{*}{$0$} & GC & $94.2$ & $5.8$ & $89.5$ \\
& & GA & $94.7$ & $5.3$ & $88.6$ \\ \cline{2-6}
& \multirow{2}{*}{$1$} & GC & $97.3$ & $2.7$ & $92.3$ \\
& & GA & $97.4$ & $2.6$ & $93.7$ \\ \cline{2-6}
& \multirow{2}{*}{$2$} & GC & $97.9$ & $2.1$ & $86.9$ \\
& & GA & $98.1$ & $1.9$ & $90.8$ \\ \cline{2-6}
& \multirow{2}{*}{$3$} & GC & $93.6$ & $6.4$ & $70.7$ \\
& & GA & $94.1$ & $5.9$ & $73.3$ \\ \cline{2-6}
& \multirow{2}{*}{$4$} & GC & $96.3$ & $3.7$ & $88.8$ \\
& & GA & $96.6$ & $3.4$ & $88.8$ \\ \cline{2-6}
& \multirow{2}{*}{$5$} & GC & $96.8$ & $3.2$ & $91.7$ \\
& & GA & $96.7$ & $3.3$ & $91.3$ \\ \cline{2-6}
& \multirow{2}{*}{$6$} & GC & $98.1$ & $1.9$ & $94.9$ \\
& & GA & $98.2$ & $1.8$ & $95.1$ \\
\hline \hline
\multirow{6}{*}{\rotatebox[origin=c]{90}{PubMed}} & \multirow{2}{*}{$0$} & GC & $92.2$ & $7.8$ & $82.1$ \\
& & GA & $92.1$ & $7.9$ & $82.8$ \\ \cline{2-6}
& \multirow{2}{*}{$1$} & GC & $91.4$ & $8.6$ & $58.8$ \\
& & GA & $90.6$ & $9.4$ & $59.5$ \\ \cline{2-6}
& \multirow{2}{*}{$2$} & GC & $89.4$ & $10.6$ & $60.8$ \\
& & GA & $89.1$ & $10.9$ & $61.8$ \\
\hline \hline
\multirow{12}{*}{\rotatebox[origin=c]{90}{CiteSeer}} & \multirow{2}{*}{$0$} & GC & $94.3$ & $5.7$ & $92.0$ \\
& & GA & $94.4$ & $5.6$ & $91.6$ \\ \cline{2-6}
& \multirow{2}{*}{$1$} & GC & $92.6$ & $7.4$ & $82.6$ \\
& & GA & $92.8$ & $7.2$ & $82.8$ \\ \cline{2-6}
& \multirow{2}{*}{$2$} & GC & $92.3$ & $7.7$ & $86.2$ \\
& & GA & $92.7$ & $7.3$ & $85.2$ \\ \cline{2-6}
& \multirow{2}{*}{$3$} & GC & $93.9$ & $6.1$ & $80.0$ \\
& & GA & $93.8$ & $6.2$ & $78.4$ \\ \cline{2-6}
& \multirow{2}{*}{$4$} & GC & $94.7$ & $5.3$ & $85.9$ \\
& & GA & $94.9$ & $5.1$ & $84.9$ \\ \cline{2-6}
& \multirow{2}{*}{$5$} & GC & $96.2$ & $3.8$ & $86.5$ \\
& & GA & $96.1$ & $3.9$ & $85.7$ \\
\hline
\end{tabular}
\label{a-table:2}
\end{table}

\section{Experiments on real data without edge features}

In the real experiments in the main paper we split the features of the dataset into half. The first half is used for node features and the second half is used for edge features. This leaves a small gap compared to the main setting in the original paper~\cite{Velickovic2018GraphAN} and also in our Preliminaries section in the main paper. Here we re-do the same experiments but without splitting the features and using the original setting in~\cite{Velickovic2018GraphAN}. The results are given in~\cref{a-table:3} and~\cref{a-table:4}. The results and the conclusion are similar in this setting as well.

\begin{table}[ht!]
\caption{Percentages of intra- and inter-mass (intra-m and inter-m respectively) allocation for graph attention (GA) and graph convolution (GC), and test accuracy. We illustrate results for class $0$ and $1$ of each dataset, the rest are shown in the appendix. The first two rows of each dataset correspond to class $0$ and the latter to correspond to class $1$.} \vspace{0.1cm}
\centering
 \begin{tabular}{||c | c c c c c ||} 
 \hline
 data & class & method & intra-m & inter-m  & acc.\\ [0.5ex] 
 \hline\hline
\multirow{20}{*}{\rotatebox[origin=c]{90}{computers}} & \multirow{2}{*}{$0$} & GC & $98.7$ & $1.3$ & $96.8$ \\
& & GA & $98.5$ & $1.5$ & $96.9$ \\ \cline{2-6}
& \multirow{2}{*}{$1$} & GC & $93.6$ & $6.4$ & $91.7$ \\
& & GA & $94.5$ & $5.5$ & $91.7$ \\ \cline{2-6}
& \multirow{2}{*}{$2$} & GC & $98.1$ & $1.9$ & $97.5$ \\
& & GA & $98.1$ & $1.9$ & $97.1$ \\ \cline{2-6}
& \multirow{2}{*}{$3$} & GC & $97.5$ & $2.5$ & $96.0$ \\
& & GA & $97.1$ & $2.9$ & $96.0$ \\ \cline{2-6}
& \multirow{2}{*}{$4$} & GC & $89.4$ & $10.6$ & $90.3$ \\
& & GA & $90.7$ & $9.3$ & $86.5$ \\ \cline{2-6}
& \multirow{2}{*}{$5$} & GC & $99.6$ & $0.4$ & $98.7$ \\
& & GA & $99.6$ & $0.4$ & $98.6$ \\ \cline{2-6}
& \multirow{2}{*}{$6$} & GC & $97.0$ & $3.0$ & $96.4$ \\
& & GA & $97.2$ & $2.8$ & $96.4$ \\ \cline{2-6}
& \multirow{2}{*}{$7$} & GC & $99.1$ & $0.9$ & $97.7$ \\
& & GA & $98.8$ & $1.2$ & $97.1$ \\ \cline{2-6}
& \multirow{2}{*}{$8$} & GC & $91.8$ & $8.2$ & $91.5$ \\
& & GA & $93.2$ & $6.8$ & $90.7$ \\ \cline{2-6}
& \multirow{2}{*}{$9$} & GC & $99.3$ & $0.7$ & $98.2$ \\
& & GA & $99.2$ & $0.8$ & $98.3$ \\
\hline \hline
\multirow{16}{*}{\rotatebox[origin=c]{90}{photo}} & \multirow{2}{*}{$0$} & GC & $99.0$ & $1.0$ & $95.3$ \\
& & GA & $98.9$ & $1.1$ & $95.6$ \\ \cline{2-6}
& \multirow{2}{*}{$1$} & GC & $94.9$ & $5.1$ & $95.6$ \\
& & GA & $95.2$ & $4.8$ & $91.2$ \\ \cline{2-6}
& \multirow{2}{*}{$2$} & GC & $99.4$ & $0.6$ & $96.9$ \\
& & GA & $99.3$ & $0.7$ & $96.9$ \\ \cline{2-6}
& \multirow{2}{*}{$3$} & GC & $93.6$ & $6.4$ & $92.3$ \\
& & GA & $94.1$ & $5.9$ & $91.4$ \\ \cline{2-6}
& \multirow{2}{*}{$4$} & GC & $96.2$ & $3.8$ & $96.1$ \\
& & GA & $96.5$ & $3.5$ & $94.2$ \\ \cline{2-6}
& \multirow{2}{*}{$5$} & GC & $99.7$ & $0.3$ & $97.5$ \\
& & GA & $99.7$ & $0.3$ & $98.3$ \\ \cline{2-6}
& \multirow{2}{*}{$6$} & GC & $94.3$ & $5.7$ & $95.5$ \\
& & GA & $95.2$ & $4.8$ & $91.1$ \\ \cline{2-6}
& \multirow{2}{*}{$7$} & GC & $95.7$ & $4.3$ & $95.7$ \\
& & GA & $96.1$ & $3.9$ & $95.6$ \\
\hline
\end{tabular}
\label{a-table:3}
\end{table}

\begin{table}[ht!]
\caption{Percentages of intra- and inter-mass (intra-m and inter-m respectively) allocation for graph attention (GA) and graph convolution (GC), and test accuracy. We illustrate results for class $0$ and $1$ of each dataset, the rest are shown in the appendix. The first two rows of each dataset correspond to class $0$ and the latter to correspond to class $1$.} \vspace{0.1cm}
\centering
 \begin{tabular}{||c | c c c c c ||} 
 \hline
 data & class & method & intra-m & inter-m  & acc.\\ [0.5ex] 
 \hline\hline
\multirow{14}{*}{\rotatebox[origin=c]{90}{Cora}} & \multirow{2}{*}{$0$} & GC & $94.2$ & $5.8$ & $90.3$ \\
& & GA & $94.7$ & $5.3$ & $89.8$ \\ \cline{2-6}
& \multirow{2}{*}{$1$} & GC & $97.3$ & $2.7$ & $94.2$ \\
& & GA & $97.7$ & $2.3$ & $94.2$ \\ \cline{2-6}
& \multirow{2}{*}{$2$} & GC & $97.9$ & $2.1$ & $90.3$ \\
& & GA & $98.1$ & $1.9$ & $91.1$ \\ \cline{2-6}
& \multirow{2}{*}{$3$} & GC & $93.6$ & $6.4$ & $74.9$ \\
& & GA & $94.1$ & $5.9$ & $79.7$ \\ \cline{2-6}
& \multirow{2}{*}{$4$} & GC & $96.3$ & $3.7$ & $89.4$ \\
& & GA & $96.6$ & $3.4$ & $90.5$ \\ \cline{2-6}
& \multirow{2}{*}{$5$} & GC & $96.8$ & $3.2$ & $93.0$ \\
& & GA & $96.7$ & $3.3$ & $92.7$ \\ \cline{2-6}
& \multirow{2}{*}{$6$} & GC & $98.1$ & $1.9$ & $95.2$ \\
& & GA & $98.4$ & $1.6$ & $95.4$ \\
\hline \hline
\multirow{6}{*}{\rotatebox[origin=c]{90}{PubMed}} & \multirow{2}{*}{$0$} & GC & $92.2$ & $7.8$ & $84.3$ \\
& & GA & $92.0$ & $8.0$ & $85.7$ \\ \cline{2-6}
& \multirow{2}{*}{$1$} & GC & $91.4$ & $8.6$ & $59.1$ \\
& & GA & $90.8$ & $9.2$ & $64.1$ \\ \cline{2-6}
& \multirow{2}{*}{$2$} & GC & $89.4$ & $10.6$ & $61.5$ \\
& & GA & $89.0$ & $11.0$ & $61.6$ \\
\hline \hline
\multirow{12}{*}{\rotatebox[origin=c]{90}{CiteSeer}} & \multirow{2}{*}{$0$} & GC & $94.3$ & $5.7$ & $92.7$ \\
& & GA & $94.3$ & $5.7$ & $92.3$ \\ \cline{2-6}
& \multirow{2}{*}{$1$} & GC & $92.6$ & $7.4$ & $83.2$ \\
& & GA & $93.1$ & $6.9$ & $83.8$ \\ \cline{2-6}
& \multirow{2}{*}{$2$} & GC & $92.3$ & $7.7$ & $85.7$ \\
& & GA & $92.7$ & $7.3$ & $85.7$ \\ \cline{2-6}
& \multirow{2}{*}{$3$} & GC & $93.9$ & $6.1$ & $79.6$ \\
& & GA & $93.9$ & $6.1$ & $79.8$ \\ \cline{2-6}
& \multirow{2}{*}{$4$} & GC & $94.7$ & $5.3$ & $87.5$ \\
& & GA & $94.6$ & $5.4$ & $87.3$ \\ \cline{2-6}
& \multirow{2}{*}{$5$} & GC & $96.2$ & $3.8$ & $87.6$ \\
& & GA & $96.3$ & $3.7$ & $88.2$ \\
\hline
\end{tabular}
\label{a-table:4}
\end{table}





\end{document}